\documentclass{colt2026_arxiv}

\title{Optimal Reconstruction from Linear Queries}
\usepackage{times}
\usepackage{mdframed}
\usepackage{xcolor}
\usepackage{cleveref}
\usepackage{enumitem}

\coltauthor{%
 \Name{Yuval Filmus} \Email{yuvalfi@cs.technion.ac.il}\\
 \addr Technion -- Israel Institute of Technology
 \AND
 \Name{Shay Moran} \Email{smoran@technion.ac.il} \\
 \addr Technion -- Israel Institute of Technology\\
 \addr Google Research
 \AND
 \Name{Elizaveta Nesterova} \Email{elizavetan@campus.technion.ac.il}\\
 \addr Technion -- Israel Institute of Technology
}

\newtheorem{maintheorem}[theorem]{Main Theorem}

\newenvironment{restatedtheorem}[1]
  {\par\noindent\textbf{Main Theorem~{#1}~restatement.}\itshape}
  {\par}

\newenvironment{rtheorem}[1]
  {\par\noindent\textbf{Theorem~{#1}~restatement.}\itshape}
  {\par}

\newtheorem*{theorem*}{Theorem}
\newtheorem*{lemma*}{Lemma}
\newtheorem*{notation*}{Notation}
\newtheorem*{proposition*}{Proposition}
\newtheorem*{corollary*}{Corollary}
\newtheorem*{claim*}{Claim}
\newtheorem*{definition*}{Definition}
\newtheorem*{assumption*}{Assumption}
\newtheorem*{question*}{Question}
\newtheorem*{remark*}{Remark}
\newtheorem*{example*}{Example}

\newcommand{\R}{\mathbb{R}}
\newcommand{\NN}{\mathbb{N}}
\newcommand{\ZZ}{\mathbb{Z}}
\newcommand{\cX}{\mathcal{X}}
\newcommand{\cY}{\mathcal{Y}}
\newcommand{\cR}{\mathcal{R}}

\newcommand{\cA}{\mathcal{A}}
\newcommand{\cN}{\mathcal{N}}
\newcommand{\cV}{\mathcal{V}}

\newcommand{\dist}{\mathrm{dist}}

\newcommand{\OPT}{\mathrm{OPT}}

\newcommand{\improper}{\mathrm{improper}}

\newcommand{\Witness}{\mathrm{Witness}}
\newcommand{\rad}{\mathtt{rad}}

\newcommand{\tnew}{{t_\mathrm{new}}}

\newcommand{\Excess}{\mathsf{ExcessErr}}
\newcommand{\ADV}{\mathsf{ADV}}
\newcommand{\RC}{\mathsf{RC}}

\newcommand{\Jung}{\mathrm{Jung}}
\newcommand{\diam}{\mathrm{diam}}
\newcommand{\vvv}{\overrightarrow}

\begin{document}

\maketitle

\begin{abstract}
We study the problem of reconstructing an unknown point in $\mathbb{R}^d$ from approximate linear queries. This setting arises naturally in applications ranging from low-dimensional remote sensing and signal recovery to high-dimensional data analysis and privacy-sensitive inference. Our main goal is to characterize the optimal \emph{reconstruction error} as a function of the number of queries $T$, the ambient dimension $d$, and the noise parameter $\delta$. 

We first analyze the limit $T \to \infty$ and show that the optimal reconstruction error converges to the explicit value $\sqrt{2d/(d+1)}\,\delta$, which plays a role analogous to the Bayes optimal error in supervised learning. 
When the dimension is fixed, we show that the excess error above this limit decays \emph{doubly exponentially} fast as $T \to \infty$, a rate that is significantly faster than those typically encountered in learning curves. 
When the dimension grows, we show that a number of queries on the order of $\exp(d)$ is necessary and sufficient to achieve vanishing excess error. 
Finally, we introduce and analyze an improper variant of the reconstruction problem.

From a technical perspective, our main contribution is a generalization of Jung’s theorem (1901). The classical theorem bounds the maximum possible radius of a set of diameter 1 and characterizes extremal bodies. Our generalization provides a robust variant that characterizes near-extremal bodies and is proved via geometric and dynamical arguments exploiting symmetry and Lie group actions.
\end{abstract}

\section{Introduction}

In this work we study the following question:
\begin{quote}
\emph{How accurately can an unknown point \(x^* \in \R^d\) be reconstructed using linear measurements?}
\end{quote}
We formalize this question via a  \emph{linear reconstruction game}. In this game, an adversary holds a secret point $x^* \in \R^d$, and at each round:
\begin{enumerate}[topsep=3pt,itemsep=1pt,parsep=0pt]
    \item The reconstructor submits a linear query in the form of a unit vector 
    \(v \in \R^d\), \(\|v\|_2 = 1\).\footnote{Equivalently, we can allow arbitrary vectors $v$, and require the error to be at most $\delta \|v\|_2$. Without this normalization, the reconstructor can cheat by scaling $v$ by a large amount, effectively reducing the error.} 
    \item The adversary replies with an adversarially chosen value $r$ satisfying
    \[
     |r - \langle v, x^* \rangle| \le \delta,
    \]
    where $\delta>0$ is a fixed additive noise parameter, and $\langle v, x \rangle$ is the standard inner product.
\end{enumerate}

\noindent The reconstructor can choose its queries adaptively, based on the answers to previous queries. After a given number of rounds $T$, it outputs an estimate \(\hat x\), aiming to minimize the \(\ell_2\)-distance \(\|\hat x -  x^*\|_2\).

The linear reconstruction game captures phenomena that arise naturally in both low- and high-dimensional settings.
In low-dimensional geometric problems, it models adaptive sensing and signal acquisition, where measurement directions are chosen sequentially in order to localize an unknown signal or object in space.
Related formulations also arise in data analysis and privacy --- often, but not exclusively, in high dimensions --- where linear queries are used to probe a dataset, and noise is added to limit reconstruction of sensitive information.

A discrete variant of this problem, in which both the hidden point and the queries are restricted to the Boolean cube $\{0,1\}^n$, was introduced by \citet{DinurN03}, and played a foundational role in the development of differential privacy.
Subsequent work in private and adaptive data analysis, as well as in the statistical query model, has studied this setting extensively, primarily from the perspective of preventing reconstruction
\citep{DBLP:conf/stoc/BassilyNSSSU16, DBLP:conf/nips/DworkFHPRR15, DBLP:conf/stoc/DworkFHPRR15, Reyzin2020StatisticalQA}.
Related questions also appear in the adaptive sensing literature \citep{AriasCastroCD13, HauptCastroNowak09}, though most of that work assumes stochastic noise, whereas we consider adversarial noise.

We study the optimal minimax reconstruction error against an \emph{arbitrary} adversary:
\begin{mdframed}
\begin{center}
\textbf{Main Question}
\end{center}
\noindent
What is the optimal reconstruction error that can be guaranteed against an arbitrary (worst-case) adversary, as a function of the error parameter $\delta$, the number of rounds $T$, and the dimension~$d$?
\end{mdframed}

From a learning-theoretic perspective, this question can be viewed as a natural instance of interactive learning, in which a learner adaptively queries an oracle and seeks to reconstruct an unknown target from noisy feedback.
Here, the secret point $x^*$ plays the role of the concept to be learned.

\section{Main results}
In this section, we outline the key definitions and results. More detailed definitions and statements are deferred to Section~\ref{sec:formal-definitions}.

The \emph{optimal reconstruction error} is the smallest worst-case loss achievable by a reconstructor, i.e., the smallest worst-case loss that the “best” reconstructor can achieve.  We denote the optimal reconstruction error for noise level $\delta > 0$, dimension $d$, and $T$ interaction rounds by $\OPT_d(T, \delta)$.
Our first main theorem identifies the reconstruction error achievable (in the limit) given an unlimited number of rounds.

\begin{mdframed}[
  linewidth=0.6pt,
  roundcorner=0pt,
  backgroundcolor=gray!0,
  nobreak=true
]
\begin{maintheorem}[Asymptotic optimal error]
\label{thm:asymptotic}

\[
 \OPT_d(\infty, \delta) := \lim_{T \to \infty} \OPT_d(T, \delta) = \sqrt{\frac{2d}{d+1}} \delta.
\]
\end{maintheorem}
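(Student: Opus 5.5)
The plan is to reduce the game to a geometric question about the ``consistent set'' and then invoke Jung's theorem in both directions. After $T$ rounds with queries $v_1,\dots,v_T$ and answers $r_1,\dots,r_T$, the set of points consistent with the answers is the polytope $K=\{x:|\langle v_i,x\rangle-r_i|\le\delta\ \forall i\}$. The adversary may answer so that any point of $K$ is the secret, so the optimal output is the Chebyshev center of $K$, and the loss equals the circumradius $\rad(K)$. Thus $\OPT_d(T,\delta)$ is the value of a game in which the reconstructor chooses directions and the adversary chooses slabs of width $2\delta$ (all containing a common point), and the payoff is the circumradius of the intersection. Jung's theorem states that any set of diameter $D$ in $\R^d$ has circumradius at most $D\sqrt{d/(2(d+1))}$. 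With $D=2\delta$ this gives exactly $\sqrt{2d/(d+1)}\,\delta$.

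\textbf{Upper bound.} I will show that for any $\epsilon>0$ there is a finite $\epsilon$-net $N$ of unit directions such that querying all of $N$ (non-adaptively) forces $K$ to have width at most $2\delta(1+O(\epsilon))$ in \emph{every} direction. Every direction $u$ lies within $\epsilon$ of some $v\in N$, and we first intersect with a bounding box (query the coordinate directions), so that $K$ is bounded with diameter $O(\sqrt d\,\delta)$. Then the width of $K$ in direction $u$ is at most $2\delta+\epsilon\cdot\diam(K)$. Since $\diam(K)=\max_u \mathrm{width}_u(K)$, this gives $\diam(K)\le 2\delta/(1-\epsilon)$. Applying Jung's theorem to $K$ then gives $\rad(K)\le\sqrt{2d/(d+1)}\,\delta/(1-\epsilon)$, and letting $T=|N|\to\infty$ yields $\limsup_T\OPT_d(T,\delta)\le\sqrt{2d/(d+1)}\,\delta$.

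\textbf{Lower bound.} I need an adversary strategy such that, for every $T$ and every adaptive query sequence, $K$ always contains a set of circumradius at least $\sqrt{2d/(d+1)}\,\delta$. The natural candidate is the extremal body of Jung's theorem: a regular simplex $S$ of diameter $2\delta$, or better a Reuleaux-type body of constant width $2\delta$ containing $S$. The adversary fixes such a body $B$ and answers each query $v$ with $r$ equal to the midpoint of $\{\langle v,x\rangle:x\in B\}$. Because $B$ has width at most $2\delta$ in every direction, the whole of $B$ stays consistent. In fact the simplex $S$ suffices, since its width in any direction is at most its diameter $2\delta$. So $K\supseteq S$, and $\rad(K)\ge\rad(S)=2\delta\sqrt{d/(2(d+1))}$. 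This holds for every $T$, and combined with monotonicity of $\OPT_d(T,\delta)$ in $T$ it shows that the limit exists and equals the claimed value.

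The main obstacle is the rigor of the first reduction: showing that the minimax value equals the worst-case circumradius of the consistent set under adaptive play, which includes handling unbounded $K$ at early rounds and the existence of the limit. I expect this to be routine once the formal definitions of Section~\ref{sec:formal-definitions} are in place. The substantive content is the approximation argument in the upper bound, in particular controlling $\diam(K)$ through the widths along the net directions. I would do this by bootstrapping from the bounded-box estimate as above.
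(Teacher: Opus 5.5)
Your proposal is correct and follows essentially the same route as the paper. Both reduce the loss to the Chebyshev radius of the feasible region, query a fine net of directions and apply Jung's theorem for the upper bound, and use the midpoint-answering adversary on a regular simplex of edge $2\delta$ for the lower bound, with existence of the limit from monotonicity. The only minor difference is in the upper bound: the paper bounds $\diam(\Phi_T)$ directly by taking, for $x,y\in\Phi_T$, a net direction $v_i$ within angle $\alpha$ of $(x-y)/\|x-y\|$, which gives $\|x-y\|\cos\alpha\le 2\delta$ without your preliminary bounding box and bootstrapping step.
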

\end{mdframed}

From a learning-theoretic viewpoint, the limiting value \(\OPT_d(\infty,\delta)\) is analogous to the \emph{Bayes-optimal} error: it is the best performance achievable under the interaction and noise constraints of the model.
This motivates studying the \emph{excess reconstruction error},
\[
\Excess_d(T,\delta)
\;:=\;
\OPT_d(T,\delta) - \OPT_d(\infty,\delta).
\]
Our second main theorem, which is also our main technical contribution, shows
that \(\Excess_d(T,\delta)\) decays at a doubly exponential rate in \(T\) when \(d \ge 2\) (when \(d = 1\), there is only one unit vector $v = 1$ up to sign, and so one round suffices).

\begin{mdframed}[
  linewidth=0.6pt,
  roundcorner=0pt,
  backgroundcolor=gray!0,
  nobreak=true
]
\begin{maintheorem}[Doubly-exponential decay of excess error]
\label{thm:excess-bounds}

For every dimension $d\ge 2$ there exists \(T'_d\) such that for every $T\ge T'_d$:
\[
  \Excess_d(T, \delta) = 
  2^{-2^{\Theta_d(T)}} \delta.
\]
\end{maintheorem}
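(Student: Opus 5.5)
We will prove the two directions separately, working throughout with the set of points consistent with the answers so far. After queries $v_1,\dots,v_t$ with answers $r_1,\dots,r_t$, this consistent set is
\[
K_t=\{x:|\langle v_i,x\rangle-r_i|\le\delta \text{ for all } i\le t\}.
\]
The optimal final estimate is the Chebyshev center of $K_T$, so the error equals the circumradius $\rad(K_T)$. The quantity $\OPT_d(\infty,\delta)=\sqrt{2d/(d+1)}\,\delta$ is Jung's bound for a set of width at most $2\delta$ in every direction that has been queried. A set of diameter $2\delta$ has radius at most $\sqrt{2d/(d+1)}\,\delta$, with equality exactly when the set contains a regular simplex of edge $2\delta$.

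We therefore reduce both bounds to a \emph{robust Jung theorem}. Suppose $K$ has radius $\sqrt{2d/(d+1)}\,\delta+\varepsilon$ and has width at most $2\delta$ in a set of directions $V$. Then $K$ contains a near-regular simplex, up to error $O_d(\varepsilon)$ or $O_d(\sqrt\varepsilon)$, and the directions of its edges $e_{ij}$ must lie close to nothing in $V$: any query direction within angle $\theta$ of an edge direction caps that edge length. The key quantitative lemma will say that if an edge direction $u$ is queried at angle $\theta$, the radius drops to about $\sqrt{2d/(d+1)}\,\delta+c\,\theta^2$, or to some polynomial in $\theta$.

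\textbf{Upper bound ($2^{-2^{cT}}$).} The strategy is a Newton-like refinement. We first spend $T_d'$ queries on a fixed fine net of directions, which brings the excess below a small constant $\varepsilon_0$. By the robust Jung theorem, the current body $K_t$ is then $\varepsilon_t$-close to a regular simplex whose edge directions $u_{ij}$ we can estimate to accuracy $O(\varepsilon_t^{\alpha})$. The next $\binom{d+1}{2}$ queries go along these estimated edge directions. The key lemma then gives
\[
\varepsilon_{t+O_d(1)}\le C_d\,\varepsilon_t^{\beta}\quad\text{with }\beta>1,
\]
for instance $\beta=2$ if the excess depends quadratically on the angular error while our angular estimate is linear in $\varepsilon_t$. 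Iterating this recursion gives $\varepsilon_T\le 2^{-2^{\Omega_d(T)}}$. We will need to check that $\beta>1$ genuinely holds, which requires that the estimation exponent times the quadratic gain exceeds $1$.

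\textbf{Lower bound.} The adversary keeps a candidate regular simplex $S$ of edge $2\delta(1-\eta)$ whose every projection lies inside the answered intervals. It answers each query so that the answer is centered on the projection of $S$; this is possible because every width of $S$ is less than $2\delta$. When a query direction comes within $\theta$ of an edge of $S$, the adversary rotates $S$ by a small rotation in $SO(d)$ (this is the Lie group action) so that its edges move away from all queried directions. The cost is a shrinkage $\eta\mapsto \eta+O(\text{rotation}^2)$, and a rotation of size $\theta$ is needed only if the queried direction is $\theta$-close to an edge. Pigeonhole over the finitely many edges, together with the constraint from the earlier intervals, should show that each query can square the slack at most: $\eta_{t+1}\le C\eta_t^{1/2}$ in reverse, i.e. $\eta_T\ge 2^{-2^{O_d(T)}}$. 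The radius of the surviving simplex then gives the lower bound on the error.

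\textbf{Main obstacle.} The hardest step is the robust Jung theorem with sharp exponents, namely stability of the extremal simplex under the width constraints, holding uniformly. We would attempt it via Jung's proof through Helly and the John-type contact points: an approximate center has $d+1$ nearly extremal contact points, and we would linearize around the regular simplex using its symmetry group. Matching the exponents in the upper and lower bounds is what makes the result $\Theta_d(T)$ in the double exponent. Only constants, not exponents, need to match, because constant factors in the doubling rate are absorbed into $\Theta_d$.
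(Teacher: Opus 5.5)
Both halves of your proposal have a genuine gap. The lower bound as written cannot show any positive excess, and the upper bound is missing the paper's central ingredient.

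\textbf{Lower bound: the invariant goes the wrong way.} You keep a regular simplex of edge $2\delta(1-\eta)<2\delta$.
\begin{itemize}
\item Its circumradius is $\sqrt{2d/(d+1)}\,\delta\,(1-\eta)$, which lies \emph{below} $\OPT_d(\infty,\delta)$. Keeping it feasible therefore certifies no excess error at all, and no recursion for $\eta$ can change that.
\item Every width of this simplex is below $2\delta$, so the midpoint answer keeps it feasible forever. The rotation and pigeonhole steps do no work.
\end{itemize}
A set that certifies excess must have radius strictly above $\Jung_d\cdot 2\delta$, hence width exceeding $2\delta$ in some directions. The paper (with $\delta=\tfrac12$) maintains $\Delta_t+B(\alpha_t)\subseteq\Phi_t$ with $\Delta_t$ of edge exactly $1$, which gives $\rad(\Phi_t)\ge\Jung_d+\alpha_t$.
\begin{itemize}
\item If $\max_{i,j}\langle v,x_i-x_j\rangle\le 1-2\alpha$, this fattened simplex fits in the strip for $v$ under the midpoint answer.
\item Otherwise the adversary rotates $\Delta$ by angle $\alpha/2$ in the plane of $v$ and the aligned edge, and shrinks the ball to radius $\alpha^2/17$. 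One checks $\Delta'+B(\alpha^2/17)\subseteq\Delta+B(\alpha)$.
\end{itemize}
This nesting automatically preserves all earlier constraints, so no pigeonhole over past queries is needed. The recursion $\alpha_{t+1}=\alpha_t^2/17$ then gives $2^{-2^{O_d(T)}}$.

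\textbf{Upper bound: the key step is missing.}
\begin{itemize}
\item \emph{Existence of one simplex is not enough.} Your robust Jung statement only says $K$ contains \emph{some} near-regular simplex. Note that $K_t$ itself is not close to a simplex; think of a Reuleaux-type body. Querying the edges of one near-regular simplex does nothing to a different near-extremal $(d+1)$-tuple elsewhere in $K_t$.
\item \emph{What the refinement needs is clustering at a linear rate.} All witnesses must lie near one common simplex: $\Witness(K_t)\subseteq\Delta+B(C_d\beta)$, where $1+\beta$ bounds the diameter of the witness core. The paper gets this from rigidity of the equality case: two unit regular simplices whose union has diameter $1$ coincide, via the tight-frame identity. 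This is upgraded to $\dist_H(\Delta,\Delta')\le(d+1)d^2\,(\diam(\Delta\cup\Delta')-1)$ by a compactness argument plus a first-order expansion in $\mathfrak e(d)$.
\item \emph{The recursion must track the wrong quantity in your version.} It has to run on the localization radius $\gamma_t$, not on the radius excess, because a small radius excess does not locate the witnesses. For example, an acute triangle with sides $1+\beta,1+\beta,1-O(\beta)$ has circumradius exactly $\Jung_2$, yet is $\Theta(\beta)$ from any regular triangle.
\item \emph{The paper's step.} Querying the $\binom{d+1}{2}$ edge directions of $\Delta_t$ forces $\diam(\Witness_{t+1})\le 1+4\gamma_t^2$. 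The reason is that segments from $B(x_i,\gamma_t)$ to $B(x_j,\gamma_t)$ are tilted by $O(\gamma_t)$ from $x_i-x_j$, so the length loss is a cosine. Linear robust Jung then gives $\gamma_{t+1}=4C_d\gamma_t^2$.
\end{itemize}
With an $O_d(\sqrt{\cdot})$ stability rate, which you left open, the same step yields only $\gamma_{t+1}=O_d(\gamma_t)$, i.e.\ no squaring. So the exponent $\beta>1$ you deferred as something to check is exactly the paper's main technical theorem, not a detail.
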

\end{mdframed}

The linear dependence on \(\delta\) in Main Theorems~\ref{thm:asymptotic} and \ref{thm:excess-bounds} is not accidental: in fact, the reconstruction game is invariant under rescaling \(\delta\). We elaborate on this further in the Technical Overview and prove it in Section~\ref{sec:formal-definitions}.

The proof of Main Theorem~\ref{thm:excess-bounds} relies on a reconstruction algorithm with a natural two-phase structure.
In the first phase, which can be viewed as a preprocessing step, the reconstructor queries a sufficiently dense (but oblivious) set of directions in order to obtain a coarse geometric localization of the hidden point.
In the second phase, the reconstructor exploits this localization to adaptively refine its estimate, leading to the doubly-exponential decay of the excess error.
This rapid refinement crucially relies on our robust Jung theorem: a robust variant of Jung’s theorem, which identifies a small set of particularly informative directions whose queries sharply reduce the remaining uncertainty.
We elaborate on this robust variant of Jung’s theorem and explain how it is leveraged algorithmically in the proof overview in the next section.

The cost of the first, preprocessing phase grows rapidly with the dimension $d$, as it requires querying a dense net of directions on the unit sphere.

This naturally raises the question of whether such a dependence on the dimension is inherent.
Our final main theorem answers this question in the affirmative and identifies $T=\exp(d)$ as a threshold for achieving vanishing excess error, in the following sense:

\begin{mdframed}[
  linewidth=0.6pt,
  roundcorner=0pt,
  backgroundcolor=gray!0,
  nobreak=true
]
\begin{maintheorem}[Dimension-dependent query budgets]
\label{thm:dim-bounds}
Let \(T : \NN \to \NN\) be a query budget as a function of the dimension \(d\).

\begin{enumerate}
  \item If \(T(d) = 2^{o(d)}\) is subexponential in \(d\) then
  \[
    \Excess_d(T(d),\delta) \xrightarrow{d \to \infty} \infty.
  \]
  \item If \(T(d) = 2^{\omega(d)}\) is superexponential in \(d\) then
  \[
    \Excess_d(T(d),\delta) \xrightarrow{d \to \infty} 0.
  \]
\end{enumerate}
\end{maintheorem}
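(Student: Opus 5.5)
The two parts use opposite strategies: a passive adversary for the lower bound, and an oblivious net followed by Jung's theorem for the upper bound. Throughout, I would use Main Theorem~\ref{thm:asymptotic} to replace $\OPT_d(\infty,\delta)$ by $\sqrt{2d/(d+1)}\,\delta\le\sqrt2\,\delta$. With that substitution, part 1 amounts to showing $\OPT_d(T(d),\delta)\to\infty$, and part 2 to showing $\OPT_d(T(d),\delta)\le(1+o(1))\sqrt{2d/(d+1)}\,\delta$.

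\emph{Part 1 (lower bound).} I would let the adversary answer $r=0$ to every query. Whatever (adaptive) queries $v_1,\dots,v_T$ the reconstructor makes, the answers are consistent with every $x$ in the symmetric body $K=\{x:|\langle v_i,x\rangle|\le\delta\ \forall i\}$. If $K$ contains a point $x$ with $\|x\|_2=R$, then $-x\in K$ as well. Any output $\hat x$ is at distance at least $R$ from one of $x,-x$, so the error is at least $R$.

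To find such a point, I would take $u$ uniform on the sphere $S^{d-1}$ and use the standard concentration bound $\Pr[|\langle v,u\rangle|>t]\le 2e^{-dt^2/2}$ for each fixed unit vector $v$. The queries are fixed once the answers are fixed to $0$, so a union bound over the $T$ queries applies. It shows that some $u$ satisfies $|\langle v_i,u\rangle|\le\delta/R$ for all $i$ whenever $2T e^{-d\delta^2/(2R^2)}<1$. This holds if $R^2<d\delta^2/(2\ln(2T))$. Since $\ln T(d)=o(d)$, we can take $R=R(d)\to\infty$, and then $Ru\in K$ gives the claim.

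\emph{Part 2 (upper bound).} The reconstructor queries an $\varepsilon$-net $N$ of $S^{d-1}$ of size at most $(3/\varepsilon)^d$. If $T(d)>|N|$, it repeats queries; this can only help, since $\OPT_d$ is nonincreasing in $T$. It then receives answers $r_v$ and outputs the center of the minimal enclosing ball of the consistent set $K=\{x:|\langle v,x\rangle-r_v|\le\delta\ \forall v\in N\}$. The key estimate is a diameter bound on $K$.
\begin{itemize}
\item For $x,y\in K$ with $z=x-y\neq0$, every $v\in N$ gives $|\langle v,z\rangle|\le 2\delta$.
\item Choosing $v\in N$ with $\|v-z/\|z\|\|\le\varepsilon$ gives $\langle v,z\rangle\ge\|z\|(1-\varepsilon^2/2)$.
\item Hence $\diam K\le 2\delta/(1-\varepsilon^2/2)$.
\end{itemize}
Jung's theorem then bounds the circumradius of $K$, and thus the error, by $\sqrt{2d/(d+1)}\,\delta/(1-\varepsilon^2/2)$. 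This gives an excess of $O(\varepsilon^2)\delta$. Finally, I would choose $\varepsilon=3T^{-1/d}$, so that $(3/\varepsilon)^d\le T$. Then $\log(1/\varepsilon)=\log T/d-O(1)=\omega(1)$, so $\varepsilon\to0$ and the excess tends to $0$.

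\emph{Expected obstacle.} Both parts are fairly routine. The step needing the most care is the union bound in part 1. The reconstructor is adaptive, but since the adversary's answers are fixed in advance, the query sequence is deterministic, so the union bound is legitimate. One must also make sure the constants in the spherical concentration bound are dimension-correct.
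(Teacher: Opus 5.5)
Your proposal is correct and follows the paper's plan. For part~1 you use the adversary that always answers $0$, a random point, and a union bound over the query sequence, which is deterministic once the answers are fixed. For part~2 you query an oblivious net of directions, bound $\diam(\Phi_T)\le 2\delta/(1-\varepsilon^2/2)$, and apply Jung's theorem.

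The differences are minor and in your favor. The paper rescales to $\delta=2\sqrt{\ln T}$ and uses two independent Gaussians plus a $\chi^2$ lower-tail bound to find two far-apart feasible points. Your single point $Ru$, together with the central symmetry of the all-zero feasible region, gives $\rad(\Phi_T)\ge R$ directly, with no anti-concentration step and no need for $T\ge 3$. In part~2, your choice $\varepsilon=3T^{-1/d}$ genuinely satisfies the net-size constraint. The paper's choice $\beta_d=T(d)^{-1/(d-1)}$ needs an extra constant factor (e.g.\ $16$) to be compatible with its covering bound $2^{2d}\alpha^{-(d-1)}$.
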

\end{mdframed}

\paragraph{Improper reconstructors.}
An accurate reconstruction of the point $x^*$ immediately yields accurate answers to future linear queries: indeed, if $\|\hat{x}-x^*\|_2 \le \varepsilon$, then for every unit vector $v$,
\[
\bigl|\langle v,\hat{x}\rangle - \langle v,x^*\rangle\bigr|
= \bigl|\langle v,\hat{x}-x^*\rangle\bigr|
\le \|\hat{x}-x^*\|_2
\le \varepsilon .
\]
This observation motivates the study of \emph{improper reconstructors}, whose goal is to output a function
$\hat{G}\colon S^{d-1} \to \mathbb{R}$ (where $S^{d-1}$, the unit sphere in $\R^d$, consists of all unit vectors)
minimizing
\[
\sup_{v \in S^{d-1}} \bigl|\hat{G}(v) - \langle v,x^*\rangle\bigr|.
\]
We analyze this setting as well; here we briefly summarize the results and highlight the key differences from the proper setting, deferring complete statements and proofs to Section~\ref{sec:improper}.

In the improper setting, the optimal reconstruction error converges, as $T \to \infty$, to the value~$\delta$, which is strictly smaller than the optimal limit in the proper setting, given by $\sqrt{\frac{2d}{d+1}}\,\delta \approx \sqrt{2}\,\delta$.
While the excess error in the proper setting decays doubly exponentially fast in $T$, convergence in the improper setting is only polynomial.
Specifically, for any fixed dimension $d \ge 2$,
\[
\OPT^{\improper}_d(T,\delta)
=
\bigl(1 + \Theta_d(T^{-2/(d-1)})\bigr)\,\delta .
\]
At first glance, it may seem paradoxical that improper reconstruction converges more slowly, since every proper reconstructor induces an improper one via $\hat{G}(v)=\langle v,\hat{x}\rangle$.
The resolution is that improper reconstruction converges to a strictly smaller limiting error, and the slower rate reflects the greater difficulty of approaching this stronger benchmark.

This improvement, however, comes at a cost.
While a proper reconstructor outputs a single point in $\mathbb{R}^d$, an improper reconstructor outputs a function on $S^{d-1}$, which is an infinite object and raises substantial space complexity concerns.
In particular, the improper reconstructor underlying our bounds must retain the entire interaction history in order to answer future queries.

Finally, as in the proper setting, the dependence of the hidden constants on the dimension is exponential; in particular, Main Theorem~\ref{thm:dim-bounds} applies to the improper setting as well.

\paragraph{Comparison with prior work.}
Our work is closely connected to the reconstruction problem studied
in~\cite{moranreconstruction}. Both works consider interactive reconstruction
from approximate answers, but the query models are different. In
\cite{moranreconstruction}, the unknown object is a point \(x^\star\) in a
metric space \((X,d)\), and a reconstruction strategy adaptively queries
points \(q\in X\) and receives approximate answers to the distance queries
\(d(q,x^\star)\). The goal is to output a point as close as possible to
\(x^\star\), in the worst case, given the ambiguity left by the answers. In the
present paper, the unknown object is a vector, the queries are linear
functionals, and the answers are approximate values of these functionals.

At the level of limiting guarantees, our Main
Theorem~\ref{thm:asymptotic} is a direct analogue of Theorem~2
in~\cite{moranreconstruction}: both identify the optimal reconstruction error
in the limit of infinitely many queries. In the Euclidean setting, the two
results yield identical guarantees and rely on the same underlying geometric
invariants, as discussed further in the Technical Overview.

The main difference is quantitative. The work~\cite{moranreconstruction}
shows that the limiting error need not be attainable in finitely many rounds
and leaves open the rate of convergence to the limit. Our main contribution is
to determine this rate for linear queries: we prove matching upper and lower
bounds in fixed dimension (Main Theorem~\ref{thm:excess-bounds}) and
characterize the dependence on the dimension (Main
Theorem~\ref{thm:dim-bounds}). It remains an interesting question whether the
upper-bound techniques developed here can be adapted to the distance-query
framework of~\cite{moranreconstruction}.

\paragraph{Organization.}
Section~\ref{sec:proof-overview} provides a proof overview and highlights the key technical contributions of our work. Section~\ref{sec:formal-definitions} gives the formal model and basic properties of the reconstruction game. The proofs of the main theorems are given in Section~\ref{sec:proofs}, and the results for improper reconstruction are given in Section~\ref{sec:improper}. We conclude with open directions in Section~\ref{sec:open-directions}. The technical tools underlying the proofs are developed in Appendices~\ref{app:Jung_approx}, \ref{app:step_2}, \ref{app:covering-number} and ~\ref{app:rotation_lemmas}.

\section{Technical Overview}
\label{sec:proof-overview}

\noindent
We organize the proof overview as follows. We first collect several technical tools that are useful for explaining the key ideas underlying our proofs. We then outline the proofs of the main theorems, each in a separate subsection.

\paragraph{Feasible region.} 
Assume that on query \(v\), the adversary returns an answer \(r \in \R\).
What information about the secret point is now available to the reconstructor?

\smallskip
\noindent
The reconstructor can conclude only that the secret \(x^*\) lies in \( \{ x \in \R^d : |\langle x, v\rangle - r| \le \delta \}\), which is a strip
orthogonal to \(v\) of width \(2\delta\). Therefore, after observing the whole interaction transcript (i.e., the sequence of queries \(v_i\) and answers~\(r_i\)), the reconstructor only learns that the secret lies in the \emph{intersection} of the corresponding strips. Formally, the secret must belong to the set of all points~\(x~\in~\R^d\) that are
consistent with the transcript up to noise level \(\delta>0\):
\begin{equation}
\label{eq:feasible}
\Phi_T
:= \bigl\{ x \in \R^d \ \big|\  |\langle v_i, x\rangle - r_i|
\le \delta \ \text{ for all } i\in[T] \bigr\}.
\end{equation}
We call \(\Phi_T\) the \emph{feasible region}.

What worst-case error can the reconstructor guarantee after observing the transcript $\{(v_i,r_i)\}_{i\in[T]}$, that is, after identifying the feasible set $\Phi_T$?
Suppose the reconstructor outputs an estimate $\hat x_T$.
In the worst case, the true point $x^*$ may be a point in $\Phi_T$ that maximizes its distance from $\hat x_T$. Consequently, to minimize the worst-case error, the reconstructor should choose an output that minimizes the maximum distance to all points in $\Phi_T$.
This motivates the following definition.

For a set $A \subseteq \mathbb{R}^d$, define its \emph{radius} (also known as the \emph{Chebyshev radius}) by
\[
\rad(A)
\;=\;
\inf_{x \in \mathbb{R}^d}
\;
\sup_{y \in A}
\;
\|x - y\|_2 .
\]
Equivalently, $\rad(A)$ is the radius of the minimal enclosing ball of $A$. See \Cref{fig:proper-feasible} for a visualization.

\begin{remark*}
The feasible region is a central object in this game. It plays a key role throughout all of our theorems: both the reconstructor's algorithm and the adversary's strategy exploit its geometric properties. In particular, we formulate the guarantees and the convergence rates by tracking how the radius of the feasible region evolves with the transcript.
\end{remark*}

\begin{figure}[t]
\begin{minipage}[t]{0.47\textwidth}
\centering
    \includegraphics[width=0.9\textwidth]{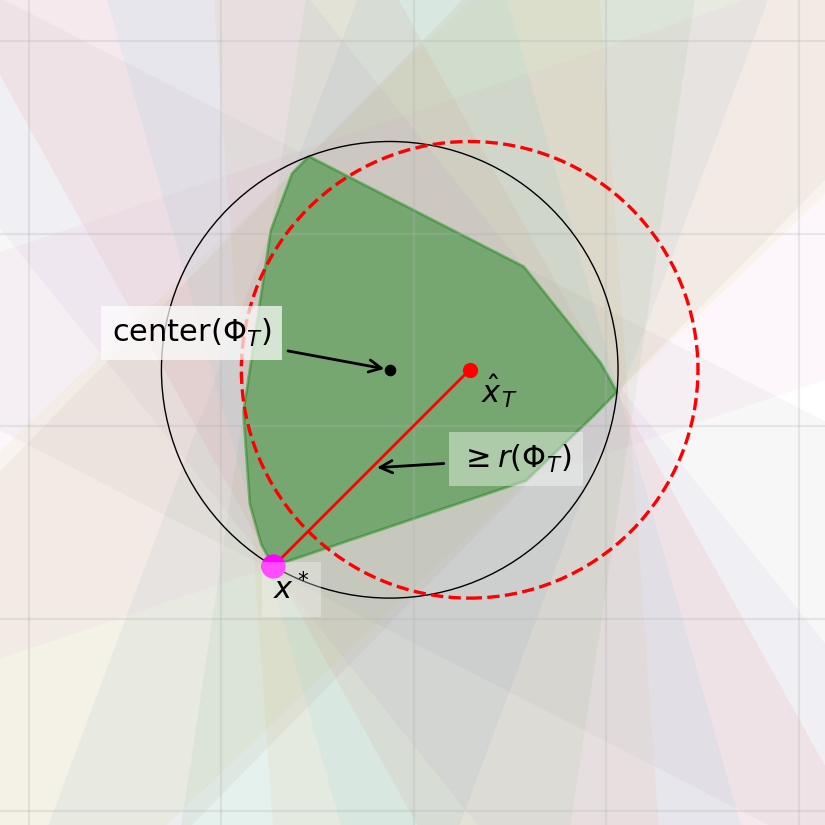}
    \caption{A secret point \(x^*\) and an output \(\hat x_T\) such that \(\|\hat x_T - x^*\| \ge \rad(\Phi_T)\).}
    \label{fig:proper-feasible}
\end{minipage}\hfill
\begin{minipage}[t]{0.47\textwidth}
\centering
    \includegraphics[width=0.9\textwidth]{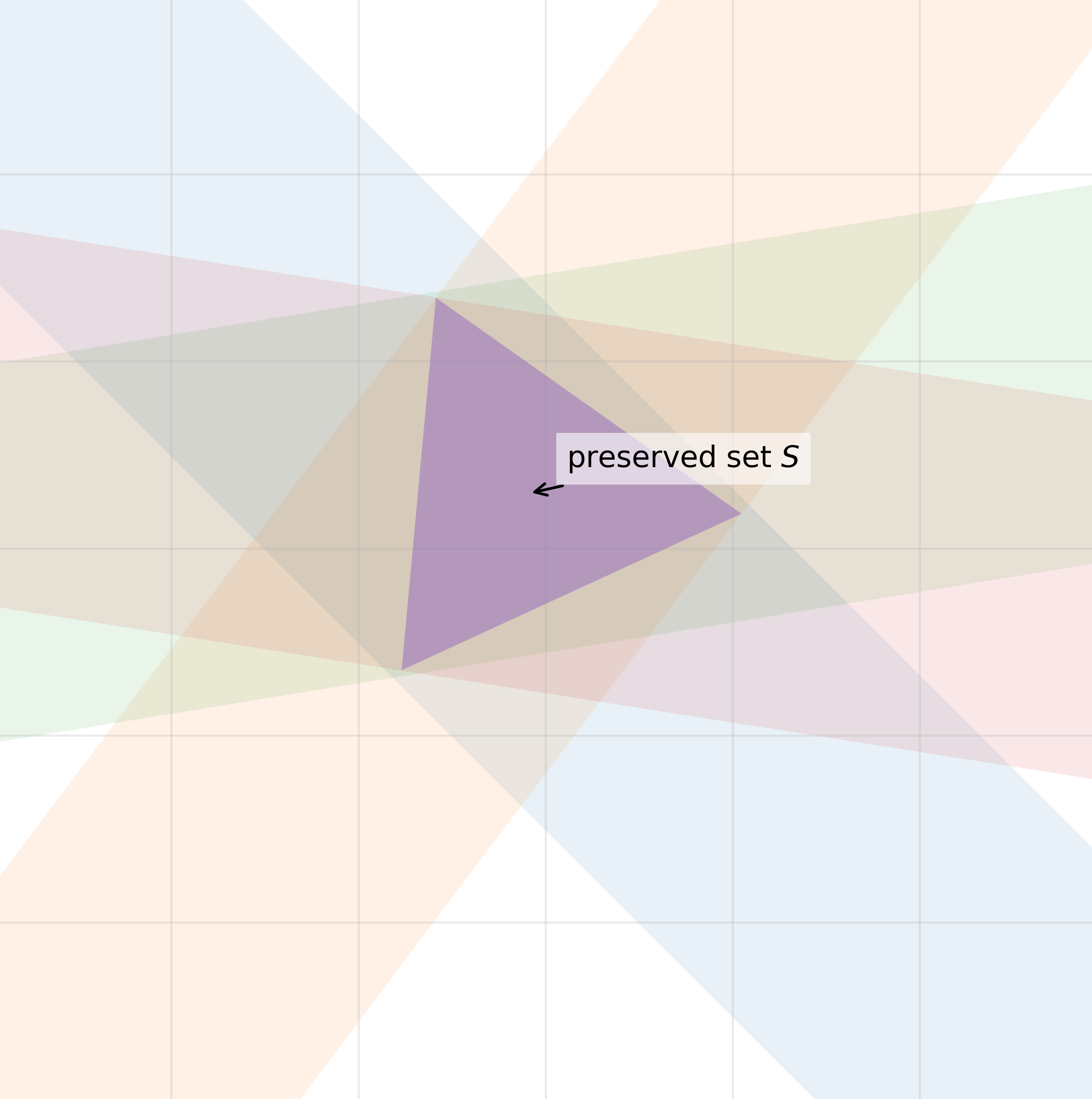}
    \caption{Adversary strategy to maintain \(S\) of \(\diam \ S \le 2\delta\) inside \(\Phi_T\).}
    \label{fig:adv_strategy}
\end{minipage}
\end{figure}

\paragraph{Scale invariance.}
As a final preparatory fact before turning to the proof overview, we record a simple scaling property of the optimal reconstruction error that will be useful throughout the paper. Recall that \(\OPT_d(T,\delta)\) denotes the worst-case reconstruction error achievable by the reconstructor under noise level \(\delta>0\).
For every $T \in \mathbb{N}$ and every $\delta > 0$, the optimal error satisfies
\begin{equation}
\label{eq:noise_scale}
    \OPT_d(T,\delta) = \delta \, \OPT_d(T,1).
\end{equation}
Intuitively, this is the case since we can obtain an algorithm for error rate $\delta$ from an algorithm for error rate $1$ by scaling everything by $\delta$.
The formal proof appears in Lemma~\ref{lem:scaling-proof}.

This scale invariance allows us to simplify several arguments.
In particular, in the proof of Main Theorem~\ref{thm:excess-bounds} we work with normalized noise $\delta = 1/2$ without loss of generality.
Similarly, in Main Theorem~\ref{thm:dim-bounds}, we set $\delta = \Theta(\sqrt{\ln T})$, which simplifies the analysis by allowing us to work with standard Gaussian vectors.

The rest of the technical overview outlines the key ideas behind the three main theorems, in order.

\subsection{Asymptotic error: Main Theorem~\ref{thm:asymptotic}}

This subsection outlines the key ideas in the proof of Main Theorem~\ref{thm:asymptotic}. The approach used in the proof of Main Theorem~\ref{thm:asymptotic} is not new; a closely related approach was used in~\cite{moranreconstruction} for the distance reconstruction game on arbitrary metric spaces. In particular, the key step in their proof of Theorem~2 is to bound the Chebyshev radius of the feasible region via a bound on its diameter.  We nevertheless present these ideas here, since they form the basis of the intuition behind Main Theorems~\ref{thm:excess-bounds} and~\ref{thm:dim-bounds}. Recall that Main Theorem~\ref{thm:asymptotic} identifies the asymptotic optimal reconstruction error:
\[
 \OPT_d(\infty, \delta) := \lim_{T \to \infty} \OPT_d(T, \delta) = \sqrt{\frac{2d}{d+1}} \delta.
\]
\noindent

\smallskip

\noindent \textit{Upper bound.}
Assume that the reconstructor queries \emph{all} directions $v \in S^{d-1}$; we can simulate this by querying a dense enough net. We first bound the \emph{diameter} of the resulting feasible region $\Phi$:
\[
\diam(\Phi) \le 2\delta .
\]
Indeed, for any two points \(A,B \in \Phi\), we can consider the direction \(v = \frac{A-B}{\|A - B\|_2}\). By definition of the feasible region, \(|\langle A - B, v \rangle| \le 2\delta\), and so \(\|A - B\|_2 \le 2\delta\).

We derive a bound on the Chebyshev radius of the feasible region using Jung's theorem; for a modern proof, see \cite[Theorem~3.3]{gruber2007convex}. 

\begin{theorem}[Jung, 1901]
\label{thm:Jung}
For every bounded set \(S \subseteq \R^d\) with diameter \(D\), its Chebyshev radius satisfies \(\rad(S) \le \Jung_d\,D\), where \(\Jung_d = \sqrt{\tfrac{d}{2(d+1)}}\) is known as \emph{Jung's constant}.
\end{theorem}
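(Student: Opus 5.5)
The plan is to follow the classical route: reduce to finite sets, use Helly's theorem to reduce to at most $d+1$ points, and then settle the simplex case with a direct inner-product computation. Scaling lets us assume $D=1$. Replacing $S$ by its closure changes neither the diameter nor the radius, so $S$ may be taken compact. For each $y\in S$ let $B_y$ be the closed ball of radius $r:=\Jung_d$ centred at $y$. A point $x$ with $\sup_{y\in S}\|x-y\|\le r$ is exactly a point of $\bigcap_{y\in S}B_y$. These balls are compact and convex, so by Helly's theorem it suffices to show that every $d+1$ of them have a common point. That, in turn, amounts to proving the bound $\rad(F)\le r$ for every set $F$ of at most $d+1$ points with diameter at most $1$.

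For such a finite $F$, take the minimal enclosing ball. Its centre $c$ exists and is unique by compactness and strict convexity. Let $R$ be its radius and let $y_1,\dots,y_k$ be the points of $F$ at distance exactly $R$ from $c$. The standard first-order optimality argument shows that $c$ lies in the convex hull of $y_1,\dots,y_k$: otherwise a hyperplane separates $c$ from these points, and moving $c$ slightly toward them strictly decreases $R$. So $c=\sum_i\lambda_i y_i$ with $\lambda_i\ge 0$, $\sum_i\lambda_i=1$, and $k\le d+1$.

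Now compute with $c=0$, so that $\|y_i\|=R$ and $\sum_i\lambda_i y_i=0$. For each fixed $j$,
\[
\sum_i \lambda_i\|y_i-y_j\|^2=\sum_i\lambda_i\bigl(2R^2-2\langle y_i,y_j\rangle\bigr)=2R^2 .
\]
The $i=j$ term vanishes, and every other term satisfies $\|y_i-y_j\|^2\le 1$, so $2R^2\le 1-\lambda_j$. Summing over $j=1,\dots,k$ gives $2kR^2\le k-1$, hence $R^2\le \frac{k-1}{2k}\le \frac{d}{2(d+1)}$. The last inequality holds because $k\mapsto (k-1)/(2k)$ is increasing and $k\le d+1$. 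Therefore $R\le \Jung_d$.

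The main obstacle I expect is the justification that the centre of the minimal enclosing ball lies in the convex hull of the farthest points. I would prove it by contradiction via the separating direction $u$: for small $t>0$ the ball around $c+tu$ still covers the non-extremal points, and it strictly reduces the distance to every extremal point. Everything else is the Helly reduction, which is routine, and the identity computed above.
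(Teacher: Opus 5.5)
Your proof is correct and follows the classical route. The paper does not reprove Jung's theorem; it cites \cite[Theorem~3.3]{gruber2007convex}. Your two ingredients are nonetheless exactly the machinery the paper runs itself: the Helly reduction to at most $d+1$ points (as in Lemma~\ref{lem:minimum-witness-cardinality}), and the identity $\sum_i \lambda_i\|y_i-y_j\|^2 = 2R^2$ for the farthest points with $\sum_i \lambda_i y_i = 0$, which gives $r^2 \le \frac{n-1}{2n}$ in the proof of Lemma~\ref{lem:approximate_jung}.
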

\noindent
In particular, since the feasible region has diameter at most \(2\delta\), Jung's theorem yields the upper bound 
\[\rad(\Phi)\le \Jung_d\, \cdot 2\delta = \sqrt{\frac{2d}{d+1}} \delta.
\] 

\smallskip

\noindent \textit{Lower bound.}
We design an adversary which ensures that the feasible region contains some fixed set \(S\) whose Chebyshev radius is $\Jung_d \cdot 2\delta$. Motivated by the upper bound, we pick \(S\) to be the vertex set of a regular simplex of edge length \(2\delta\), which is tight for Jung's theorem.

The adversary proceeds as follows. Given an arbitrary direction \(v\), it considers the interval \(\{\langle s,v \rangle : s \in S\}\). Since \(S\) has diameter \(2\delta\), this interval has length at most \(2\delta\), and the adversary reports its midpoint. This way, it ensures that the entire set \(S\) lies in the strip corresponding to \(v\). See \Cref{fig:adv_strategy} for a visualization.

\subsection{Excess error: Main Theorem~\ref{thm:excess-bounds}}

The aim of this section is to outline the proof of Main Theorem~\ref{thm:excess-bounds}. Recall that this result determines the rate at which the excess error decays:
\begin{restatedtheorem}{\ref{thm:excess-bounds}}
For every dimension \(d\ge 2\) there exists \(T'_d\) s.t.\ for every \(T\ge T'_d\),
\[
  \Excess_d(T, \delta) =
  2^{-2^{\Theta_d(T)}} \delta.
\]
\end{restatedtheorem}
\noindent
To prove this result, we first note that by the scaling property we may assume throughout that the noise is normalized to \(\delta=\tfrac12\). Under this normalization, it suffices to show that for every dimension~\(d \ge 2\) there exists \(T'_d\) such that for every \(T \ge T'_d\),
\[
  \OPT_d(T, 1/2)
  = \Jung_d + 2^{-2^{\Theta_d(T)}},
\]
where \(\Jung_d = \OPT_d(\infty, 1/2) = \sqrt{\frac{d}{2(d+1)}}\) is Jung’s constant (see Theorem~\ref{thm:Jung}).

\smallskip

The doubly exponential rate in Main Theorem~\ref{thm:excess-bounds} is unusual. \cite{moranreconstruction} proved a doubly exponential lower bound in a related setting (distance queries), which we repurpose for our setting, but they didn't prove a matching upper bound. The argument underlying the proof of the upper bound in Main Theorem~\ref{thm:asymptotic} - querying a dense enough net of directions - only yields an upper bound of the form \(O(T^{-2/(d-1)} \delta)\). Instead, the proof of the upper bound in Main Theorem~\ref{thm:excess-bounds}, which is our main technical contribution, uses a boosting step  which reduces an excess error of~$\epsilon$ to an excess error of $O(\epsilon^2)$ using constantly many queries (the exact number depending on the dimension).

The main idea of the proof of Main Theorem~\ref{thm:asymptotic} is to reduce the diameter of the feasible region and then apply Jung's theorem to deduce a bound on the Chebyshev radius. This is lossy since there is no guarantee that Jung's theorem is tight for the feasible region. Instead, in the proof of Main Theorem~\ref{thm:excess-bounds} we focus on eliminating subsets of the feasible region with large Chebyshev radius.

Suppose that the feasible region $\Phi_T$ at time $T$ has Chebyshev radius at least \(\Jung_d + \epsilon\). By Carath\'eodory's theorem, this is witnessed by some set $S \subseteq \Phi_T$ of size at most $d + 1$ whose Chebyshev radius is at least \(\Jung_d + \epsilon\). We can think of $S$ as forming the vertex set of a simplex. If we query the directions of all edges of the simplex, then $S$ can no longer be inside the feasible region, potentially decreasing the Chebyshev radius of the feasible region.

The issue with this strategy is that there could be many such sets, and we cannot afford to query the directions corresponding to all of them. To overcome this difficulty, we prove a robust version of Jung's theorem, which shows that all such sets $S$ are close to a single vertex set $\Delta$. Querying the directions corresponding to the edges of $\Delta$ will reduce the Chebyshev radius of the feasible region to $1 + O(\epsilon^2)$.

\begin{figure}[t]
\begin{center}
\begin{minipage}[t]{0.9\textwidth}
\centering\includegraphics[width=0.25\textwidth]{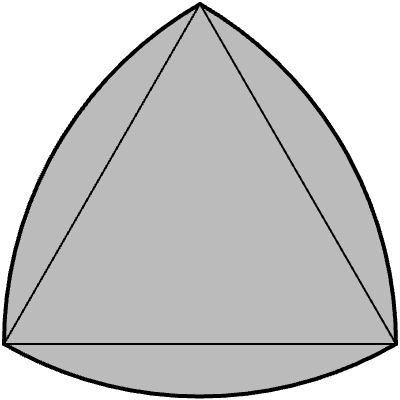}
\caption{Reuleaux triangle of diameter $1$ and Chebyshev radius $\Jung_2 = \sqrt{\frac13}$. \\The vertices of the highlighted triangle constitute the unique witness.}
\label{fig:reuleaux}
\end{minipage}\hfill
\end{center}
\end{figure}

\subsubsection{A robust Jung's theorem}

Jung's theorem states that a set $S \subseteq \R^d$ with diameter $1$ has Chebyshev radius at most $\Jung_d$. Furthermore, if the Chebyshev radius is exactly $\Jung_d$ then $S$ contains the vertex set of a regular simplex $\Delta$ with edge length~$1$, which is moreover unique. However, $S$ itself could contain points beyond $\Delta$. For example, $S$ could be a Reuleaux triangle (see Figure~\ref{fig:reuleaux}).

The robust version of Jung's theorem describes the structure of sets $S \subseteq \R^d$ with diameter $1+\beta$ whose Chebyshev radius is at least $\Jung_d$. While we cannot say that such sets are close to a regular simplex (this is false even when $\beta = 0$), we are able to characterize the ``extremal parts'' of~$S$, which we call witnesses. A subset $W \subseteq S$ is a \emph{witness} for $S$ if $|W| = d + 1$ and
\[
 \rad(W) \ge \Jung_d.
\]
We would like to say that the witnesses cluster around one another. We formalize this using the following definition: two sets $W_1,W_2$ are \emph{$r$-close} if
\[
 W_1 \subset W_2 + B(r) \qquad \text{and} \qquad W_2 \subset W_1 + B(r),
\]
where $B(r)$ is the ball of radius $r$ and addition is Minkowski sum; equivalently, $W + B(r)$ consists of all points which are at distance at most $r$ from some point in $W$.

We can now state the robust version of Jung's theorem, which describes the structure of sets which are almost tight for Jung's theorem.

\begin{theorem}[Robust Jung Theorem]
\label{thm:robust-jung}
Let $d \ge 1$ and let $S \subset \mathbb{R}^d$ be a closed set satisfying
\[
\diam(S) \le 1+\beta
\qquad\text{and}\qquad
\rad(S) \ge \Jung_d ,
\]
for some $0 \le \beta \le \beta_d$, where $\beta_d>0$ depends only on $d$.
Then, there exists a set of points
\[
\Delta = \{x_0,x_1,\dots,x_d\} \subset \mathbb{R}^d
\]
forming the vertex set of a regular simplex of edge length $1$ such that every witness \(W \subseteq S\) is \(C_d\beta\)-close to~\(\Delta\),
for a constant $C_d>0$ depending only on $d$.
\end{theorem}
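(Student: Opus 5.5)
The plan has two steps. First, every witness is close to some regular unit simplex (a local stability statement). Second, all these simplices are close to a single one, which uses the fact that $S$ itself has small diameter.

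\emph{Step 1 (each witness is near a regular simplex).} Let $W=\{w_0,\dots,w_d\}\subseteq S$ be a witness. Let $c$ be the center of its minimal enclosing ball and $R=\rad(W)\ge \Jung_d$. By the standard proof of Jung's theorem, there are convex weights $\lambda_i\ge 0$, supported on the points with $\|w_i-c\|=R$, such that $c=\sum_i\lambda_i w_i$. The standard identity then gives
\[
\sum_{i<j}\lambda_i\lambda_j\|w_i-w_j\|^2 = R^2 .
\]
Since every $\|w_i-w_j\|\le 1+\beta$, we get $R^2\le (1+\beta)^2\bigl(1-\sum_i\lambda_i^2\bigr)/2$, and $\sum_i \lambda_i^2\ge 1/(k+1)$ when the support has size $k+1$. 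Combining these with $R\ge \Jung_d$, three things follow. The support must be all $d+1$ points. Each $\lambda_i$ equals $1/(d+1)+O(\sqrt\beta)$; I will try to improve this to $O(\beta)$ by using the equalities $\|w_i-c\|=R$ for all $i$ rather than only the quadratic identity. Finally, each pairwise distance satisfies $\|w_i-w_j\|\ge 1-O(\beta)$. The last point holds because the identity is linear in the squared distances with weights bounded below, so a single short edge would force $R<\Jung_d$.

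So all edge lengths of $W$ lie in $[1-O(\beta),1+\beta]$. Rigidity of the regular simplex then says that a point configuration whose Gram data lie within $O(\beta)$ of those of the regular simplex is $O_d(\beta)$-close, after an isometry, to a regular unit simplex $\Delta_W$. This follows by recovering coordinates from distances, which is Lipschitz near a nondegenerate simplex. I also record that the circumcenter $c_W$ of $W$ is within $O(\beta)$ of the center of $\Delta_W$.

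\emph{Step 2 (all witnesses cluster around one simplex).} Fix one witness $W_1$ and its simplex $\Delta_1$. The key observation is that $W_1\cup W_2\subseteq S$, so every point of $W_2$ is within distance $1+\beta$ of every vertex of $\Delta_1$ up to $O(\beta)$. The set of points within distance $1+O(\beta)$ of all vertices of a regular unit simplex is a slightly enlarged Reuleaux-type body. I then want to show that any near-regular unit simplex lying inside this body is $O(\beta)$-close to $\Delta_1$ itself, with the same vertex set. At $\beta=0$ this is the uniqueness part of Jung's theorem: the Reuleaux simplex of the regular simplex has Chebyshev radius exactly $\Jung_d$, and its only inscribed regular unit simplex is $\Delta_1$.

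For the quantitative version, I use the Chebyshev ball of $S$. Every witness has radius at least $\Jung_d$, while $\rad(S)\le \Jung_d(1+\beta)$ by Jung's theorem. Every point of $W_2$ lies in $S\subseteq B(c_S,\Jung_d(1+\beta))$. Meanwhile $W_2$ has circumradius at least $\Jung_d$, and its circumcenter is essentially the unique center of a ball of radius about $\Jung_d$ containing it. This should force $\|c_{W_2}-c_S\|=O(\sqrt\beta)$ a priori.

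To upgrade this to $O(\beta)$, I use both conditions together. The vertices of $\Delta_2$ lie near the sphere $\partial B(c_S,\Jung_d)$. Each of them lies within distance $1+O(\beta)$ of each vertex of $\Delta_1$, and those vertices also lie near that sphere. Antipodal-type distances between points on a sphere of radius $\Jung_d$ exceed $1$, so each vertex of $\Delta_2$ must sit near a vertex of $\Delta_1$. I would linearize at $\beta=0$: the distance constraints and the sphere constraint define a system whose solution set near $\Delta_1$ is its vertex set, and its nondegeneracy (a transversality check) gives a linear dependence on $\beta$. Finally I take $\Delta=\Delta_1$, or an average if needed, and the triangle inequality for closeness gives the claimed $C_d\beta$.

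\emph{Main obstacle.} The hardest part is obtaining a linear, rather than $\sqrt\beta$, dependence. Both the weight estimate in Step 1 and the center estimate in Step 2 naively give only square-root bounds, because the minimal enclosing ball condition is a maximum condition and therefore quadratic at optimum. I would first try to exploit the fact that the first-order variation vanishes only in symmetric directions, and that those directions are eliminated by the diameter constraints. If this fails, I would fall back on a compactness and contradiction argument together with an explicit Hessian computation at the regular simplex.
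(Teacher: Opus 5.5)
Your decomposition is the paper's: each witness is $O_d(\beta)$-close to a regular unit simplex; two regular unit simplices with $\diam(\Delta_1\cup\Delta_2)\le 1+O(\beta)$ are $O_d(\beta)$-close; then the triangle inequality for $\dist_H$. Step~1 is sound. The edge bound needs only the weights $\lambda_i$ bounded below, exactly as in Lemma~\ref{lem:approximate_jung}, so your planned $O(\beta)$ upgrade of the weights is unnecessary. Gram/Cholesky rigidity is a clean substitute for the paper's inverse-function-theorem induction.

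\textbf{The gap.} The problem is the quantitative half of Step~2, and that is where the theorem's content lies (the paper's Theorem~\ref{thm:two_regular_simplices}). You establish only the $\beta=0$ uniqueness. The linear rate is left at ``this should force'', ``a transversality check'', and a fallback. Your diagnosis of the obstacle is also wrong. The relevant quantities are maxima of finitely many smooth functions, and at the regular simplex their minima are sharp (linear growth), not quadratic. The first-order variation of the cross-diameter vanishes in no nonzero direction. A Hessian computation cannot deliver the linear rate: if the growth were quadratic along some direction, a displacement of size $\sqrt\beta$ would violate the theorem.

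\textbf{The missing idea} is first-order coercivity coming from a zero-sum identity. In the paper, take an infinitesimal motion $(\tau_0,A)$ with $A$ skew, the simplex centered at $0$, and $v_j=Ax_j+\tau_0$. The first-order changes $\langle x_j-x_i,v_j\rangle$ of the cross-distances sum to $(d+1)\sum_j\langle Ax_j+\tau_0,x_j\rangle=0$, so their maximum dominates all of them. The tight-frame identity $XX^\top=\tfrac12 I_d$ then turns this into a bound on $\max_j\|v_j\|$, i.e.\ on $\dist_H$. Compactness and the logarithm on $E(d)$ globalize the estimate.

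Your own center-based route also closes, more elementarily, via two estimates that you expected to be only $O(\sqrt\beta)$:
\begin{enumerate}
\item A regular unit simplex with circumcenter $o$ has inradius $\Jung_d/d$. Hence $\max_i\|x_i-c\|^2\ge \Jung_d^2+\tfrac{2}{d}\Jung_d\|c-o\|+\|c-o\|^2$, so both circumcenters lie within $O_d(\beta)$ of $c_S$.
\item Translate both simplices to a common center $0$. For a vertex $y$ of $\Delta_2$ with $\|y-x_i\|\le 1+\eta$ for all $i$, set $b_i:=2\langle y,x_i\rangle+\tfrac1{d+1}$. Then $b_i\ge -2\eta-\eta^2$ and $\sum_i b_i=\sum_i b_i^2=1$. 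So $\sum_{i<k}b_ib_k=0$ forces one $b_{i^*}=1+O_d(\eta)$ and the rest to be $O_d(\eta)$. The frame reconstruction $y=\sum_i b_i x_i$ then gives $\|y-x_{i^*}\|=O_d(\eta)$.
\end{enumerate}
This is a robust form of Lemma~\ref{lem:unique_regular}. None of it appears in your proposal, so as written Step~2 is unproved.
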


We first explain how to use this theorem to prove the upper bound in Main Theorem~\ref{thm:excess-bounds}, and then how it suggests the adversary strategy for the lower bound. Finally, we sketch the proof of the theorem itself. 

\subsubsection{Upper bounds on excess error}
\label{sec:upper_bound}

To obtain a doubly exponential upper bound, it suffices to construct a reconstructor strategy and constants \(T'_d\) and \(c_d>0\) such that for all \(T\ge T'_d\),
\[
\rad(\Phi_T)\le \Jung_d + 2^{-2^{c_d\cdot (T-T'_d)}}.
\]
In particular, for $T \ge 2T'_d$ we have $T-T'_d \ge T/2$, and therefore \(\rad(\Phi_T)\le \Jung_d + 2^{-2^{(c_d/2)\,T}}\), which is a doubly exponential convergence rate in $T$ (up to constants in the exponent). The formal proof of this Theorem is given in Section~\ref{sec:upper-bounds}

\smallskip

The algorithm proceeds in two phases. In the preprocessing phase of \(T'_d\) queries, the reconstructor bounds the diameter of the feasible region so that Robust Jung Theorem (Theorem~\ref{thm:robust-jung}) applies to it. After this point, the algorithm enters a refinement phase. Each refinement step consists of a fixed batch of \(\binom{d+1}{2}\) queries, and has the effect of squaring the current excess error (up to dimension-dependent constants). Iterating this refinement yields the desired doubly exponential convergence.

The key geometric object helping to control the Chebyshev diameter of the feasible region is the \emph{witness core}:
\begin{equation}
\Witness_t \;:=\; \bigcup_{\substack{W \subset \Phi_t:\\
W \text{ is a witness } }} W, 
\end{equation}
that is, the union of all witnesses. Intuitively, $\Witness_t$ is the part of the feasible region that prevents the Chebyshev radius from dropping below $\Jung_d=\OPT_d(\infty,1/2)$. Indeed, $\rad(\Phi_t)<\Jung_d$ holds if and only if
$\Witness_t$ is empty.

By Lemma~\ref{lem:minimum-witness-cardinality} (via Helly's theorem),
there exists a finite set \(W \subseteq \Phi_t\) of size \(d+1\) such that \(\rad(\Phi_t) = \rad(W)\). Assuming \(\rad(\Phi_t) \ge \Jung_d\), the set \(W\) is a witness for
\(\Phi_t\), and hence \(W \subseteq \Witness_t\) by definition of the witness
core. Applying Jung’s inequality to \(W\), we obtain
\[
\rad(\Phi_t)
= \rad(W)
\le \Jung_d \,\diam(W)
\le \Jung_d \,\diam(\Witness_t).
\]
Consequently, it suffices to show that \(\diam(\Witness_t)\) converges to \(1\)
doubly exponentially fast as~\(t\) grows. Accordingly, the reconstructor’s goal is to reduce the diameter of the witness core.

\paragraph{Preprocessing.}
At the beginning of the game, the reconstructor obliviously queries a sufficiently dense covering of directions in order to ensure that the diameter of the feasible region satisfies \(\diam(\Phi_t)\le 1+\beta_d\), where \(\beta_d\) is the constant appearing in the statement of Theorem~\ref{thm:robust-jung}. 

This preprocessing step is where we pay to control \(\diam(\Phi_t)\).
This cost is unavoidable in high dimensions: Main Theorem~\ref{thm:dim-bounds} shows that with a subexponential query budget one cannot uniformly bound the diameter. In particular, an exponential-in-\(d\) number of queries is necessary before the refinement phase.

\paragraph{Refinement.}
Assume that \(\diam(\Witness_t) \le 1+\beta\) at time \(t\). Theorem~\ref{thm:robust-jung} then applies to the witness core $\Witness_t$.
Hence, there exists a vertex set $\Delta := \{x_0,x_1,\ldots,x_d\}$ of a single regular simplex such that \emph{every} witness $W$ of $\Phi_t$ is $O_d(\beta)$-close to $\Delta$. In particular, since $\Witness_t$ is the union of such witnesses, it follows that
\[
\Witness_t \subseteq \Delta + B\!\bigl(O_d(\beta)\bigr).
\]
We then query the $\binom{d+1}{2}$ directions parallel to the edges of $\Delta$,
and denote the resulting feasible region by $\Phi_{\tnew}$ and its witness core
by $\Witness_{\tnew}$. Since $\Phi_{\tnew}\subseteq \Phi_t$, we also have
$\Witness_{\tnew}\subseteq \Witness_t$.

To show that \(\diam (\Witness_\tnew) \le 1+O_d(\beta^2)\), take any two points \(y_1, y_2 \in \Witness_\tnew\), and let $x_1$ and $x_2$ be the vertices of $\Delta$ that are $O_d(\beta)$-close to
$y_1$ and $y_2$, respectively.

We claim that $\|y_1-y_2\| \le 1+O_d(\beta^2)$. If $x_1=x_2$, then $y_1$ and $y_2$ are already $O_d(\beta)$-close. Otherwise, consider the queried direction $v:=\vvv{x_1x_2}$. Because $y_1$ and $y_2$ lie within $O_d(\beta)$ of the endpoints of the edge direction $v$, a straightforward geometric estimate shows that the segment $y_1y_2$ is tilted from $v$ by at most an $O_d(\beta)$ angle. Feasibility of \(y_1, y_2\) after querying \(v\) implies
\[
1 \ge \langle y_2-y_1, v\rangle =
\|y_1-y_2\|\cdot \cos \angle(\vvv{y_1y_2},v) =
\|y_1-y_2\|\cdot \cos O_d(\beta),
\]
and hence $\|y_1-y_2\|\le 1+O_d(\beta^2)$, using the Taylor expansion of cosine: $\cos \gamma \approx 1 - \frac12 \gamma^2$. Since this holds for any two points taken from \(\Witness_\tnew\), the bound on diameter follows. This concludes the refinement step.

Formally, the upper bound is proved in Section~\ref{sec:upper-bounds}.

\subsubsection{Lower bounds on excess error}

Although Theorem~\ref{thm:robust-jung} does not by itself yield the
lower bound, its formulation suggests an optimal adversary strategy. By
Theorem~\ref{thm:robust-jung}, after the preprocessing step, in the worst-case scenario for the reconstructor (i.e., when \(\rad(\Phi_T) > \Jung_d\)), all witnesses must cluster near the
vertex set of a single regular simplex.

This observation forms the core of the adversary's strategy. At each round \(t\), the adversary ensures that the set \(\Delta_t + B(\alpha_t)\) lies inside the feasible region, where \(\Delta_t\) is the vertex set of some regular simplex and
the radii \(\alpha_t\) can be chosen to satisfy a recurrence of the form \(\alpha_t \ge c\,\alpha_{t-1}^2\) for a constant \(c>0\). In particular, such a strategy forces the excess error at time \(t\) to be at least \(\alpha_t\), since
\[
\rad(\Phi_t) \ge \rad(\Delta_t + B(\alpha_t)) = \Jung_d + \alpha_t.
\]
Iterating this recurrence yields a doubly exponential rate.

This proof follows the same template as in~\cite{moranreconstruction}, and relies on
a careful rotation of \(\Delta_{t-1}\) to obtain the new simplex~\(\Delta_t\). We
defer the details to Section~\ref{sec:lower_bound}.

\subsubsection{Robust Jung Theorem: proof}
\label{sec:intro-proof-main-insight}

Finally, we explain the main ideas behind the proof of Robust Jung Theorem.
We begin by recalling its statement: 

\begin{rtheorem}{\ref{thm:robust-jung}}
Let $d \ge 1$ and let $S \subset \mathbb{R}^d$ be a compact set satisfying
\[
\diam(S) \le 1+\beta
\qquad\text{and}\qquad
\rad(S) \ge \Jung_d ,
\]
for some $0 \le \beta \le \beta_d$, where $\beta_d>0$ depends only on $d$.
Then, there exists a set of points
\[
\Delta = \{x_0,x_1,\dots,x_d\} \subset \mathbb{R}^d
\]
forming the vertex set of a regular simplex of edge length $1$ such that every witness \(W \subseteq S\) is \(C_d\beta\)-close to \(\Delta\),
for a constant $C_d>0$ depending only on $d$.
\end{rtheorem}

\smallskip

In the extremal case \(\beta=0\), the theorem yields essentially \emph{two} conclusions: first, every witness coincides with the vertex set of a regular simplex, and second, this witness is unique. The first conclusion follows directly from the proof of Jung’s theorem: after translating so that the center of the minimum enclosing ball is at the origin, equality in
Jung’s inequality implies $\sum_{i=0}^d x_i=0$, and a straightforward calculation then shows that the points form the vertex set of a regular simplex. 

The second statement is more delicate. Observe that any witness must lie on the boundary of the minimum enclosing ball of \(S\). Fix a witness
\(\{x_0,\ldots,x_d\}\), and suppose that \(y\) is another point of \(S\) lying on the same sphere. We show that \(y\) coincides with one of the \(x_i\). The argument relies on the fact that the vertices of a centered regular simplex are in isotropic position: for every \(y\in\R^d\),
\begin{equation}
\label{eq:isotropy}
\|y\|^2 \;=\; \tfrac12 \sum_{i=0}^d \langle y, x_i\rangle^2 .
\end{equation}
This identity captures the rigidity and symmetry of the regular simplex, and we use it repeatedly in the proof of  Robust Jung Theorem as well. Combining \Cref{eq:isotropy} with the assumption that \(y\) lies on the same sphere as the \(x_i\), a short linear-algebraic argument yields the desired conclusion. The formal proof appears in Lemma~\ref{lem:unique_regular}.

\medskip

The proof of Robust Jung Theorem (Theorem~\ref{thm:robust-jung}) follows the same outline, but in a robust form. As before, the main difficulty lies in the second step. First, we show that any witness is \(O_d(\beta)\)-close to the vertex set of some regular simplex. Second, we show that if the union of two regular simplices has diameter at most \(1+\beta\) (with \(\beta\) sufficiently small), then the simplices must be \(O_d(\beta)\)-close. 

\paragraph{Step 1: Any witness is close to a regular simplex.}

We first show, by a mild modification of the proof of Jung’s theorem, that every edge of a witness \(W\) has length \(1+O_d(\beta)\). We then construct a regular simplex \(\Delta\) by induction on the dimension \(d\). For \(n \le d\), let \(\Delta' \subset \R^{n-1}\) be an \((n-1)\)-dimensional regular simplex, and suppose a point \(y \in \R^n\) satisfies \(\dist(y,z)=1+O_d(\beta)\) for all vertices \(z \in \Delta'\). A quantitative inverse function theorem implies that \(y\) must be \(O_d(\beta)\)-close to some point \(x\) that completes \(\Delta'\) to an \(n\)-dimensional regular simplex. Iterating this construction yields the desired simplex \(\Delta\) and proves that \(W\) is \(O_d(\beta)\)-close to \(\Delta\). Step~1 is proved formally in Appendix~\ref{app:Jung_approx}.

\paragraph{Step 2: Clustering of witnesses.}

Step~1 allows us to replace each witness set \(W\) by the vertex set of a regular
simplex \(\Delta\) of edge length \(1\), and to exploit the rigidity of regular
simplices. Let \(\cX\) denote the family of all regular simplices of edge length
\(1\). The notion of \(r\)-closeness can be expressed on \(\cX\) via the Hausdorff
distance: for \(\Delta,\Delta' \in \cX\), the value \(\dist_H(\Delta,\Delta')\) is
the smallest~\(r\) such that \(\Delta\) and \(\Delta'\) are \(r\)-close. 

We also introduce a second function on $\cX\times\cX$, the
\emph{excess cross-diameter}
\[
D_{\diam}(\Delta,\Delta') \;:=\; \diam(\Delta \cup \Delta') - 1.
\]
This quantity is symmetric and vanishes if and only if \(\Delta=\Delta'\). We
show that, when restricted to pairs of sufficiently close simplices, \(D_{\diam}\) is bilipschitz equivalent to the Hausdorff distance. However, we do not know whether \(D_{\diam}\) satisfies the triangle inequality; in particular, local bilipschitz equivalence to a metric does not imply that \(D_{\diam}\) itself is a metric on \(\cX\).

The inequality \(\dist_H(\Delta,\Delta') \gtrsim D_{\diam}(\Delta,\Delta')\) is
immediate from the definitions. Indeed, if the Hausdorff distance between
\(\Delta\) and \(\Delta'\) is \(r\), then \(\Delta'\) lies inside the \(r\)-neighborhood of \(\Delta\). Consequently, the union \(\Delta \cup \Delta'\) cannot have diameter bigger than \(1+2r\). The main content of Step~2 is the reverse inequality. We show that there exists \(\beta_d>0\) such that for all distinct
\(\Delta,\Delta' \in \cX\) with \(D_{\diam}(\Delta,\Delta') \le \beta_d\),

\begin{equation}
\label{eq:bilipschitz}
c_d
\;\le\;
\frac{\dist_H(\Delta,\Delta')}{D_{\diam}(\Delta,\Delta')}
\;\le\;
C_d, 
\end{equation}
where constants \(c_d, C_d \) depend only on the dimension \(d\). 
To prove this, we work with the Euclidean motion group \(E(d)\), fix a reference
simplex \(\Delta\), and analyze motions \(T \in E(d)\) near the identity. We
show that sufficiently small motions, in the sense that they move every point
by at most \(\alpha_d\) for a dimension-dependent constant \(\alpha_d\),
satisfy~\eqref{eq:bilipschitz} for \(\Delta\) and \(T(\Delta)\). This is proved
in Lemma~\ref{lem:dist_diam_linear_in_speed} using Lie group--algebra
techniques.

Finally, a compactness argument shows that there exists \(\beta_d>0\) such that
if \(D_{\diam}(\Delta,\Delta')\) is at most \(\beta_d\), then there exists a
motion \(T \in E(d)\) which is \(\alpha_d\)-small and satisfies
\(\Delta' = T(\Delta)\).  Applying the local estimate above to this motion
concludes the bilipschitz comparison between \(\dist_H\) and \(D_{\diam}\), and
hence the clustering of witnesses. See Appendix~\ref{app:step_2}.

\subsection{Dependence on the dimension: Main Theorem~\ref{thm:dim-bounds}}

We briefly outline the proof of Main Theorem~\ref{thm:dim-bounds}. The theorem
shows a sharp transition in the query budget as a function of the dimension:
for subexponential budgets \(T(d)=2^{o(d)}\) the excess error diverges as
\(d\to\infty\), while for superexponential budgets \(T(d)=2^{\omega(d)}\) the
excess error vanishes.

The argument in both directions relies on the standard comparison between the
diameter and the Chebyshev radius of the feasible region:
\[
  \tfrac{1}{2}\,\operatorname{diam}(\Phi_T)
  \;\le\;
  \operatorname{rad}(\Phi_T)
  \;\le\;
  \Jung_d \cdot \operatorname{diam}(\Phi_T).
\]
\paragraph{Lower bound (subexponential budgets).}
Assume \(T(d)=2^{o(d)}\). We construct an adversary which ensures that the feasible region has large diameter. Our adversary is a continuous analogue of the one constructed by \cite{DinurN03}.

The adversary simply answers \(0\) to all queries. In order to analyze the feasible region, it would be convenient to take the noise rate to be $\delta = 2\sqrt{\ln T}$.
Let $X,Y$ be two independent standard Gaussians. We will show that with positive probability, both $X$ and $Y$ belong to the feasible region, and \(|X - Y| = \Omega(\sqrt{d})\).

Consequently, the diameter of the feasible region is $\Omega(\sqrt{d})$, hence so is its Chebyshev radius.

Let $v_t$ be the direction queried at the $t$'th step. Then $\langle v_t, X \rangle$ is a standard Gaussian, hence $\Pr[\langle v_t, X \rangle \ge \delta] \leq \frac{1}{T^2}$ by the standard Gaussian tail bound. Consequently, $X$ and $Y$ are both feasible with probability at least $1 - \frac{2}{T}$.

On the other hand, $X - Y$ is a $d$-dimensional Gaussian with variance $2$, and so its expected norm is $\Omega(\sqrt{d})$. Standard anti-concentration bounds show that this holds even conditioned on $X$ and $Y$ being both feasible, completing the proof.

\paragraph{Upper bound (superexponential budgets).}
When \(T(d)=2^{\omega(d)}\), the preprocessing step of querying a dense net of directions already suffices.
A superexponential query budget \(T(d)\) is enough to query a set \(S\) of directions such that any \(v \in S^{d-1}\) is \(o_d(1)\)-close to some direction in \(S\), which forces
\(\operatorname{diam}(\Phi_T)=2\delta+o_d(1)\). Applying Jung's theorem then gives \(\operatorname{rad}(\Phi_T) \le \Jung_d \cdot 2\delta+o_d(1)\), so the excess error vanishes. 

Both proofs appear in Section~\ref{sec:dim-bounds}.

\section{Model and Definitions}
\label{sec:formal-definitions}

We now give the formal definition of the linear reconstruction game, the
associated minimax optimal reconstruction error, and supporting lemmas.
Throughout this section we restrict attention to reconstructors with
point-valued outputs, i.e., we do not consider improper reconstructors.
The improper model is deferred to Section~\ref{sec:improper}. 

\begin{definition}[Linear reconstruction game and loss]
\label{def:interaction}
Fix an ambient space $\R^d$ and a noise level $\delta>0$.
The linear reconstruction game is played between an \emph{adversary} and a
\emph{reconstructor} as follows.

\begin{enumerate}[topsep=3pt,itemsep=1pt,parsep=0pt]
    \item At the beginning of the game, the adversary chooses a secret point
    $x^* \in \R^d$.

    \item The interaction lasts for $T$ rounds. In round $t\in[T]$:
    \begin{enumerate}
        \item The reconstructor chooses a query direction
        $v_t \in \R^d$ with $\|v_t\|_2=1$, possibly adaptively as a function of
        previous answers.
        \item The adversary returns a noisy response
        $r_t \in \R$ satisfying
        \[
          \bigl|r_t-\langle x^*,v_t\rangle\bigr|\le \delta.
        \]
    \end{enumerate}

    \item At the end of the interaction, the reconstructor outputs an estimate
    $\hat x_T\in\R^d$.
\end{enumerate}

We measure the error of the estimate $\hat x_T$ with respect to the secret $x^*$ by \(\ell_2\)-distance \(\|\hat x_T-x^*\|_2\).
\end{definition}

Recall that the \emph{optimal reconstruction error} is the smallest worst-case error that a reconstructor can guarantee after \(T\) rounds of interaction. Formally,

\begin{definition}[Optimal reconstruction error]
\label{def:optimal}
Fix a dimension \(d \ge 1\) and work in \(\R^d\).
Let \(\RC\) be the set of reconstruction strategies for \(T\) rounds.
We define the \emph{optimal reconstruction error} by
\[
  \OPT_d(T,\delta)
  := \inf_{\cR \in \RC}\ 
     \sup_{x^*\in\R^d}\ 
     \sup_{\cA \in \ADV(x^*)}
     \|\hat x_T - x^*\|_2 .
\]
Here \(\ADV(x^*)\) denotes the set of adversary response strategies consistent
with the secret point \(x^*\), and \(\hat x_T\) is the final output of \(\cR\).
All strategies are assumed deterministic.
\end{definition}

The first basic properties of the optimal reconstruction error are nonnegativity
and monotonicity in the number of rounds. The argument is standard and does not
use any properties of the linear reconstruction game beyond the fact that the
loss is nonnegative.

\begin{lemma}[Monotonicity and nonnegativity in $T$]\label{lem:opt-monotone}
For every $d\ge 1$ and noise level \(\delta>0\), the sequence
\[
T \mapsto \OPT_d(T, \delta)
\]
is nonincreasing and nonnegative.
\end{lemma}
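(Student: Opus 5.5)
Nonnegativity is immediate from the definition. For every reconstructor $\cR$, secret $x^*$ and adversary $\cA\in\ADV(x^*)$, the quantity $\|\hat x_T - x^*\|_2$ is nonnegative. Hence each inner supremum is at least $0$, and so is their infimum over $\RC$. (The supremum may be $+\infty$ for bad strategies, but that does no harm.)

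For monotonicity, I plan a simulation argument. I will show $\OPT_d(T+1,\delta)\le \OPT_d(T,\delta)$ by turning any $T$-round reconstructor $\cR$ into a $(T+1)$-round reconstructor $\cR'$ whose worst-case error is no larger. The strategy $\cR'$ plays exactly as $\cR$ for the first $T$ rounds, choosing the same adaptive queries from the same answers. In round $T+1$ it submits an arbitrary fixed unit vector, say $e_1$, and ignores the answer. It then outputs the estimate $\hat x_T$ that $\cR$ would have produced from the first $T$ answers. Being a fixed function of the transcript, $\cR'$ is a legitimate deterministic strategy.

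Next fix a secret $x^*$ and any adversary $\cA'\in\ADV(x^*)$ for the $(T+1)$-round game. Restricting $\cA'$ to its responses in the first $T$ rounds gives an adversary $\cA\in\ADV(x^*)$ for the $T$-round game against $\cR$. This restriction satisfies the same noise constraint $|r_t-\langle x^*,v_t\rangle|\le\delta$, because the responses of $\cA'$ in rounds $1,\dots,T$ depend only on the history so far, and $\cR'$ generates that history exactly as $\cR$ does.

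The two interactions therefore produce identical transcripts in the first $T$ rounds and identical outputs. So the error of $\cR'$ against $(x^*,\cA')$ equals the error of $\cR$ against $(x^*,\cA)$. Taking suprema gives $\sup_{x^*}\sup_{\cA'}\|\hat x'-x^*\|\le \sup_{x^*}\sup_{\cA}\|\hat x_T-x^*\|$. Taking the infimum over $\cR$ then yields the claim, and induction over $T$ gives the full monotonicity statement.

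The only step requiring care is the restriction of adversaries: a strategy in the longer game must induce a valid strategy in the shorter one. This holds because adversary strategies are assumed deterministic and causal. I do not expect any genuine obstacle here.
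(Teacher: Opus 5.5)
Your proposal is correct and follows the paper's proof essentially step for step: nonnegativity comes from the loss being nonnegative, and monotonicity comes from having the $(T+1)$-round reconstructor simulate the $T$-round one, spend the extra query on an ignored direction, and output the same estimate. The only differences are cosmetic. You compare worst-case errors strategy by strategy and then take the infimum, whereas the paper fixes an $\alpha$-optimal reconstructor and lets $\alpha\to 0$. You also spell out that an adversary in the longer game restricts to a valid adversary in the shorter one, a detail the paper leaves implicit.
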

\begin{proof}
Nonnegativity is immediate from the definitions since the loss is nonnegative, and taking a supremum over adversaries and an infimum over reconstructors preserves
nonnegativity, hence $\OPT_d(T, \delta)\ge 0$ for all $T$.

We next prove monotonicity. Fix $T\ge 0$ and $\alpha>0$.
By definition of $\OPT_d(T, \delta)$, there
exists a reconstructor $\cR$ that guarantees
worst-case error at most $\OPT_d(T, \delta)+\alpha$ after $T$ rounds.

Define a reconstructor $\cR'$ for $T+1$ rounds as follows.
$\cR'$ simulates $\cR$ for the first $T$ rounds, then issues an arbitrary
query at round $T+1$, ignores the response, and outputs exactly the same final
estimate as $\cR$ would after $T$ rounds.

For any adversary, the error of $\cR'$ coincides with the error of $\cR$.
Therefore, $\cR'$ guarantees worst-case error at most
$\OPT_d(T, \delta)+\alpha$ in $T+1$ rounds, implying
\[
\OPT_d(T+1, \delta) \le \OPT_d(T, \delta)+\alpha.
\]
Since this holds for every $\alpha>0$, we conclude that
$\OPT_d(T+1, \delta) \le \OPT_d(T, \delta)$.
\end{proof}

We are now ready to prove a simple but useful scaling property in the noise level.

\begin{lemma}[Scaling in the noise level]
\label{lem:scaling-proof}
For every $T \in \NN$ and $\delta>0$, the optimal error scales linearly with $\delta$:
\[
  \OPT_d(T,\delta) \;=\; \delta\,\OPT_d(T,1).
\]
\end{lemma}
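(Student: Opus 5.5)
We prove the identity by a simulation argument in both directions. The only structure we use is that the map $x \mapsto \delta x$ is a bijection of $\R^d$ that fixes the set of unit vectors and multiplies linear query values by $\delta$. Concretely, we show $\OPT_d(T,\delta) \le \delta\,\OPT_d(T,1)$ and $\OPT_d(T,1) \le \delta^{-1}\OPT_d(T,\delta)$. The second inequality is the first one applied with noise level $1/\delta$ in place of $\delta$, after rescaling, so in fact one general inequality suffices: for all $\lambda,\delta>0$,
\[
\OPT_d(T,\lambda\delta)\le \lambda\,\OPT_d(T,\delta).
\]

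To prove this inequality, fix $\alpha>0$ and a reconstructor $\cR$ for noise $\delta$ with worst-case error at most $\OPT_d(T,\delta)+\alpha$. We define a reconstructor $\cR_\lambda$ for noise $\lambda\delta$ as follows.
\begin{itemize}
\item In each round $\cR_\lambda$ issues exactly the unit query $v_t$ that $\cR$ would issue.
\item On receiving the answer $r_t$, it feeds $\cR$ the rescaled answer $r_t/\lambda$.
\item At the end it outputs $\lambda\hat x_T$, where $\hat x_T$ is the output of $\cR$.
\end{itemize}

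Now fix a secret $x^*$ and any adversary $\cA$ for noise $\lambda\delta$ that is consistent with $x^*$. The rescaled answers satisfy
\[
|r_t/\lambda-\langle v_t,x^*/\lambda\rangle|\le \delta .
\]
Hence the sequence of rescaled answers, viewed as a function of the (adaptive) queries, defines an adversary $\cA'\in\ADV(x^*/\lambda)$ for noise $\delta$. The transcript $\cR$ sees is exactly its interaction with $\cA'$.

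The one point requiring care is that $\cA'$ is a legitimate deterministic adversary strategy. This holds because it is obtained by composing $\cA$ with the query-preserving simulation: the queries themselves are unchanged, so adaptivity is respected. Consequently $\|\hat x_T-x^*/\lambda\|\le \OPT_d(T,\delta)+\alpha$. Multiplying by $\lambda$ gives
\[
\|\lambda\hat x_T-x^*\|\le\lambda(\OPT_d(T,\delta)+\alpha).
\]
Taking the supremum over $x^*$ and $\cA$, and then letting $\alpha\to0$, yields the inequality.

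Finally, apply the inequality with $(\lambda,\delta)=(\delta,1)$ to get $\OPT_d(T,\delta)\le\delta\,\OPT_d(T,1)$. Apply it with $(\lambda,\delta)=(1/\delta,\delta)$ to get $\OPT_d(T,1)\le\delta^{-1}\OPT_d(T,\delta)$. Together these give equality.

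There is no real obstacle here. The only subtle point is the bookkeeping above, which checks that the translated adversary is valid and consistent with the rescaled secret; the normalization $\|v_t\|_2=1$ is exactly what keeps the query set invariant under this rescaling.
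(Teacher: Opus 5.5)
Your proof is correct and uses the same rescaling simulation as the paper. The reconstructor keeps the same unit queries, divides each answer by the scale factor before passing it to the base reconstructor, and multiplies the final output by that factor, while the adversary is rescaled correspondingly. The only difference is organizational: you obtain equality by applying the one-sided inequality $\OPT_d(T,\lambda\delta)\le\lambda\,\OPT_d(T,\delta)$ twice, whereas the paper couples the two games and argues that the rescaling maps are bijections on both strategy sets, so your version makes the paper's final ``take sup and inf'' step slightly more explicit.
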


\begin{proof}
Fix \(T\in\NN\) and \(\delta>0\). Since all strategies are deterministic, we can couple a game at noise level \(1\) with a game at noise level \(\delta\) by a simple rescaling.

Take any adversary strategy \(\cA\) for noise level~\(1\). It is determined by a secret point \(x^*\in\R^d\) and a deterministic rule for answering queries. Define a new adversary \(\cA_\delta\) for noise level \(\delta\) as follows. Its secret is \(\delta x^*\). On each query \(v_t\), it computes the answer that \(\cA\) would return to \(v_t\) at noise level~\(1\), and then returns \(\delta\) times this value.

Take any reconstructor strategy \(\cR\) for noise level \(1\). Define a new reconstructor \(\cR_\delta\) for noise level \(\delta\) as follows. It issues exactly the same queries \(v_t\) that \(\cR\) would issue. Whenever it receives an answer \(r_t\), it rescales it to \(r_t/\delta\) and feeds this value to \(\cR\). In the end of the interaction if \(\cR\) outputs \(\hat x_\cR\), then \(\cR_\delta\) outputs \(\delta\hat x_\cR\).

These transformations are invertible by applying the same construction with factor \(1/\delta\). Now run \(\cR\) against \(\cA\) at noise level \(1\), and run \(\cR_\delta\) against \(\cA_\delta\) at noise level \(\delta\). By construction, the query sequence is identical, and the answers in the second interaction are exactly \(\delta\) times the answers in the first. Therefore the final outputs are also scaled by \(\delta\).

This gives
\[
\|\delta\hat x_\cR-\delta x^*\|=\delta\|\hat x_\cR-x^*\|.
\]
Taking the supremum over adversaries and the infimum over reconstructors yields
\[
\OPT_d(T,\delta)=\delta\,\OPT_d(T,1),
\]
since the above transformations between noise levels \(1\) and \(\delta\) are bijections on both the adversary and reconstructor strategy sets.
\end{proof}

Finally, we recall the definition of the feasible region (see~\cref{eq:feasible}),
the core geometric object of the game.

\begin{definition}[Feasible region]
\label{eq:feasible-region}
Given a transcript $(v_1,r_1),\dots,(v_T,r_T)$ of query directions and responses
observed over $T$ rounds at noise level $\delta$, the feasible region after $T$
rounds is
\[
  \Phi_T
  :=
  \Bigl\{
  x\in\R^d :
  |\langle x,v_t\rangle-r_t|\le \delta
  \ \text{for all } t\in[T]
  \Bigr\}.
\]
\end{definition}

The feasible region records exactly the information available to the reconstructor after the interaction. Under this viewpoint, we can simplify the
adversary model without changing the optimal reconstruction error.

\begin{lemma}[A posteriori adversaries]
\label{lem:apost}
In the definition of \(\OPT_d(T,\delta)\), we may replace the usual adversary
(which fixes \(x^*\) in advance) by an adversary that only maintains
feasibility: it chooses replies \(r_1,\ldots,r_T\) online so that
\(\Phi_T\neq\varnothing\), and after observing the reconstructor's output it
selects any \(x^*\in\Phi_T\).
For deterministic reconstructors, these two models yield the same minimax value.
\end{lemma}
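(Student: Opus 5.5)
We will show the two minimax values are equal by proving two inequalities, holding the deterministic reconstructor $\cR$ fixed. Write $V_{\mathrm{std}}(\cR)$ for the worst-case error of $\cR$ against standard adversaries: the supremum over $x^*$ and over $\cA\in\ADV(x^*)$. Write $V_{\mathrm{post}}(\cR)$ for its worst-case error against a posteriori adversaries: the supremum over all reply sequences with $\Phi_T\neq\varnothing$ and over $x^*\in\Phi_T$. The plan is to show $V_{\mathrm{std}}(\cR)=V_{\mathrm{post}}(\cR)$ for every deterministic $\cR$. Taking the infimum over $\cR$ then gives the lemma.

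\emph{Standard is dominated by a posteriori.} Fix $x^*$ and $\cA\in\ADV(x^*)$. Run $\cR$ against $\cA$. Every reply satisfies $|r_t-\langle x^*,v_t\rangle|\le\delta$, so $x^*\in\Phi_T$ and in particular $\Phi_T\neq\varnothing$. An a posteriori adversary can replay the same replies online: since $\cR$ is deterministic, the queries are identical. At the end it selects this same $x^*\in\Phi_T$. The resulting error is exactly $\|\hat x_T-x^*\|_2$, so $V_{\mathrm{std}}(\cR)\le V_{\mathrm{post}}(\cR)$.

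\emph{A posteriori is dominated by standard.} Fix an a posteriori adversary. Because $\cR$ is deterministic, the interaction produces a single transcript $(v_1,r_1),\dots,(v_T,r_T)$ with $\Phi_T\neq\varnothing$, together with a final choice $x^*\in\Phi_T$. Define a standard adversary with secret $x^*$ that answers $r_t$ whenever the $t$-th query, given the prior history, equals $v_t$. Off this path it answers truthfully with $\langle x^*,v\rangle$; any consistent rule would do.

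Along the realized path the replies are legal, because $x^*\in\Phi_T$ means $|r_t-\langle x^*,v_t\rangle|\le\delta$ for all $t$. Off the path, truthful answers are trivially legal. Hence this adversary lies in $\ADV(x^*)$. Running $\cR$ against it reproduces the same transcript by determinism, so $\cR$ outputs the same $\hat x_T$ and incurs the same error. This gives $V_{\mathrm{post}}(\cR)\le V_{\mathrm{std}}(\cR)$.

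The only delicate point is the second direction. The a posteriori adversary chooses $x^*$ after seeing $\hat x_T$, and its replies may depend on the history in an arbitrary way. Determinism of $\cR$ is exactly what resolves this: the whole run collapses to one fixed transcript, so the choice of $x^*$ can be made in advance. The same argument would fail for randomized reconstructors, where different internal coin outcomes would call for different secrets.
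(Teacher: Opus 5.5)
Your proof is correct and follows essentially the same route as the paper. One direction notes that a standard adversary's secret lies in $\Phi_T$. The other replays the realized transcript with a secret $x^*\in\Phi_T$ chosen in advance, answering truthfully off the path, and uses determinism of $\cR$ to reproduce the same transcript and output; your per-reconstructor identity $V_{\mathrm{std}}(\cR)=V_{\mathrm{post}}(\cR)$ only makes explicit the infimum step the paper leaves implicit.
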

\begin{proof}
Any \emph{a priori} adversary that fixes a secret \(x^*\in\R^d\) in advance and
answers within noise \(\delta\) produces a feasible transcript, hence is a
special case of the feasibility-maintaining adversary.

Conversely, fix a deterministic reconstructor \(\cR\) and consider any feasible
transcript \((v_t,r_t)_{t=1}^T\) produced against \(\cR\), with feasible region
\(\Phi_T\neq\varnothing\). Choose an arbitrary \(x^*\in\Phi_T\). Then, by
definition of \(\Phi_T\),
\[
|r_t-\langle x^*,v_t\rangle|\le \delta \qquad \text{for all } t\in[T].
\]
Define an \emph{a priori} adversary with secret \(x^*\) as follows: on the
query sequence generated by \(\cR\), it returns the prescribed replies
\(r_1,\ldots,r_T\). If \(\cR\) ever deviates from this query sequence, the
adversary answers thereafter truthfully with \(\langle x^*,v\rangle\) on any
query direction \(v\in S^{d-1}\). This strategy always respects the noise
constraint, and it reproduces the same transcript against \(\cR\).

Therefore, for deterministic reconstructors, allowing an a priori versus an a
posteriori adversary does not change the optimal reconstruction error.
\end{proof}
Finally we now connect the reconstruction loss to a purely geometric quantity of the
feasible region. Recall that the \emph{Chebyshev radius} of a set \(S\subset \R^d\) is the radius
of its smallest enclosing Euclidean ball,
\begin{equation}
\label{eq:Chebyshev-radius}
\rad(S) := \inf_{c \in \R^d} \sup_{y \in S} \|c - y\|_2 .
\end{equation}
Any minimizer \(c\) is called a \emph{Chebyshev center} of \(S\).
(When \(S\) is bounded and nonempty, a Chebyshev center exists.)

\begin{lemma}[OPT via the feasible region]
\label{rem:loss-feasible}
Fix a transcript with nonempty feasible region \(\Phi_T\).
For any reconstructor output \(\hat x\in\R^d\),
\[
\sup_{x^*\in\Phi_T} \|\hat x - x^*\|_2 \;\ge\; \rad(\Phi_T).
\]
Moreover, equality holds when \(\hat x\) is a Chebyshev center of \(\Phi_T\). Consequently,
\[
\OPT_d(T,\delta)
=\inf_{\substack{\cR\ \text{a reconstructor strategy}\\ \text{at noise level }\delta}}\ 
  \sup_{\substack{\text{feasible transcripts}\\ \text{generated against }\cR}}
  \rad(\Phi_T).
\]
\end{lemma}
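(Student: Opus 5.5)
The argument has two parts: a pointwise inequality for a fixed transcript, and then a minimax reformulation that uses the a posteriori adversary of Lemma~\ref{lem:apost}.

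\emph{Pointwise inequality.} Fix a transcript with nonempty feasible region $\Phi_T$ and an output $\hat x$. By the definition~\eqref{eq:Chebyshev-radius} of the Chebyshev radius as an infimum over centers,
\[
\sup_{y\in\Phi_T}\|\hat x-y\|_2 \;\ge\; \inf_{c\in\R^d}\sup_{y\in\Phi_T}\|c-y\|_2 \;=\; \rad(\Phi_T).
\]
If $\Phi_T$ is bounded, a Chebyshev center $c^*$ exists, as noted after~\eqref{eq:Chebyshev-radius}. Choosing $\hat x=c^*$ attains the infimum, so equality holds. If $\Phi_T$ is unbounded, then $\rad(\Phi_T)=\infty$ and both sides equal $\infty$ for every $\hat x$, so equality is trivial. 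Existence of a Chebyshev center is the only non-definitional input here, and the paper already grants it.

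\emph{Minimax formula.} By Lemma~\ref{lem:apost}, $\OPT_d(T,\delta)$ is unchanged if we use adversaries that only keep $\Phi_T\neq\varnothing$ and then pick $x^*\in\Phi_T$ after seeing the output. For a fixed reconstructor $\cR$, the worst case over such adversaries is therefore
\[
\sup_{\text{feasible transcripts generated against }\cR}\ \sup_{x^*\in\Phi_T}\|\hat x_\cR-x^*\|_2,
\]
where $\hat x_\cR$ is the output of $\cR$ on that transcript.

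\emph{Two inequalities.} By the pointwise inequality, the worst case above is at least $\sup_{\text{transcripts}}\rad(\Phi_T)$ for every $\cR$. Taking the infimum over $\cR$ gives
\[
\OPT_d(T,\delta)\ \ge\ \inf_{\cR}\ \sup_{\text{transcripts}}\ \rad(\Phi_T).
\]
For the reverse inequality, take any $\cR$ and build $\cR'$. It issues exactly the same queries as $\cR$, and at the end it outputs a Chebyshev center of $\Phi_T$ (or an arbitrary point if $\Phi_T$ is unbounded). The query sequence depends only on the answers, not on the final output. Hence $\cR$ and $\cR'$ generate the same set of feasible transcripts against any adversary. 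By the equality case, the worst-case error of $\cR'$ is exactly $\sup_{\text{transcripts}}\rad(\Phi_T)$ taken over $\cR$'s transcripts. Taking the infimum over $\cR$ gives the matching upper bound, and the formula follows.

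The main obstacle is the reverse inequality. One has to check that replacing the output rule leaves the set of achievable transcripts unchanged, which holds because queries never depend on the final estimate. One also has to handle unbounded $\Phi_T$, where both sides are $+\infty$ and the output choice does not matter.
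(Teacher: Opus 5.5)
Your proof is correct and follows the same route as the paper: use Lemma~\ref{lem:apost} to reduce the worst case to $\sup_{x^*\in\Phi_T}\|\hat x-x^*\|_2$, then minimize over the output and recognize the definition of $\rad(\Phi_T)$. Your output-replacement argument for the reverse inequality and your handling of unbounded $\Phi_T$ just make explicit steps that the paper's two-line proof leaves implicit.
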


\begin{proof}
By Lemma~\ref{lem:apost}, after the transcript is fixed the adversary may choose
any \(x^*\in\Phi_T\). Thus the worst-case loss of an output \(\hat x\) equals
\(\sup_{y\in\Phi_T}\|\hat x-y\|_2\), and minimizing this quantity over \(\hat x\)
is exactly the definition of \(\rad(\Phi_T)\) in~\eqref{eq:Chebyshev-radius}.
\end{proof}

\section{Proofs}
\label{sec:proofs}

\subsection{Querying a Covering of Directions}

Before proving our main results, we introduce several technical tools that will be used repeatedly throughout the proofs.

We begin by defining a useful angular metric on the unit sphere.
Let \(S^{d-1} \subset \R^d\).  
The \emph{angular distance} on \(S^{d-1}\) is defined by
\begin{equation}
\label{eq:angular-metric}
    \rho(x,y) \;:=\; \angle(x,y) \;=\; \arccos\bigl(\langle x,y\rangle\bigr)
  \in [0,\pi],
  \qquad x,y \in S^{d-1}.
\end{equation}
Given \(\alpha>0\), a subset \(V \subset S^{d-1}\) is called an
\(\alpha\)-covering in \((S^{d-1},\rho)\) if for every \(v \in S^{d-1}\) there exists
\(u \in V\) such that
\[
  \rho(v,u) \le \alpha.
\]
\begin{definition}
\label{def:covering-number}
The (angular) \emph{covering number} \(\cN_\angle(S^{d-1},\alpha)\) is the minimal
cardinality of an \(\alpha\)-covering in \((S^{d-1},\rho)\).
Since the sphere \(S^{d-1}\) is compact, this covering number is finite for
every~\(\alpha>0\).
\end{definition}
In Appendix~\ref{app:covering-number} we prove the following useful estimate.

\begin{lemma}
\label{lem:covering-number-bounds}
For any \(0<\alpha<\frac{\pi}{2}\), the covering number \(\cN_\angle(S^{d-1},\alpha)\) satisfies the following bounds:
\[
  \sqrt d \,\alpha^{-(d-1)}
  \;<\;
  \cN_\angle(S^{d-1},\alpha)
  \;<\;
  2^{2d} \,\alpha^{-(d-1)}.
\]
\end{lemma}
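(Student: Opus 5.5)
The plan is to compare the angular covering number with packing numbers and then estimate packing numbers by comparing surface measures of spherical caps. Let $\sigma$ denote the normalized surface measure on $S^{d-1}$, and let $C(\theta)$ be the closed cap of angular radius $\theta$, so $\sigma(C(\theta)) = \int_0^\theta \sin^{d-2}t\,dt \big/ \int_0^\pi \sin^{d-2}t\,dt$. Write $I_d:=\int_0^\pi \sin^{d-2}t\,dt$; recall $I_d=\sqrt{\pi}\,\Gamma(\tfrac{d-1}{2})/\Gamma(\tfrac d2)$, which lies between $\sqrt{2\pi/d}$ and roughly $\sqrt{2\pi/(d-2)}$, so $I_d=\Theta(d^{-1/2})$.

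\textbf{Lower bound.} Any $\alpha$-covering $V$ gives $S^{d-1}=\bigcup_{u\in V}C_u(\alpha)$, so $|V|\ge 1/\sigma(C(\alpha))$. I would upper bound the cap measure by $\sin t\le t$, giving $\int_0^\alpha \sin^{d-2}t\,dt\le \alpha^{d-1}/(d-1)$. Hence $|V|\ge (d-1) I_d\,\alpha^{-(d-1)}$. It remains to check $(d-1)I_d>\sqrt d$. Using $I_d\ge\sqrt{2\pi/d}$ (a standard Gamma-ratio bound, e.g.\ via Gautschi/log-convexity), we get $(d-1)\sqrt{2\pi/d}>\sqrt d$ as soon as $\sqrt{2\pi}(d-1)>d$, which holds for $d\ge 2$. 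The case $d=1$ is trivial, since the covering number is $2$ and $\alpha^0=1$. Strictness comes from $\sin t<t$ on $(0,\alpha]$.

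\textbf{Upper bound.} I would take a maximal $\alpha$-separated set $P$. By maximality it is an $\alpha$-covering, and the caps $C_u(\alpha/2)$ for $u\in P$ are disjoint, so $|P|\le 1/\sigma(C(\alpha/2))$. To lower bound the cap measure, use concavity: for $t\le \alpha/2<\pi/4$ we have $\sin t\ge (2/\pi)t$, and in fact $\sin t\ge t\cdot \frac{\sin(\pi/4)}{\pi/4}=\frac{2\sqrt2}{\pi}t$. Then
\[
\int_0^{\alpha/2}\sin^{d-2}t\,dt\ \ge\ \Bigl(\tfrac{2\sqrt2}{\pi}\Bigr)^{d-2}\frac{(\alpha/2)^{d-1}}{d-1}.
\]
This gives
\[
|P|\le (d-1)I_d\, 2^{d-1}\Bigl(\tfrac{\pi}{2\sqrt2}\Bigr)^{d-2}\alpha^{-(d-1)}.
\]
Since $\pi/(2\sqrt2)\approx1.11<2$, the product $2^{d-1}(1.11)^{d-2}(d-1)I_d$ is at most $2^{d-1}\cdot 1.11^{d}\cdot O(\sqrt d)$, which is well below $2^{2d}$. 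I would verify this for all $d\ge2$: $(d-1)I_d\le \sqrt{2\pi d}$, and $\sqrt{2\pi d}\,(1.11)^{d}<2^{d+1}$ for every $d\ge 1$. The $d=1$ case again is immediate.

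\textbf{Main obstacle.} The only delicate point is getting explicit, dimension-uniform constants: bounding the Gamma ratio $I_d$ cleanly on both sides and checking the small-$d$ cases, especially the lower bound's $\sqrt d$ at $d=2,3$. At $d=2$, for instance, the true count is $\lceil\pi/\alpha\rceil\ge\pi/\alpha>\sqrt2/\alpha$. If the Gamma bounds get messy, I would fall back on the elementary estimates $I_d\ge \int_{\pi/2-1/\sqrt d}^{\pi/2+1/\sqrt d}\cos^{d-2}(s)\,ds\gtrsim 1/\sqrt d$ and $I_d\le \int e^{-(d-2)s^2/2}\,ds$.
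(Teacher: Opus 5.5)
Your proof is correct and follows essentially the paper's argument: the cap-area sandwich $|S^{d-1}|/|C_\alpha|\le\cN_\angle(S^{d-1},\alpha)\le|S^{d-1}|/|C_{\alpha/2}|$ (the upper inequality via a maximal $\alpha$-separated set), then $\sin t\le t$ together with the Gautschi-type bound $\Gamma(\tfrac{d-1}{2})/\Gamma(\tfrac d2)\ge\sqrt{2/d}$ for the lower bound, and an explicit lower bound on the cap integral for the upper bound. Your sharper estimate $\sin t\ge\frac{2\sqrt2}{\pi}t$ on $[0,\pi/4]$ is actually the better choice: the paper's cruder $\sin\theta\ge\theta/2$, combined with the Gautschi upper bound, produces the constant $\sqrt{2\pi}\,(d-1)(d-2)^{-1/2}\,2^{2d-3}$, which exceeds $2^{2d}$ for $d\ge 11$, whereas yours leaves ample room. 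One small repair: $(d-1)I_d\le\sqrt{2\pi d}$ does not follow from $I_d\le\sqrt{2\pi/(d-2)}$; justify it instead via $(d-1)I_d=d\,I_{d+2}$ and $I_{d+2}\le\sqrt{2\pi/d}$.
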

The following lemma is a straightforward but useful observation that will be
used repeatedly in the proofs of the main theorems.

\begin{lemma}\label{lem:diam-from-cover}
Consider noise scale $\delta > 0$. Assume that the reconstructor queries an $\alpha$-covering of $S^{d-1}$ in the angular metric, for some $0 < \alpha < \pi/2$, that is, it makes at least  $\cN_\angle(S^{d-1},\alpha)$ queries (see Definition~\ref{def:covering-number} for the covering number).

Then, for any adversary the diameter of the resulting feasible region is bounded by
\[
\diam \ \Phi_T \;\le\; \frac{2\delta}{\cos \alpha},
\]
and hence
\[
\sup_{\text{adversaries}} \diam \Phi_T \;\le\; \frac{2\delta}{\cos \alpha}.
\]
\end{lemma}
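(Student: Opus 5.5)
Fix any two points $A,B\in\Phi_T$ with $A\neq B$; we will show $\|A-B\|_2\le 2\delta/\cos\alpha$. Since the adversary is arbitrary and the bound holds for every pair in every resulting feasible region, both the pointwise bound on $\diam\Phi_T$ and the bound on its supremum over adversaries follow immediately.

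Let $w=(A-B)/\|A-B\|_2\in S^{d-1}$. Because the queried set $V$ is an $\alpha$-covering in the angular metric, there is a queried direction $u=v_t\in V$ with $\rho(w,u)\le\alpha$, i.e.\ $\langle w,u\rangle\ge\cos\alpha>0$ (using $\alpha<\pi/2$). Both $A$ and $B$ lie in the strip for query $t$, so $|\langle A,u\rangle-r_t|\le\delta$ and $|\langle B,u\rangle-r_t|\le\delta$. The triangle inequality then gives $|\langle A-B,u\rangle|\le 2\delta$.

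Finally, $\langle A-B,u\rangle=\|A-B\|_2\langle w,u\rangle\ge\|A-B\|_2\cos\alpha$, so $\|A-B\|_2\cos\alpha\le 2\delta$. Since $\cos\alpha>0$, dividing yields $\|A-B\|_2\le 2\delta/\cos\alpha$. Taking the supremum over pairs gives the diameter bound.

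There is no real obstacle here. The only points to check are that $\alpha<\pi/2$ makes $\cos\alpha$ strictly positive, so the division is legitimate, and that the sign of $u$ doesn't matter because we work with absolute values. The phrase ``at least $\cN_\angle$ queries'' should be read as saying that the queried set contains an $\alpha$-covering, which is exactly what the argument uses.
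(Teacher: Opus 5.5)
Your proposal is correct and follows essentially the same argument as the paper: normalize $A-B$, pick a covering direction within angle $\alpha$, use that both points lie in the strip for that query to get $|\langle A-B,u\rangle|\le 2\delta$, and divide by $\cos\alpha>0$. Your reading of ``at least $\cN_\angle(S^{d-1},\alpha)$ queries'' as ``the queried set contains an $\alpha$-covering'' is also how the paper's proof uses the hypothesis.
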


\begin{proof}
    Let $\{v_1,\dots,v_T\} \subset S^{d-1}$ be an $\alpha$-covering of the unit sphere,
    that is, for every $v \in S^{d-1}$ there exists $v_i$ such that
    \[
    \langle v, v_i \rangle \;\ge\; \cos \alpha,
    \]
    equivalently, the angle between $v$ and $v_i$ is at most $\alpha$.
    Let $\Phi_T$ be the resulting feasible region, which depends on the adversary holding the secret point~$x^*$.
    Take arbitrary $x,y \in \Phi_T$ and set
    \[
    v := \frac{x-y}{\|x-y\|}.
    \]
    By the $\alpha$-covering property, there exists $v_i$ with
    $\langle v, v_i \rangle \ge \cos \alpha$.
    Since both $x$ and $y$ are consistent with the answer in direction $v_i$, we
    have
    \[
      |\langle x, v_i \rangle - r_i| \le \delta,
      \qquad
      |\langle y, v_i \rangle - r_i| \le \delta,
    \]
    and hence
    \[
      |\langle x, v_i \rangle - \langle y, v_i \rangle|
      \;\le\; 2\delta.
    \]
    On the other hand,
    \[
      \langle x, v_i \rangle - \langle y, v_i \rangle
      = \langle x-y, v_i \rangle
      = \|x-y\|\,\langle v, v_i \rangle
      \ge \|x-y\|\,\cos \alpha.
    \]
    Combining the two inequalities yields
    \[
      \|x-y\|
      \;\le\;
      \frac{2\delta}{\cos \alpha}.
    \]
    Since $x,y \in \Phi_T$ were arbitrary, we conclude that
    \begin{equation}\label{eq:diam-phiT}
      \diam(\Phi_T)
      \;\le\;
      \frac{2\delta}{\cos \alpha}.
    \end{equation}
\end{proof}

\subsection{Proof of Main Theorem~\ref{thm:asymptotic}}
\label{app:asymptotic}

In this section we determine the limiting optimal reconstruction error in the linear reconstruction game on $\R^d$ with normalized noise, as the number of interaction rounds $T$ tends to infinity. We will also need the following theorem, which links diameter and radius in Euclidean space. Recall Jung's theorem:

\begin{rtheorem}{\ref{thm:Jung}}
For every bounded set \(S \subseteq \R^d\) with diameter \(D\), its Chebyshev
radius satisfies
\[
  \rad(S) \le \sqrt{\tfrac{d}{2(d+1)}}\,D;
\]
see, e.g., \cite{Blumenthal1953}.
The factor \(\sqrt{\tfrac{d}{2(d+1)}}\) is known as \emph{Jung's constant}; we
denote it by \(\Jung_d\).
\end{rtheorem}

Now we are ready to prove Main Theorem~\ref{thm:asymptotic}. Recall it's statement: 
\bigskip
\begin{restatedtheorem}{\ref{thm:asymptotic}}
Consider the linear reconstruction game on $\R^d$, where $d\ge 1$, with
noise level \(\delta >0\). Let $\OPT_d(T, \delta)$ denote the optimal reconstruction errors after $T$ rounds, as defined in Definition~\ref{def:optimal}. 
Then the limit in \(T\) exists and satisfies
\[
  \OPT_d(\infty, \delta) := \lim_{T \to \infty} \OPT_d(T, \delta) = \sqrt{\frac{2d}{d+1}} \delta.
\]
\end{restatedtheorem}

\begin{proof}
Existence of the limit follows from the monotonicity and nonnegativity of \(T \mapsto \OPT_d(T,\delta)\) (Lemma~\ref{lem:opt-monotone}). To identify its value, we give an upper bound and a matching lower bound.

\paragraph{Upper bound.}
The main ingredient is Lemma~\ref{lem:diam-from-cover}, which provides a
reconstruction strategy based on querying a sufficiently fine angular covering
of the sphere.

Let $n\in\NN$ and let $T_n := \cN_\angle(S^{d-1},1/n)$ be the size of a
$1/n$-angular covering of $S^{d-1}$ (see Definition~\ref{def:covering-number}).
Applying Lemma~\ref{lem:diam-from-cover}, after $T_n$ queries the feasible region
$\Phi_{T_n}$ satisfies
\[
  \diam(\Phi_{T_d}) \;\le\; \frac{2}{\cos(1/n)}.
\]
Jung's theorem (Theorem~\ref{thm:Jung}) states that any subset of $\R^d$ of diameter $D$ has Chebyshev radius at most
$\sqrt{\tfrac{2d}{d+1}}\cdot D/2$.
Applying this to $\Phi_{T_n}$ gives
\[
  \OPT_d(T_n, \delta)
  \;\le\;
  \sqrt{\frac{2d}{d+1}}\cdot \frac{\delta}{\cos(1/n)}.
\]
Letting $n\to\infty$ yields
\[
  \OPT_d(\infty) \;\le\; \sqrt{\frac{2d}{d+1}} \delta.
\]
\paragraph{Lower bound.}
We now prove matching lower bound. Let \(S \subset \R^d\) be the vertex set of a regular simplex of diameter
\(2\delta\). When the reconstructor issues a query direction \(v_t \in S^{d-1}\),
the adversary projects \(S\) onto the line spanned by \(v_t\) and returns the
midpoint of the resulting interval. Formally, for \(v \in S^{d-1}\) define
\[
m_v(S) := \inf_{s \in S} \langle v, s \rangle,
\qquad
M_v(S) := \sup_{s \in S} \langle v, s \rangle,
\]
and set
\[
r_t := \frac{M_{v_t}(S) + m_{v_t}(S)}{2}.
\]
Then for every \(x \in S\),
\[
  |\langle x, v_t\rangle - r_t| \le \delta,
\]
and hence \(S \subseteq \Phi_t\) for all \(t\).

Since the adversary ensures \(S \subseteq \Phi_T\) for every transcript, we have
\[
  \rad(\Phi_T) \ge \rad(S) = 2\Jung_d\,\delta .
\]
By Lemma~\ref{rem:loss-feasible}, the worst-case loss of a reconstructor for feasible region \(\Phi_T\) equals \(\rad(\Phi_T)\). Therefore,
for every \(T\),
\[
  \OPT_d(T,\delta) \ge 2\Jung_d\,\delta
  = \sqrt{\frac{2d}{d+1}}\,\delta,
\]
and hence
\[
  \OPT_d(\infty,\delta) \ge \sqrt{\tfrac{2d}{d+1}}\,\delta.
\]
\smallskip
Combining the upper and lower bounds completes the proof.
\end{proof}

\subsection{Proof of Main Theorem~\ref{thm:excess-bounds}}

The goal of this section is to establish the doubly-exponential convergence rate
of the excess error. We restate the main theorem with explicit constants (rather
than the \(\Theta_d(\cdot)\) notation used in the introduction).

Recall that the excess error is defined by
\begin{equation}
\label{eq:excess-error}
\Excess_d(T,\delta) := \OPT_d(T,\delta) - \OPT_d(\infty,\delta),
\end{equation}
where \(\OPT_d(T,\delta)\) is the optimal reconstruction error (see
Definition~\ref{def:optimal}) and \(\OPT_d(\infty,\delta)\) is the
limiting value of the optimal error (see Main Theorem~\ref{thm:asymptotic}).

\bigskip 
\begin{restatedtheorem}{\ref{thm:excess-bounds}}
There exist constants \(a_d, A_d>0\) and a threshold \(T_d'\), depending only on
the dimension \(d\), such that for all \(T > T_d'\),
\[
  \delta \cdot 2^{-2^{A_d \cdot T}} \le \Excess_d(T,\delta) \le \delta \cdot 2^{-2^{a_d \cdot T}}.
\]
\end{restatedtheorem}
\bigskip

For normalization, we assume \(\delta=\tfrac12\).
By the linear scaling in the noise level (Lemma~\ref{lem:scaling-proof}),
\[
\OPT_d(T,\delta)=\delta\,\OPT_d(T,1)
\qquad\text{and}\qquad
\OPT_d(\infty,\delta)=\delta\,\OPT_d(\infty,1).
\]
Therefore,
\[
\OPT_d(T,\tfrac12)-\OPT_d(\infty,\tfrac12)
=\frac{1}{2\delta}\Bigl(\OPT_d(T,\delta)-\OPT_d(\infty,\delta)\Bigr),
\]
and it suffices to prove doubly exponential upper and lower bounds on \(\OPT_d(T,\tfrac12)-\OPT_d(\infty,\tfrac12)\).

We begin with the upper bound and then turn to the lower bound.

\subsubsection{Upper bound in Main Theorem~\ref{thm:excess-bounds}}
\label{sec:upper-bounds}

To provide an upper bound on excess error, we recall the core concept introduced in Section~\ref{sec:proof-overview}. 

\begin{definition}[Witness]
Let \(S\subset\R^d\).
A \emph{witness} for \(S\) is a subset \(W\subseteq S\) of cardinality \(|W|=d+1\) such that \(\rad(W)\;\ge\;\Jung_d\), where \(\Jung_d=\sqrt{\frac{d}{2(d+1)}}\) is the Jung constant (see Theorem~\ref{thm:Jung}).
If \(\rad(S)\ge \Jung_d\), then \(S\) admits a witness by
Lemma~\ref{lem:minimum-witness-cardinality}.
\end{definition}

We can then introduce a witness core of a subset \(S \subset \R^d\). 

\begin{equation}
\label{eq:witness-core}
\Witness(S) \;:=\; \bigcup_{\substack{W \subset S:\\
W \text{ is a witness } }} W. 
\end{equation}

In Appendix~\ref{app:Jung_approx} we prove the main technical tool used in the upper bound, namely the \emph{Robust Jung Theorem}. Using the notion of the witness core, it can be stated in the following form, which is more convenient for our proof.

\bigskip 
\begin{rtheorem}{\ref{thm:robust-jung}}
Let \(d \ge 1\) and let \(S \subset \R^d\) be a compact set satisfying
\[
\diam(\Witness(S)) \le 1+\beta
\qquad\text{and}\qquad
\rad(S) \ge \Jung_d,
\]
for some \(0 \le \beta \le \beta_d\), where \(\beta_d>0\) depends only on \(d\).
Then there exists a set of points
\[
\Delta = \{x_0,x_1,\dots,x_d\} \subset \R^d
\]
forming the vertex set of a regular simplex of edge length \(1\) such that
\[
\Witness(S) \subset \Delta + B(C_d \beta),
\]
where \(\Delta + B(r)\) denotes the Minkowski sum of \(\Delta\) with the
Euclidean ball of radius \(r>0\).
\end{rtheorem}
\bigskip 

Now we are ready to present the proof of Main Theorem~\ref{thm:excess-bounds} (Upper bounds).

\begin{restatedtheorem}{\ref{thm:excess-bounds}} (Upper bound)
There exist constants \(a_d>0\) and a threshold \(T_d'\), depending only on
the dimension \(d\), such that for all \(T > T_d'\),
\[
\Excess_d(T,\tfrac12) \le 2^{-2^{a_d \cdot T}}.
\]
\end{restatedtheorem}
\begin{proof}
Denote by \(\beta_d>0\) the constant from the Robust Jung Theorem
(Theorem~\ref{thm:robust-jung}), and define
\[
\alpha_d \;:=\; \arccos\!\left(\frac{1}{1+\beta_d}\right).
\]
Denote by \(T_0 := N(S^{d-1}, \alpha_d)\) the covering number of the sphere \(S^{d-1}\) with angular radius \(\alpha_d\) (see Definition~\ref{def:covering-number}). 

We construct, by induction on \(t\), a reconstruction strategy with the following
guarantee. After
\[
T_0 + \binom{d+1}{2}\cdot t
\]
queries, the witness core at time \(T_0+\binom{d+1}{2}t\) (which for convenience
we denote by \(\Witness_t\)) satisfies
\[
\Witness_t \subseteq \Delta_t + B(\gamma_t),
\]
where \(\Delta_t\) is the vertex set of a regular simplex of edge length \(1\),
and \((\gamma_t)_{t\ge 0}\) is a sequence obeying the recursion
\[
\gamma_0=C_d \cdot \beta_d,
\qquad
\gamma_t = A_d \cdot \gamma_{t-1}^2,
\]
for a constant \(A_d>0\) depending only on the dimension \(d\) such that \(C_d \cdot \beta_d < 1/A_d\). Here \(C_d\) is the constant from the Robust Jung Theorem (Theorem~\ref{thm:robust-jung}).

Since
\[
\rad(\Phi_t) \;=\; \rad(\Witness_t)
\;\le\; \rad\!\bigl(\Delta_t + B(\gamma_t)\bigr)
\;=\; \rad(\Delta_t)+\gamma_t
\;=\; \Jung_d+\gamma_t,
\]
this claim immediately yields a doubly exponential upper bound on the
convergence rate of \(\Excess_d(T,\delta)\).

\paragraph{Induction base.} The base case essentially follows from Lemma~\ref{lem:diam-from-cover} which states that whenever one queries an \(\alpha_d\)-covering of a sphere \(S^{d-1}\) with respect to the angular metric (see \Cref{eq:angular-metric}), one has
\[
\diam \Phi_{T_0} \le 2/\cos \alpha_d. 
\]
Since \(\Witness_{T_0}\subseteq \Phi_{T_0}\), we have
\(\diam(\Witness_{T_0}) \le \diam(\Phi_{T_0})\).
Thus Theorem~\ref{thm:robust-jung} applies to \(S=\Phi_{T_0}\), and yields a
regular simplex
\(\Delta_0 := \{x_0,x_1,\ldots,x_d\}\) of edge length \(1\) such that
\[
\Witness_{T_0} \subseteq \Delta_0 + B(C_d\,\beta_d).
\]
This establishes the base case.

\paragraph{Induction step.}
Assume the induction hypothesis at step \(t\):
\[
\Witness_t \subseteq \Delta_t + B(\gamma_t),
\]
where \(\Delta_t=\{x_0,x_1,\ldots,x_d\}\). We query the directions
\(v_{ij}:=x_i-x_j\) for all \(0\le i<j\le d\).

We claim that
\[
\diam(\Witness_{t+1}) \le 1+4\gamma_t^2.
\]
First note that \(\Witness_{t+1}\subseteq \Witness_t\).
Hence any two points in the new witness core lie within distance \(\gamma_t\) of
vertices of \(\Delta_t\).
Let \(y_i \in B(x_i,\gamma_t)\) and \(y_j \in B(x_j,\gamma_t)\), and assume that
\(x_i\neq x_j\).

Observe that
\[
\sin \bigl(y_i - y_j,\, x_i - x_j\bigr) \;\le\; \frac{2\gamma_t}{1 - 2\gamma_t},
\]
which follows from the sine theorem applied to the triangle with vertices
\(A = 0\), \(B = x_i - x_j\), and \(C = y_i - y_j\). Indeed,
\[
|AB| = 1,
\qquad
|BC| \le 2\gamma_t.
\]
By the sine theorem,
\[
\frac{|AB|}{\sin \angle BCA}
= \frac{|BC|}{\sin \angle BAC}
\quad\Rightarrow\quad
\sin \angle BAC
= \frac{|BC|}{|AB|}\cdot \sin \angle BCA
\;\le\;
2\gamma_t .
\]
Since \(\sin \angle BAC = \sin (y_i - y_j,\, x_i - x_j)\), we obtain
\[
\sin (y_i - y_j,\, x_i - x_j)
\;\le\;
2\gamma_t.
\]
Therefore, since we queried \(v_{ij}=x_i-x_j\) and both points \(y_i,y_j\) remain
feasible,
\[
1 \ge \langle y_i - y_j,\, v_{ij} \rangle
= \|y_i - y_j\|_2\sqrt{1 - \sin^2 \bigl(y_i - y_j,\, x_i - x_j\bigr)}
\ge \|y_i - y_j\|_2\cdot \sqrt{1 - 4\gamma_t^2}.
\]
Whenever \(\gamma_t \le \tfrac{1}{4}\), we have
\[
\frac{1}{\sqrt{1 - 4\gamma_t^2}} \le 1 + 4\gamma_t^2,
\]
and hence
\[
\|y_i - y_j\| \le 1 + 4\gamma_t^2.
\]
Since \(y_i,y_j\) were arbitrary, it follows that
\[
\diam(\Witness_{t+1}) \le 1 + 4\gamma_t^2.
\]
Finally, we apply the Robust Jung Theorem (Theorem~\ref{thm:robust-jung})
to \(\Witness_{t+1}\). It follows that there exists a regular simplex
\(\Delta_{t+1}\) such that
\[
\Witness_{t+1} \subseteq \Delta_{t+1} + B(4C_d\,\gamma_t^2).
\]
Setting \(\gamma_{t+1} := 4C_d\,\gamma_t^2\) completes the induction step.
\end{proof}

\subsubsection{Lower bound in Main Theorem~\ref{thm:excess-bounds}}
\label{sec:lower_bound}

We begin with a lemma describing how the adversary can answer the reconstructor’s queries so as to maintain \(\Delta + B(\alpha)\) inside the feasible region.
Here \(\Delta\) denotes the vertex set of a regular simplex in \(\R^d\), and \(\Delta + B(\alpha)\) is the Minkowski sum
\[
\Delta + B(\alpha)
\;=\;
\{\,x+y : x\in \Delta,\ \|y\|_2\le \alpha\,\}.
\]
We then use this lemma to derive lower bounds on the excess error.

\begin{lemma}[Criteria for the good neighborhood of regular simplex]
\label{lem:linear_criteria_good}
Fix \(0<\alpha<\tfrac{1}{4}\), and let \(\Delta = x_0x_1\ldots x_d\) be a regular simplex with edge length \(1\).
For a given direction \(v\in\R^d\), without loss of generality choose the edge \(x_0x_1\) of \(\Delta\) so that
\[
\langle v, x_0 - x_1\rangle \;=\; \max_{i,j}\,\langle v, x_i - x_j\rangle .
\]
If \(\langle v, x_0 - x_1\rangle \le 1 - 2 \alpha\), then, with \(r := \langle x_0 + x_1, v\rangle/2\), one has
\[
\Delta + B(\alpha) \subset \Phi(\{v,r\}).
\]
\end{lemma}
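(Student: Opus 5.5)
The plan is to verify the strip condition directly. Here $\Phi(\{v,r\})$ is the strip $\{x : |\langle x,v\rangle - r| \le \tfrac12\}$, since we work at the normalized noise level $\delta=\tfrac12$ of this section. I also take $v$ to be a unit vector, as every query is.

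First, the choice of the edge $x_0x_1$ pins down the extreme vertices. Since $\langle v, x_0-x_1\rangle$ maximizes $\langle v, x_i - x_j\rangle$ over all pairs, $x_0$ maximizes $\langle v,x_i\rangle$ over the vertices and $x_1$ minimizes it. Otherwise, replacing $x_0$ by a vertex with a larger projection (or $x_1$ by one with a smaller projection) would increase the difference. Hence all projections lie in one interval:
\[
\langle v,x_1\rangle \le \langle v,x_i\rangle \le \langle v,x_0\rangle \qquad \text{for all } i .
\]
This interval has length $L:=\langle v,x_0-x_1\rangle \le 1-2\alpha$, and $r=\langle x_0+x_1,v\rangle/2$ is its midpoint. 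Consequently $|\langle v,x_i\rangle - r| \le L/2 \le \tfrac12-\alpha$ for every vertex $x_i$. Note that $L\ge 0$ because the pair $i=j$ gives $0$, so the interval is well defined.

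Next, take an arbitrary point $x_i+y$ of $\Delta+B(\alpha)$ with $\|y\|_2\le\alpha$. By Cauchy--Schwarz, $|\langle v,y\rangle|\le \|v\|_2\|y\|_2 \le \alpha$. The triangle inequality then gives
\[
|\langle x_i+y, v\rangle - r| \le |\langle x_i,v\rangle - r| + |\langle y,v\rangle| \le \tfrac12-\alpha+\alpha = \tfrac12 .
\]
So every point of $\Delta+B(\alpha)$ satisfies the feasibility constraint for the single query $(v,r)$, which is the claimed inclusion.

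There is no real obstacle here. The only points needing care are the observation that maximizing the pairwise difference forces $x_0,x_1$ to be the extreme vertices in direction $v$, and the use of $\|v\|=1$ together with $\delta=\tfrac12$ in the final bound. The hypothesis $\alpha<\tfrac14$ is not needed for this particular inclusion; it only serves later in the iterative lower-bound argument.
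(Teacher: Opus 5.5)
Your proof is correct and is essentially the paper's argument. The paper's two-sided bound $\langle x_1-x_0,v\rangle \le \langle x_0+x_1-2x_i,v\rangle \le \langle x_0-x_1,v\rangle$, derived from the same maximality observation, is exactly your statement $|\langle v,x_i\rangle - r|\le L/2$. Both proofs then absorb the perturbation via $|\langle y,v\rangle|\le\alpha$ to land inside the strip of half-width $\tfrac12$.
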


\begin{proof}

First, observe that for any vertex \(x_i\) one has
\[
\langle x_1 - x_0, v\rangle \;\le\; \langle x_0 + x_1 - 2x_i, v\rangle \;\le\; \langle x_0 - x_1, v\rangle.
\]
Indeed, suppose for contradiction that 
\(\langle x_0 + x_1 - 2x_i, v\rangle > \langle x_0 - x_1, v\rangle\).
Then
\(\langle x_1 - x_i, v\rangle > 0\), and hence
\(\langle x_0 - x_i, v\rangle > \langle x_0 - x_1, v\rangle\),
contradicting the maximality of the latter.
The other inequality is proved analogously.

Now assume \(x \in B(x_i,\alpha)\). We claim that \(x \in \Phi(v,r)\).
Write \(x = x_i + e\) with \(\|e\|_2 \le \alpha\). Then
\[
\left\langle \tfrac{x_0+x_1}{2} - x,\, v \right\rangle
   \;=\; \left\langle \tfrac{x_0+x_1}{2} - x_i,\, v \right\rangle - \langle e, v\rangle
   \;\le\; \tfrac12\langle x_0 - x_1, v\rangle + \alpha 
   \;\le\; \tfrac{1}{2},
\]
where the second inequality uses the bound \(\langle x_0+x_1-2x_i, v\rangle \le \langle x_0 - x_1, v\rangle\) shown above (and \(|\langle e,v\rangle|\le \alpha\)), and the third uses the hypothesis \(\langle x_0 - x_1, v\rangle \le 1 - 2\alpha\).
The opposite inequality,
\(\langle (x_0+x_1)/2 - x, v\rangle \ge -\tfrac{1}{2}\), is proved in the same way.
\end{proof}

Now we are ready to prove the lower bound. Recall the statement of the theorem: 

\begin{restatedtheorem}{\ref{thm:excess-bounds}} (Lower bound)
There exist constant \(A_d>0\) depending only on
the dimension \(d\), such that for all $T\ge d$
\[
\Excess_d(T,\tfrac12) \ge 2^{-2^{A_d \cdot T}}.
\]
\end{restatedtheorem}
\begin{proof}
It suffices to show that there exists a universal constant \(M>0\) such that, for all sufficiently small \(\alpha>0\), for every regular simplex \(\Delta\) of edge length \(1\) and every direction \(v\), there exist a response \(r\) and a simplex \(\Delta'\) with
\[
    \Delta' +B(\alpha') \subset \Delta + B(\alpha)
    \qquad\text{and}\qquad
    \Delta' + B(\alpha') \subset \Phi(v,r),
\]
where \(\alpha' = M\alpha^{2}\). We claim that the choice \(M=\tfrac{1}{17}\) is sufficient.

Without loss of generality, assume that the simplex is
\[
    \Delta = \{0, x_1, \ldots, x_d\},
    \qquad
    \langle v, x_1\rangle = \max_{i,j}\langle v, x_i - x_j\rangle.
\]
If \(\langle v, x_1\rangle \le 1 - 2\alpha^2/17\), then Lemma~\ref{lem:linear_criteria_good} directly produces a response \(r\) with \(\Delta + B(\alpha^2/17) \subset \Phi(v,r)\), and we are done with \(\Delta' = \Delta\). Suppose instead that
\[
\langle v, x_1\rangle > 1 - 2\alpha^2/17 \ge 1 - \alpha^2/8 + O(\alpha^4) = \cos(\alpha/2).
\]
In this case we rotate \(\Delta\) slightly so that Lemma~\ref{lem:linear_criteria_good} becomes applicable to the rotated simplex. The rotation is provided by Lemma~\ref{lem:simplex_rotation} (Appendix~\ref{app:rotation_lemmas}): given any direction \(v\) and any angle \(\theta \le \pi/18\), it constructs an isometry \(R_\theta\) such that the rotated simplex \(\Delta' := R_\theta \Delta\) satisfies \(\langle v, x_1' \rangle \le \cos\theta\) while still attaining the maximum at the edge \(0x_1'\).

Applying Lemma~\ref{lem:simplex_rotation} with \(\theta = \alpha/2\) produces a rotated simplex \(\Delta'\) with
\[
    \langle v, x_1'\rangle \le \cos(\alpha/2) \le 1 - 2\alpha^2/17,
    \qquad
    \langle v, x_1'\rangle = \max_{i,j}\langle v, x_i' - x_j'\rangle.
\]
Lemma~\ref{lem:linear_criteria_good} then yields a response \(r\) such that \(\Delta' + B(\alpha^2/17) \subset \Phi(v,r)\).

It remains to verify that \(\Delta' + B(\alpha^2/17) \subset \Delta + B(\alpha)\). For this we use Lemma~\ref{lem:rotating_neighborhood} (Appendix~\ref{app:rotation_lemmas}), which reduces this inclusion to a bound on the single quantity \(\|x_1 - x_1'\|_2\): namely, it suffices to check that
\(\|x_1 - x_1'\|_2 \le \alpha - \alpha^2/17\). Indeed,
\[
    \|x_1 - x_1'\|_2^2 = 2(1-\cos(\alpha/2)) = \alpha^2/8 + O(\alpha^4) \le (\alpha - \tfrac{1}{17}\alpha^2)^2
\]
for all sufficiently small \(\alpha\). 

Iterating this construction over \(T\) rounds yields \(\Delta_T + B(\alpha_T) \subset \Phi_T\) with \(\alpha_T \ge 2^{-2^{A_d T}}\), and hence \(\Excess_d(T, \tfrac12) \ge \rad(\Phi_T) - \Jung_d \ge \alpha_T\). This concludes the proof.
\end{proof}

\subsection{Proof of Main Theorem~\ref{thm:dim-bounds}}
\label{sec:dim-bounds}

The goal of this section is to quantify how the query budget \(T(d)\) required
to control the excess error depends on the dimension \(d\).

Recall that the excess error is
\[
  \Excess_d(T,\delta) := \OPT_d(T,\delta) - \OPT_d(\infty,\delta),
\]
where \(\OPT_d(\infty,\delta)\) denotes the limiting optimal error from
Main Theorem~\ref{thm:asymptotic}.

\smallskip
We use standard asymptotic notation as \(d\to\infty\). For \(f,g:\NN\to\R_+\),
we write \(f(d)=o(g(d))\) if \(\frac{f(d)}{g(d)}\to 0\), and
\(f(d)=\omega(g(d))\) if \(\frac{f(d)}{g(d)}\to +\infty\).

\bigskip 
\begin{restatedtheorem}{\ref{thm:dim-bounds}}
Let \(T:\NN\to\NN\) be a query budget as a function of the dimension \(d\).
\begin{enumerate}
  \item If \(\ln T(d)=o(d)\) for all sufficiently large \(d\),
  then
  \[
    \Excess_d(T(d),\delta) \xrightarrow{d\to\infty} +\infty.
  \]
  \item If \(\ln T(d)=\omega(d)\) for all sufficiently large \(d\), then
  \[
    \Excess_d(T(d),\delta) \xrightarrow{d\to\infty} 0.
  \]
\end{enumerate}
\end{restatedtheorem}
\begin{proof}
We treat the two regimes separately. The subexponential regime relies on a
Gaussian construction reminiscent of \cite{DinurN03}; the superexponential
regime reduces to the covering-number bound of
Lemma~\ref{lem:covering-number-bounds}.

\paragraph{Proof of Item 1 (subexponential budgets).}

By the scaling property (Lemma~\ref{lem:scaling-proof}), we have $\OPT_d(T, \delta) = \delta \OPT_d(T, 1)$. Thus, it suffices to show that
\[
  \OPT_d(T_d, 1) - \OPT_d(\infty,1) \xrightarrow{d \to \infty} +\infty.
\]
From Main Theorem~\ref{thm:asymptotic}, the asymptotic optimal error is bounded by $\OPT_d(\infty,1) = \sqrt{\frac{2d}{d+1}} \le \sqrt{2}$. Therefore, we need only show that
\[
  \OPT_d(T_d, 1) \xrightarrow{d \to \infty} +\infty. 
\]
To this end, let $\delta_d := 2\sqrt{\ln T_d}$. We will prove that for all sufficiently large $d$,
\begin{equation}
\label{eq:opt-lb-final}
\OPT_d(T_d,\delta_d) \ge \Jung_d \cdot \sqrt{d}.
\end{equation}
Assuming \eqref{eq:opt-lb-final}, the scaling lemma implies
\[
\OPT_d(T_d,1) = \frac{\OPT_d(T_d,\delta_d)}{\delta_d} \ge \frac{\Jung_d \cdot \sqrt{d}}{\delta_d} = \Jung_d \cdot \sqrt{\frac{d}{4\ln T_d}}.
\]
Since $\ln T_d = o(d)$, the right-hand side tends to $+\infty$ as $d \to \infty$.

It remains to establish \eqref{eq:opt-lb-final}.
Consider the a posteriori adversary that answers \(0\) to every query.
For a transcript with query directions \(v_1,\ldots,v_{T_d}\in S^{d-1}\),
the feasible region is
\[
\Phi_{T_d}
:= \bigcap_{t=1}^{T_d}\Bigl\{x\in\R^d:\ |\langle v_t,x\rangle|\le \delta_d\Bigr\}.
\]
Since the improper error is at least half the diameter of the feasible region,
it suffices to show that \(\diam(\Phi_{T_d})\ge \sqrt d\) for all large enough \(d\).

Let \(X,Y\sim \cN(0,I_d)\) be independent.
Using the standard Gaussian tail bound
\[
\Pr(|g|>t)\le e^{-t^2/2}\qquad(g\sim\cN(0,1),\ t\ge 0),
\]
we have, for any query direction \(v\in S^{d-1}\),
\[
\Pr\bigl(|\langle v,X\rangle|>\delta_d\bigr)\le e^{-\delta_d^2/2},
\]
and the same bound holds with \(X\) replaced by \(Y\).
A union bound over all \(T_d\) queries therefore yields
\[
\Pr(X\notin \Phi_{T_d}) + \Pr(Y\notin \Phi_{T_d})
\le 2T_d e^{-\delta_d^2/2}
= \frac{2}{T_d}.
\]
On the other hand, since \(X-Y\sim \cN(0,2I_d)\), we may write
\[
\|X-Y\|^2 \sim 2\,\chi_d^2,
\]
where \(\chi_d^2\) is chi-square with \(d\) degrees of freedom.
Thus
\[
\Pr(\|X-Y\|_2<\sqrt d)=\Pr(\chi_d^2<d/2).
\]
By a standard concentration inequality for the Chi-square distribution 
\[
\Pr(\chi_d^2<d/2)
\le \left(\frac{e^{1/2}}{2}\right)^{d/2},
\]
which decays exponentially in \(d\).

Combining the above bounds, we obtain
\[
\Pr\!\left[X\in\Phi_{T_d},\ Y\in\Phi_{T_d},\ \|X-Y\|\ge \sqrt d\right]
\;\ge\;
1 - \frac{2}{T_d} - \left(\frac{e^{1/2}}{2}\right)^{d/2}.
\]
We may assume \(T_d\ge 3\) for all sufficiently large \(d\), and hence the right-hand side is
strictly positive for all large \(d\).
Hence there exist \(x,y\in\Phi_{T_d}\) with \(\|x-y\|\ge \sqrt d\),
and therefore \(\diam(\Phi_{T_d})\ge \sqrt d\). Apply Jung's Theorem (Theorem~\ref{thm:Jung}). 
This proves \eqref{eq:opt-lb-final} and completes the proof.

\paragraph{Proof of Item 2 (superexponential budgets).}
The argument parallels the upper-bound proof of Main
Theorem~\ref{thm:asymptotic}; the only difference is that here we use the
explicit dimension-dependent covering bounds from
Lemma~\ref{lem:covering-number-bounds}.

Fix \(\alpha\in(0,\pi/2)\) and let \(T_\alpha := \cN(S^{d-1},\alpha)\).
By Lemma~\ref{lem:diam-from-cover}, after querying an \(\alpha\)-cover of
\(S^{d-1}\) the feasible region satisfies
\[
  \diam(\Phi_{T_\alpha}) \le \frac{2\delta}{\cos\alpha}.
\]
By Lemma~\ref{rem:loss-feasible}, the optimal worst-case error at time \(T_\alpha\)
is equal to the Chebyshev radius of the feasible region \(\Phi_{T_\alpha}\). By Jung's theorem
(\Cref{thm:Jung}),
\[
  \rad(\Phi_{T_\alpha}) \le \Jung_d \cdot \diam(\Phi_{T_\alpha})
  \le \Jung_d \cdot \frac{2\delta}{\cos\alpha},
\]
and hence
\[
  \OPT_d(T_\alpha,\delta) \le \Jung_d \cdot \frac{2\delta}{\cos\alpha}.
\]
Using \(\OPT_d(\infty,\delta)=2\delta\Jung_d\) (Main Theorem~\ref{thm:asymptotic}),
we obtain
\[
  \Excess_d(T_\alpha,\delta)
  = \OPT_d(T_\alpha,\delta) - \OPT_d(\infty,\delta)
  \le 2\delta\Jung_d\left(\frac{1}{\cos\alpha}-1\right).
\]
Now set
\[
  \beta_d := \frac{1}{\sqrt[d-1]{T(d)}}.
\]
Since \(\ln T(d)=\omega(d)\), we have \(\frac{\ln T(d)}{d-1}\to +\infty\), hence
\(\sqrt[d-1]{T(d)}=\exp\!\bigl(\tfrac{\ln T(d)}{d-1}\bigr)\to\infty\), and thus
\(\beta_d\to 0\).
By Lemma~\ref{lem:covering-number-bounds} (applied with \(\alpha=\beta_d\)),
we have \(T(d)\ge \cN(S^{d-1},\beta_d)=T_{\beta_d}\) for all sufficiently large
\(d\).
Using monotonicity in \(T\) (Lemma~\ref{lem:opt-monotone}), and therefore also of
\(\Excess_d(\cdot,\delta)\), we get
\[
  \Excess_d(T(d),\delta)\le \Excess_d(T_{\beta_d},\delta)
  \le 2\delta\Jung_d\left(\frac{1}{\cos\beta_d}-1\right).
\]
Finally, \(2\delta \cdot \Jung_d=\sqrt{\tfrac{2d}{d+1}}\,\delta<\sqrt2\,\delta\), and
since \(\beta_d\to 0\) and \(\cos\) is continuous with \(\cos 0=1\), the right-hand
side tends to \(0\). This proves the claim.
\end{proof}

\section{Improper reconstruction}
\label{sec:improper}

In this section, we analyze an \emph{improper} variant of the reconstruction game. In the improper reconstruction model, the reconstructor does not output a single estimate \(\hat x_T\in\R^d\). Instead, after observing the interaction transcript, it produces a prediction rule \(G:S^{d-1}\to\R\), whose goal is to predict future
answers \(\langle x^*,v\rangle\) for every direction \(v\in S^{d-1}\).

To define the optimal reconstruction error for the improper version, we must specify a loss for a prediction rule relative to the secret point \(x^*\).
In this work, we use a worst-case loss: after \(T\) rounds, imagine an additional ``evaluation round'' in which the adversary chooses the worst direction \(v \in S^{d-1}\).
The loss of a prediction rule \(\hat G_T : S^{d-1} \to \R\) is then
\[
  \sup_{v \in S^{d-1}}
  \big|\hat G_T(v) - \langle x^*, v\rangle\big|.
\]
\begin{definition}[Improper optimal reconstruction error]
\label{def:optimal-improper}
Fix a dimension \(d \ge 1\) and work in \(\R^d\).
Let \(\RC_{\improper}\) be the set of improper reconstruction strategies for \(T\)
rounds, i.e., strategies that after \(T\) rounds output a prediction rule
\(\hat G_T : S^{d-1}\to\R\).
We define the \emph{improper optimal reconstruction error} by
\[
  \OPT_d^{\improper}(T,\delta)
  := \inf_{\cR \in \RC_{\improper}}\ 
     \sup_{x^*\in\R^d}\ 
     \sup_{\cA \in \ADV(x^*)}\ 
     \sup_{v \in S^{d-1}}
     \bigl|\hat G_T(v) - \langle x^*, v \rangle\bigr|.
\]
Here \(\ADV(x^*)\) denotes the set of adversary response strategies consistent
with the secret point \(x^*\).
All strategies are assumed deterministic.
\end{definition}

The performance of an improper reconstructor is likewise governed by the
geometry of the feasible region, as in the usual reconstruction setting.
Recall that given a transcript $(v_1,r_1),\dots,(v_T,r_T)$ at noise level
$\delta>0$, the feasible region after $T$ rounds is
\[
  \Phi_T
  :=
  \Bigl\{
  x\in\R^d :
  |\langle x,v_t\rangle-r_t|\le \delta
  \ \text{for all } t\in[T]
  \Bigr\}.
\]
As in the usual reconstruction setting, we may take the adversary to be
a posteriori. Once the transcript is fixed, the adversary may choose any secret
point \(x^*\in \Phi_T\). The relevant geometric invariant is now the diameter:
given a feasible region \(\Phi_T\), the worst-case improper error is exactly
\(\diam(\Phi_T)/2\).

Indeed, for any direction \(v\), the set of possible values of
\(\langle x^*,v\rangle\) as \(x^*\) ranges over \(\Phi_T\) is the projection of
\(\Phi_T\) onto the line spanned by \(v\), hence it forms an interval.
If the diameter of \(\Phi_T\) is attained along a line parallel to \(v\), then
regardless of the prediction made by the reconstructor, the adversary can choose
a secret point \(x^*\in\Phi_T\) such that the prediction error in direction \(v\)
is at least \(\diam(\Phi_T)/2\).

Conversely, if the reconstructor predicts, for each direction \(v\), the
midpoint of this projection interval, then the prediction error in direction
\(v\) is at most \(\diam(\Phi_T)/2\). Therefore, in the improper model, the
worst-case error induced by a feasible region \(\Phi_T\) is exactly
\[
  \frac12\,\diam(\Phi_T).
\]
See \Cref{fig:improper-feasible} for a geometric illustration.

\begin{figure}[t]
\begin{center}
    \includegraphics[width=0.5\textwidth]{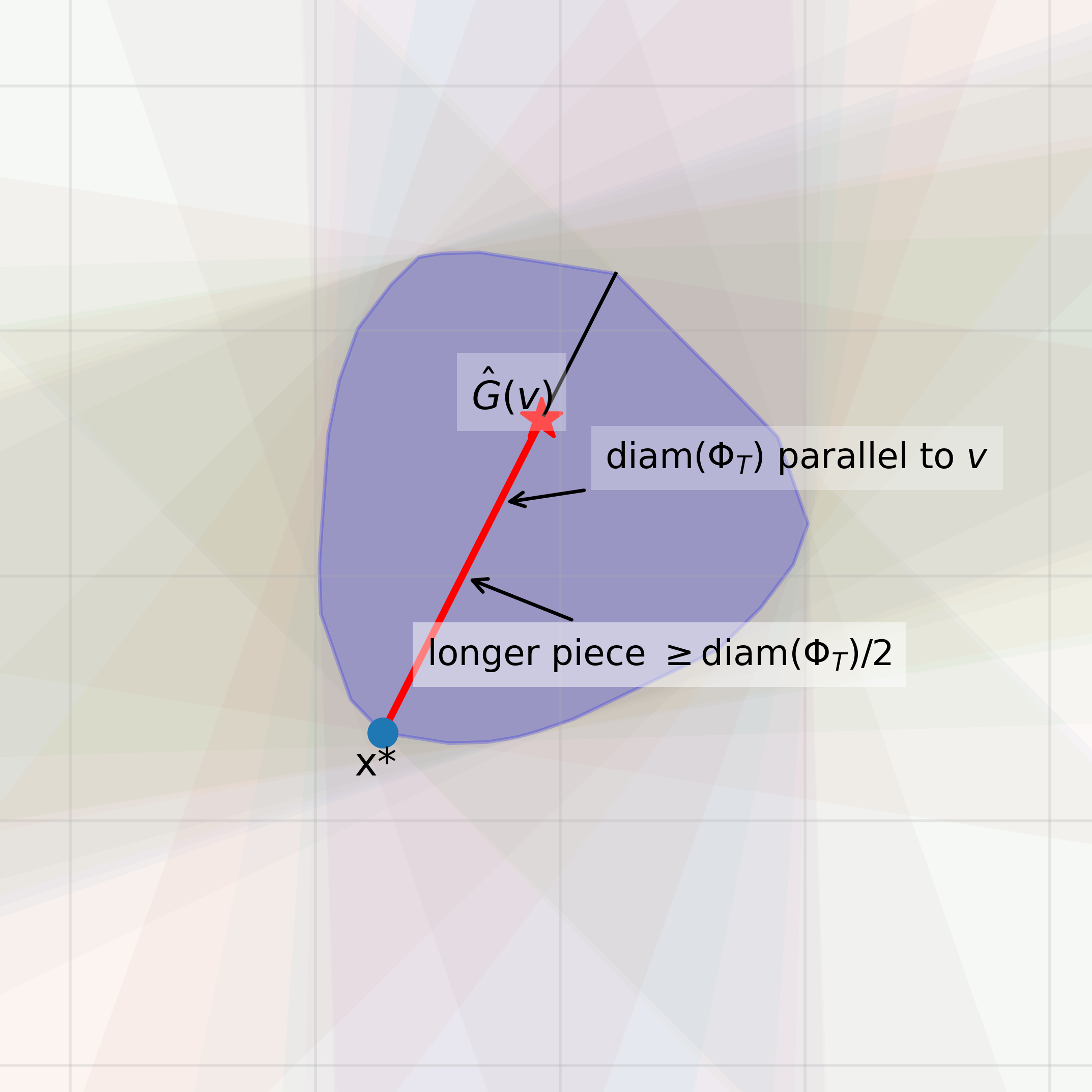}
    \caption{For any prediction rule \(\hat G\) there exists a secret point
    \(x^* \in \Phi_T\) and a direction \(v\) such that
    \(\bigl|\hat G(v) - \langle x^*, v\rangle\bigr| \ge \diam(\Phi_T)/2\).}
    \label{fig:improper-feasible}
\end{center}
\end{figure}

Formally, this observation is captured by the following lemma. We omit the
proof, as it is a direct formalization of the argument above.

\begin{lemma}[Improper OPT via the feasible region]
\label{lem:improper-loss-feasible}
Fix a transcript with a nonempty feasible region \(\Phi_T\).
For any reconstructor output \(\hat G : S^{d-1} \to \R\),
\[
\sup_{x^*\in\Phi_T}\ \sup_{v\in S^{d-1}}
\bigl|\hat G(v) - \langle x^*, v\rangle\bigr|
\;\ge\; \frac12\,\diam(\Phi_T).
\]
Moreover, equality holds for a prediction rule outputting the midpoint of the projection of \(\Phi_T\) onto the line spanned by queried direction \(v\).
Consequently,
\[
\OPT^\improper_d(T,\delta)
=\inf_{\cR}\ 
  \sup_{\substack{\text{feasible transcripts}\\ \text{generated against }\cR}}
  \frac12\,\diam(\Phi_T).
\]
\end{lemma}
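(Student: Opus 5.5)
The proof splits into a pointwise statement about a fixed feasible region and a minimax consequence, mirroring Lemma~\ref{rem:loss-feasible}.

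\textbf{Lower bound.} Fix a nonempty feasible region \(\Phi_T\) and any rule \(\hat G\). If \(\Phi_T\) is unbounded, for instance when the queries do not span \(\R^d\), pick a direction in which \(\Phi_T\) contains an unbounded ray \(\{x_0+s u: s\ge 0\}\). Then \(\langle x_0+su,u\rangle\to\infty\), so the supremum of \(|\hat G(u)-\langle x^*,u\rangle|\) over \(x^*\in\Phi_T\) is infinite and the inequality holds trivially.

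If \(\Phi_T\) is bounded, it is compact, being an intersection of closed strips. Hence \(\diam(\Phi_T)=\|a-b\|\) for some \(a,b\in\Phi_T\). If \(a=b\) the claim is trivial. Otherwise set \(v=(a-b)/\|a-b\|\), so that \(\langle a,v\rangle-\langle b,v\rangle=\diam(\Phi_T)\). By the triangle inequality,
\[
\max\bigl(|\hat G(v)-\langle a,v\rangle|,\ |\hat G(v)-\langle b,v\rangle|\bigr)\ \ge\ \tfrac12\diam(\Phi_T),
\]
so one of \(x^*=a\) or \(x^*=b\) witnesses the bound.

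\textbf{Equality for the midpoint rule.} Define \(\hat G(v)=\tfrac12\bigl(\sup_{x\in\Phi_T}\langle x,v\rangle+\inf_{x\in\Phi_T}\langle x,v\rangle\bigr)\). For every \(x^*\in\Phi_T\), the error in direction \(v\) is at most half the width \(\sup_{x,y\in\Phi_T}\langle x-y,v\rangle\). Since \(\|v\|=1\), Cauchy--Schwarz bounds this width by \(\diam(\Phi_T)\). Taking the supremum over \(v\) and \(x^*\) gives equality. In the unbounded case both sides are \(+\infty\), so there is nothing to prove.

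\textbf{Minimax formula.} Apply the a posteriori adversary of Lemma~\ref{lem:apost}. Its proof is unchanged for improper outputs, since it only concerns reproducing transcripts against a deterministic reconstructor. Once a transcript is fixed, the worst case over \(x^*\) and the adversary equals \(\sup_{x^*\in\Phi_T}\sup_v|\hat G_T(v)-\langle x^*,v\rangle|\). By the two parts above, the best choice of \(\hat G_T\) for that transcript achieves exactly \(\tfrac12\diam(\Phi_T)\), and every choice does at least this badly.

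Because the optimal output depends only on the transcript, we may replace each reconstructor's final output by the midpoint rule without changing its queries; this can only lower its worst-case loss. Taking the supremum over feasible transcripts and the infimum over reconstructors then gives the formula.

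The only delicate point is this interchange of infimum and supremum: the final output is chosen after seeing the transcript, so optimizing it pointwise is legitimate. The rest is routine.
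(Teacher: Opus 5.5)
Your proof is correct and is essentially the argument the paper has in mind. The paper omits a formal proof, calling it a direct formalization of the preceding discussion: the projection onto a line is an interval, the diameter-attaining direction forces error at least \(\frac12\diam(\Phi_T)\), and the midpoint rule achieves it. You supply exactly that formalization, together with the a posteriori adversary reduction of Lemma~\ref{lem:apost} for the minimax identity, and you also handle the unbounded case and the attainment of the diameter explicitly, which the paper leaves implicit.
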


Consequently, any upper (respectively, lower) bound on $\diam(\Phi_T)$ that holds for all transcripts after $T$ rounds immediately yields the corresponding upper (respectively, lower) bound on the improper minimax error, up to the factor $1/2$.

\subsection{Analogue of the Main Theorems for improper reconstruction}

In the improper model surprisingly the asymptotic optimal reconstruction error is strictly smaller than usual asymptotic reconstruction error:
\[
  \OPT_d^\improper(\infty, \delta) := \lim_{T\to\infty}\OPT_d^\improper(T,\delta)=\delta.
\]
Indeed, by Lemma~\ref{lem:improper-loss-feasible}, the loss is exactly $\tfrac12\diam(\Phi_T)$. Querying a sufficiently fine angular covering of the sphere ensures that the diameter of the feasible region is at most $(2+o(1))\delta$, and hence the improper reconstruction error is at most $(1+o(1))\delta$. Conversely, an adversary that answers $0$ to every query ensures that the diameter of the feasible region remains at least $2\delta$, forcing improper error at least $\delta$. 

It follows that the improper excess error is captured by exactly the same strategies that already yield the asymptotic improper error. This stands in contrast to the (usual) reconstruction model: there, controlling
the diameter alone is not sufficient, since the loss is governed by the Chebyshev radius, and one must additionally relate radius to diameter (via Jung-type arguments).

To control the diameter of the feasible region, the reconstructor must query a
sufficiently dense net of directions on the sphere. Quantitatively, this leads to a convergence rate dictated by the angular covering number of $S^{d-1}$.

\begin{theorem}[Improper excess error]
\label{thm:impr-excess-bounds}
Fix \(d \ge 1\). There exist constants $a_d, A_d>0$, depending only on \(d\),
and an integer \(T'_d\ge 1\) such that for all \(T \ge T'_d\),
\[
  \delta\bigl(1+ a_d\,T^{-2/(d-1)}\bigr)
  \;\le\;
  \OPT^\improper_d(T, \delta)
  \;\le\;
  \delta\bigl(1 + A_d\,T^{-2/(d-1)}\bigr).
\]
\end{theorem}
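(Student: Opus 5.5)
By Lemma~\ref{lem:improper-loss-feasible} and Lemma~\ref{lem:scaling-proof}, it suffices to take $\delta=1$. Both bounds then reduce to bounding $\tfrac12\diam(\Phi_T)$ over all transcripts. Note that for $d=1$ the exponent $-2/(d-1)$ is degenerate. There one query already gives diameter exactly $2$, so I would treat $d\ge 2$ as the substantive case.

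\textbf{Upper bound.} Choose $\alpha$ with $\cN_\angle(S^{d-1},\alpha)\le T$. By Lemma~\ref{lem:covering-number-bounds}, we may take $\alpha=(2^{2d}/T)^{1/(d-1)}=4^{d/(d-1)}\,T^{-1/(d-1)}$, which is below $\pi/4$ once $T\ge T'_d$. Querying such a covering, Lemma~\ref{lem:diam-from-cover} gives $\diam\Phi_T\le 2/\cos\alpha$. Since $1/\cos\alpha\le 1+\alpha^2$ for small $\alpha$, the improper error is at most
\[
1+\alpha^2 = 1+16^{d/(d-1)}\,T^{-2/(d-1)}.
\]
This gives $A_d$. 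Monotonicity (Lemma~\ref{lem:opt-monotone}, whose proof also works verbatim in the improper model) handles the leftover queries.

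\textbf{Lower bound.} The adversary answers $0$ to every query, so $\Phi_T=\{x:|\langle v_t,x\rangle|\le 1\ \forall t\}$. This set is symmetric about the origin, so $\diam\Phi_T\ge 2\sup_{x\in\Phi_T}\|x\|$. It therefore suffices to find a unit vector $u$ with $|\langle u,v_t\rangle|\le \cos\theta$ for all $t$, where $\theta\ge c_d T^{-1/(d-1)}$. The point $x=u/\cos\theta$ then lies in $\Phi_T$, and has norm $1/\cos\theta\ge 1+\theta^2/2$.

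Such a $u$ exists by a measure (packing) argument. The set of bad $u$ is the union over $t$ of the two caps of angular radius $\theta$ around $\pm v_t$. Each cap has normalized measure at most $K_d\,\theta^{d-1}$; this is the same volume estimate that underlies the lower bound in Lemma~\ref{lem:covering-number-bounds}. So if $2T K_d\theta^{d-1}<1$, the caps do not cover $S^{d-1}$. Equivalently, $\{\pm v_t\}$ is not a $\theta$-covering, which one can also read off directly from Lemma~\ref{lem:covering-number-bounds}: a set of $2T<\sqrt d\,\theta^{-(d-1)}$ points cannot be a $\theta$-covering. Taking $\theta=(\sqrt d/(2T))^{1/(d-1)}\cdot(1-\epsilon)$ yields the required $u$. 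The resulting bound is
\[
\tfrac12\diam\Phi_T\ge 1+\tfrac12\theta^2\ge 1+a_d\,T^{-2/(d-1)}.
\]
Because the queries may be adaptive, the argument has to apply to whatever directions were actually asked. It does, since it is applied after the fact to the final transcript, and the all-zero adversary is oblivious.

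The main obstacle is the only non-routine step: making sure the covering-number lower bound is used in exactly the right form. Lemma~\ref{lem:covering-number-bounds} bounds the minimal covering size, so any set of fewer points fails to cover, and that is precisely the statement needed. It remains to check that the symmetrized set $\{\pm v_t\}$ of size $2T$ is handled, and that $\theta<\pi/2$ holds once $T\ge T'_d$. Everything else is a Taylor expansion of $1/\cos$.
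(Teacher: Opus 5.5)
Your proposal is correct and follows the paper's proof essentially step for step. For the upper bound you query an angular covering and apply Lemma~\ref{lem:diam-from-cover}; for the lower bound you use the all-zero adversary and the covering-number lower bound applied to $\{\pm v_t\}$ to exhibit the antipodal feasible pair $\pm u/\cos\theta$. Your parameter choices $\alpha=(2^{2d}/T)^{1/(d-1)}$ and $\theta\approx(\sqrt d/(2T))^{1/(d-1)}$ are in fact more careful than the paper's $8T^{-1/(d-1)}$ and $T^{-1/(d-1)}$: for small $d$, Lemma~\ref{lem:covering-number-bounds} does not quite give the paper a covering of size at most $T$, nor the bound $\cN_\angle(S^{d-1},\alpha)>2T$. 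You also rightly flag that $d=1$ is degenerate.
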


\begin{proof}
\paragraph{Upper bound.}
Let \(T \ge 16^{d-1}\). Then
\[
  \alpha \;:=\; \frac{8}{T^{1/(d-1)}} \;\le\; \frac{8}{16} \;=\; \frac12 \;<\; \frac{\pi}{4}.
\]
By Lemma~\ref{lem:covering-number-bounds}, the sphere admits an angular
\(\alpha\)-covering of size at most \(T\). Fix such a covering
\(\cV=\{v_1,\dots,v_T\}\subseteq S^{d-1}\).
Consider the reconstructor that, in rounds \(i=1,\dots,T\), queries \(v_i\).
Let \(r_i\) be the corresponding answers, satisfying
\(|r_i-\langle x^*,v_i\rangle|\le 1\), and let \(\Phi_T\) be the feasible
region induced by the transcript:
\[
\Phi_T \;=\;\Bigl\{x\in\R^d:\ |\langle x,v_i\rangle-r_i|\le 1\ \text{ for all }i\in[T]\Bigr\}.
\]
By Lemma~\ref{lem:diam-from-cover}, for any answers consistent with noise level \(1\)
we have
\[
  \diam(\Phi_T)\ \le\ \frac{2}{\cos\alpha}.
\]
Hence,
\[
  \sup_{\text{adversaries}}\ \diam(\Phi_T) \;\le\; \frac{2}{\cos\alpha}.
\]
Since \(\alpha\in[0,\pi/4]\), we may use the elementary inequality
\(\frac{1}{\cos\alpha}\le 1+2\alpha^2\) to obtain
\[
  \sup_{\text{adversaries}}\ \diam(\Phi_T)
  \;\le\; 2\bigl(1+2\alpha^2\bigr)
  \;=\; 2 + O(\alpha^2).
\]
Finally
\[
  \OPT_d^\improper(T)
  \;\le\;
  \frac12\cdot \sup_{\text{adversaries}}\diam(\Phi_T)
  \;\le\; \frac12\cdot \frac{2}{\cos\alpha}
  \;=\; \frac{1}{\cos\alpha}
  \;\le\; 1 + O(\alpha^2).
\]
Substituting \(\alpha = 8\,T^{-1/(d-1)}\) yields
\[
  \OPT_d^\improper(T) \;\le\; 1 + O\!\bigl(T^{-2/(d-1)}\bigr),
\]
as claimed.

\paragraph{Lower bound.}
Consider the adversary that answers \(y_t=0\) for every query \(v_t\).
Fix any (possibly adaptive) sequence of queries \(v_1,\dots,v_T\). 

Let \(\alpha>0\) be 
\[
\alpha:= \frac{1}{\sqrt[d-1]{T}}. 
\]
By Lemma~\ref{lem:covering-number-bounds} one has \(\cN_\angle(S^{d-1},\alpha) > 2T\) and hence, there exists \(v\in S^{d-1}\) such that for every \(i\in[T]\),
\(\rho(v,v_i)\ge \alpha\) and \(
  \rho(v,-v_i)\ge \alpha\), where \(\rho\) denotes the angular metric (see \Cref{eq:angular-metric}).
Equivalently, for all \(i\in[T]\), \[|\langle v, v_i\rangle| \;\le\; \cos\alpha.\]
Define two antipodal points
\[
  x := \frac{v}{\cos\alpha},
  \qquad
  y := -\frac{v}{\cos\alpha}.
\]
Then for every query \(v_i\) one has \( |\langle x, v_i\rangle| \le 1\), and similarly for \(y\). Hence \(x,y\in\Phi_T\), and therefore
\[
  \diam(\Phi_T)\;\ge\;\|x-y\|_2 \;=\; \frac{2}{\cos\alpha}.
\]
This implies
\[
  \OPT_d^\improper(T)\ \ge\ \frac{1}{\cos\alpha}.
\]
Expanding \(1/\cos\alpha = 1 + \Theta(\alpha^2)\) for small \(\alpha\), we obtain
\[
  \OPT_d^\improper(T) - 1 \;\ge\; \Theta(\alpha^2)
  \;=\; \Theta\!\bigl(T^{-2/(d-1)}\bigr),
\]
which gives the desired lower bound.
\end{proof}

\paragraph{Analogue of Main Theorem~\ref{thm:dim-bounds}}

Main Theorem~\ref{thm:dim-bounds} applies equally to the improper excess error
\(\OPT^\improper_d(T,\delta)-\delta\), since its proof proceeds by controlling
the diameter of the feasible region and then translating this control into a
bound on the loss.  In the usual reconstruction setting, this translation uses
Jung's theorem for the upper bound and the triangle inequality for the lower bound:
\[
\frac12\,\diam(\Phi_T)
\;\le\;
\rad(\Phi_T)
\;\le\;
\sqrt{\frac{d}{2(d+1)}}\,\diam(\Phi_T).
\]
In the improper setting, the loss is exactly \(\diam(\Phi_T)/2\), so no passage
through the Chebyshev radius is needed. Hence 
\begin{enumerate}[topsep=3pt,itemsep=1pt,parsep=0pt]
  \item If \(\ln T(d)=o(d)\) and \(T(d)\ge 3\) for all sufficiently large \(d\),
  then
  \[
    \OPT^\improper_d(T(d),\delta) \xrightarrow{d\to\infty} +\infty.
  \]
  \item If \(\ln T(d)=\omega(d)\) for all sufficiently large \(d\), then
  \[
    \OPT^\improper_d(T(d),\delta) \xrightarrow{d\to\infty} \delta.
  \]
\end{enumerate}

\section{Open directions}
\label{sec:open-directions}

We end with two variants of the reconstruction problem.

One natural variant is the nonadaptive linear reconstruction game, in which
all query directions are chosen before any answers are observed. The proof of
the doubly-exponential upper bound in Main Theorem~\ref{thm:excess-bounds}
relies on adaptivity: the refinement phase queries directions parallel to the
edges of a simplex extracted from the current feasible region. Related
reconstruction and query-release problems are studied extensively in the
privacy literature \citep{DinurN03,BlumLigettRoth13}, but those works primarily
concern discrete databases, subset/counting queries, and private release of
answers to query classes. To the best of our knowledge, the corresponding
minimax problem for nonadaptive approximate linear queries in \(\mathbb{R}^d\)
has not been characterized.

A second, closely related question is how the model should be extended when
some answers are allowed to violate the \(\delta\)-accuracy promise. In the
present model, every answer is consistent with the unknown point \(x^*\), and
hence \(x^*\) remains inside the feasible region throughout the interaction;
this invariant is central to our analysis, and once it fails the geometric
arguments used here do not apply directly. In related settings, several forms
of unreliable answers have been studied, including arbitrary corruptions in
reconstruction and decoding problems in the privacy literature
\citep{DworkMcSherryTalwar07}, and missing, faulty, or maliciously corrupted
answers in learning from membership queries
\citep{angluin1994randomly,angluin1997malicious}. Identifying the right robustness notions for the linear reconstruction game, and determining the corresponding optimal error and rates of convergence, remain open.

\acks{We thank the anonymous reviewers and the meta-reviewer for their careful reading
and helpful comments, which improved the presentation of the paper. We are
especially grateful to the meta-reviewer for suggesting the nonadaptive and
corrupted-answer variants discussed in Section~\ref{sec:open-directions}.

Shay Moran is a Robert J.\ Shillman Fellow and acknowledges support by Israel
PBC-VATAT, by the Technion Center for Machine Learning and Intelligent Systems
(MLIS), and by the European Union (ERC, GENERALIZATION, 101039692).

Yuval Filmus was supported by ISF grant 507/24.

Elizaveta Nesterova was supported by the Milner Fellowship for Ph.D. Students
and by the European Union (ERC, GENERALIZATION, 101039692).

Views and opinions expressed are however those of the author(s) only and do not
necessarily reflect those of the European Union or the European Research
Council Executive Agency. Neither the European Union nor the granting authority
can be held responsible for them.}

\bibliography{bib}
\appendix

\section{Robust Jung Theorem}
\label{app:Jung_approx}

In this appendix we formalize and prove the main technical tool used in the proof
of Main Theorem~\ref{thm:excess-bounds}, concerning sets whose diameter and
Chebyshev radius are close to extremal in Jung’s inequality.

We work here with a slightly different formulation than in the Technical
Overview; it will turn out that the two formulations are equivalent.
The main notion we use is the Hausdorff distance.

\subsection{Supporting definitions and lemmas}
\begin{definition}[Hausdorff distance]
For nonempty compact sets \(A,B \subset \R^d\), the \emph{Hausdorff distance} is
defined by
\[
  \dist_H(A,B)
  := \max\Bigl\{
      \sup_{a \in A} \inf_{b \in B} \|a-b\|_2,
      \sup_{b \in B} \inf_{a \in A} \|a-b\|_2
  \Bigr\}.
\]
\end{definition}

The fact that \(\dist_H\) is indeed a metric on the space of nonempty compact
subsets of \(\R^d\) can be found in, e.g.,~\cite{IllanesNadler1999}.

For a set \(S \subset \R^d\) and \(r>0\), we denote by
\[
  S + B(r) := \{x+y : x \in S,\ \|y\|_2 \le r\}
\]
the Minkowski sum of \(S\) with the Euclidean ball of radius \(r\).

It follows directly from the definitions that
\(\dist_H(S_1,S_2) \le r\) if and only if
\[
  S_1 \subseteq S_2 + B(r)
  \qquad\text{and}\qquad
  S_2 \subseteq S_1 + B(r),
\]
that is, \(S_1\) and \(S_2\) are \(r\)-close in the sense used in the Technical Overview.
When \(r=0\), this condition reduces to \(S_1 = S_2\).

\paragraph{Chebyshev radius.}
For a bounded set \(S \subset \R^d\), its \emph{Chebyshev radius} is
\[
  \rad(S) := \inf_{x \in \R^d} \sup_{y \in S} \|x-y\|_2.
\]
Any point attaining the infimum is called a \emph{Chebyshev center} of \(S\).

Recall Jung's theorem, for a proof see \cite[Theorem~3.3]{gruber2007convex}.

\begin{theorem*}[Jung's theorem]
Let \(S \subseteq \R^d\) be a bounded set with diameter \(D := \diam(S)\).
Then its Chebyshev radius satisfies
\[
  \rad(S) \le \sqrt{\frac{d}{2(d+1)}}\, D .
\]
The factor \(\sqrt{\frac{d}{2(d+1)}}\) is called \emph{Jung's constant}; we
denote it by \(\Jung_d\).
\end{theorem*}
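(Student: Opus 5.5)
We plan to follow the classical proof of Jung's theorem, specialized to a Chebyshev center, which we may translate to the origin. Let $R := \rad(S)$ and $D := \diam(S)$; by compactness (pass to the closure if needed, which changes neither quantity) we may assume $S$ is compact. The first step is the standard characterization of the Chebyshev center: the origin is the center of the minimal enclosing ball if and only if it lies in the convex hull of the contact set $S \cap \partial B(0,R)$. If it did not, a separating hyperplane would give a direction $u$ such that $\langle u, x\rangle > 0$ for every contact point $x$. Moving the center slightly along $u$ would then strictly decrease the maximal distance, using compactness to control the points of $S$ that are not contact points. Carath\'eodory's theorem then gives contact points $x_0,\dots,x_k$ with $k \le d$ and weights $\lambda_i \ge 0$, $\sum_i \lambda_i = 1$, such that $\sum_i \lambda_i x_i = 0$ and $\|x_i\| = R$ for all $i$.

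The second step is the averaging computation. For each fixed $j$ we have
\[
\sum_i \lambda_i \|x_i - x_j\|^2 = \sum_i \lambda_i\bigl(2R^2 - 2\langle x_i, x_j\rangle\bigr) = 2R^2 ,
\]
because $\sum_i \lambda_i x_i = 0$. The term with $i=j$ vanishes, and every other term is at most $D^2$. Hence $2R^2 \le (1-\lambda_j) D^2$. Summing over $j = 0,\dots,k$ gives $2(k+1) R^2 \le k D^2$, so
\[
R^2 \le \frac{k}{2(k+1)} D^2 \le \frac{d}{2(d+1)} D^2 ,
\]
where the last inequality holds because $k/(k+1)$ is increasing in $k$ and $k \le d$. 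Taking square roots yields $\rad(S) \le \Jung_d\, D$.

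The main obstacle is the first step: showing rigorously that the Chebyshev center exists and lies in the convex hull of the contact points. Existence follows from continuity and coercivity of the map $c \mapsto \sup_{y\in S}\|c-y\|$. For the perturbation argument, we choose $\epsilon>0$ such that every point of $S$ within distance $\epsilon$ of the sphere $\partial B(0,R)$ still satisfies $\langle u, y\rangle > \eta$ for some $\eta > 0$; this is possible by compactness of $S$ and continuity of $y \mapsto \langle u, y\rangle$. Then a small enough step $t u$ reduces the distance to these points, while the remaining points of $S$ are at distance at most $R - \epsilon$ and so stay inside the shrunken ball. This contradicts the minimality of $R$. All remaining steps are routine algebra.
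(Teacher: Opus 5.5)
Your proof is correct and is the classical argument. The paper does not prove Jung's theorem itself (it cites Gruber), but in the proof of Lemma~\ref{lem:approximate_jung} it runs exactly your computation: the center sits at the origin in the convex hull of the contact points, $\sum_t \lambda_t \|x_s-x_t\|^2 = 2R^2$, and one sums over $s$. The differences are cosmetic: the paper first reduces to $d+1$ points via Helly (Lemma~\ref{lem:minimum-witness-cardinality}) and takes the convex-hull property of the center for granted, whereas you prove that property directly for compact $S$ and bound the number of contact points by Carath\'eodory.
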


\smallskip

Recall that a \emph{witness} of \(S\) is a subset \(W \subseteq S\) consisting of \(d+1\) points such that 
\[
  \rad(W) \ge \Jung_d.
\]
The following lemma is a standard consequence of Helly’s theorem for convex sets in $\mathbb{R}^d$; see, for example, Chapter~3 of~\cite{gruber2007convex}.

\begin{lemma}
\label{lem:minimum-witness-cardinality}
Let $S \subset \R^d$ be a compact set. There exists a subset $W \subseteq S$ with $|W| \le d+1$ such that $\rad(W) = \rad(S)$. 
\end{lemma}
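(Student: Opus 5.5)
The plan is to reduce the statement to Helly's theorem applied to the family of closed balls centered at points of $S$. Write $R := \rad(S)$; since $S$ is compact and nonempty, $R$ is finite. For every $r \ge 0$ consider the family of closed balls $\{B(s,r) : s \in S\}$. A point $c$ satisfies $\sup_{s \in S}\|c - s\| \le r$ exactly when $c \in \bigcap_{s\in S} B(s,r)$. Hence $\rad(S)$ is the infimum of those $r$ for which this intersection is nonempty, and the same description holds for $\rad(W)$ with $W \subseteq S$.

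Monotonicity gives one inequality for free: every $W \subseteq S$ satisfies $\rad(W) \le \rad(S)$. It therefore suffices to find $W$ with $|W|\le d+1$ and $\rad(W) \ge R$. Suppose instead that every subset $W \subseteq S$ with $|W| \le d+1$ has $\rad(W) < R$. The first step is to make this strict inequality uniform. The map $(s_0,\dots,s_d) \mapsto \rad(\{s_0,\dots,s_d\})$ is continuous on $S^{d+1}$; indeed it is $1$-Lipschitz with respect to the maximum of the coordinate distances, since moving each point by at most $\epsilon$ changes the radius by at most $\epsilon$. Because $S^{d+1}$ is compact, this map attains its maximum $R' < R$. (Tuples with repeated entries simply correspond to subsets of size less than $d+1$.)

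The second step applies Helly's theorem with radius $R'$. For any $d+1$ points of $S$, the balls $B(s_i,R')$ have a common point, namely a Chebyshev center of those points; this center exists because a finite set is compact. Since the balls are compact convex sets in $\R^d$, Helly's theorem in its version for infinite families of compact convex sets shows that $\bigcap_{s\in S}B(s,R')\neq\varnothing$. Consequently $\rad(S)\le R'<R$, which is a contradiction. So some $W$ with $|W|\le d+1$ satisfies $\rad(W)=R$.

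The only delicate point is the passage from pointwise strict inequality to the uniform bound $R'<R$, which is where compactness of $S$ is used. This step matters because applying Helly directly at radius $R$ only yields intersection at radius $R$, which carries no information. If $S$ has fewer than $d+1$ points, the statement is trivial with $W = S$.
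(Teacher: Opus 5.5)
Your proof is correct and uses the same ingredients as the paper's: Helly's theorem for the infinite family of compact balls $B(s,r)$, $s\in S$, together with compactness of $S^{d+1}$ and continuity of $\rad$ on $(d+1)$-tuples. The difference is only organizational. You take the maximum $R'$ of $\rad$ over $S^{d+1}$ and argue by contradiction, whereas the paper applies Helly at radii $\rad(S)-1/n$ and passes to a convergent subsequence of the resulting tuples; your explicit $1$-Lipschitz bound also justifies the continuity that the paper only asserts.
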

\begin{proof}
By definition of the Chebyshev radius, for every \(r < \rad(S)\) we have
\[
  \bigcap_{x\in S} B(x,r) = \emptyset .
\]
By Helly's theorem, there exist points \(x_0,\ldots,x_d \in S\) such that
\[
  \bigcap_{i=0}^{d} B(x_i,r) = \emptyset .
\]
Equivalently, \(\rad(\{x_0,\ldots,x_d\}) > r\).
Now set \(r_n := \rad(S) - \tfrac1n\). Applying the above with \(r=r_n\), we can
choose a \((d{+}1)\)-tuple
\[
  W_n := (x^{(n)}_0,\ldots,x^{(n)}_d)\in S^{d+1}
\]
such that \(\rad(W_n) > r_n\), and hence \(\rad(W_n) \ge \rad(S)-\tfrac1n\).
Since \(S\) is compact, \(S^{d+1}\) is compact as well, so \((W_n)\) has a convergent subsequence \(W_{n_k}\to W\in S^{d+1}\).
By continuity of the Chebyshev radius for finite point sets,
\[
\rad(W)\;\ge\;\limsup_{k\to\infty}\rad(W_{n_k})\;\ge\;\rad(S).
\]
On the other hand, \(W\subseteq S\) implies \(\rad(W)\le \rad(S)\). 

Therefore \(\rad(W)=\rad(S)\), as claimed.
\end{proof}

In particular, if $\text{rad}(S) \ge \Jung_d$, then $W$ is a witness. If $|W| < d+1$, a witness of cardinality exactly $d+1$ can be constructed by duplicating elements of $W$.

\subsection{Proof of Robust Jung Theorem}

\bigskip
\begin{rtheorem}{\ref{thm:robust-jung}}
Let $d \ge 1$ and let $S \subset \R^d$ be a compact set satisfying
\[
\diam(S) \le 1+\beta
\qquad\text{and}\qquad
\rad(S) \ge \Jung_d ,
\]
for some $0 \le \beta \le \beta_d$, where $\beta_d>0$ depends only on $d$.
Then, there exists a set of points
\[
\Delta = \{x_0,x_1,\dots,x_d\} \subset \R^d
\]
forming the vertex set of a regular simplex of edge length $1$ such that every witness \(W \subseteq S\) is \(C_d\beta\)-close to \(\Delta\),
for a constant $C_d>0$ depending only on $d$, i.e. \(\dist_H(\Delta, W) \le C_d \beta\).  
\end{rtheorem}

\bigskip

To prove this theorem, we first show that every witness is close (in Hausdorff distance) to the vertex set of a regular simplex. We then combine this with a stability statement for regular simplices (proved in Appendix~\ref{app:step_2}), which implies that any two such vertex sets must be
close to each other. Putting these two ingredients together yields the claim.

\begin{lemma}[The witnesses is nearly regular]
\label{lem:approximate_jung}
Suppose closed set $S \subseteq \R^d$ has diameter $1$ and Chebyshev radius $r \ge (1-\epsilon)\sqrt{\frac{d}{2(d+1)}}$. Then there exist $x_0,\dots,x_d \in S$ such that $\|x_i - x_j\|_2\ge 1 - O(d^2\epsilon)$ for all $i \neq j$.
\end{lemma}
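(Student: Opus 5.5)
We start from Lemma~\ref{lem:minimum-witness-cardinality}: since $S$ is compact (closed and of finite diameter), there are points $x_0,\dots,x_d\in S$, possibly with repetitions, such that $\rad(\{x_0,\dots,x_d\}) = r$. We then run the standard proof of Jung's theorem on this finite set and track the slack. Translate so that the Chebyshev center $c$ of $W=\{x_0,\dots,x_d\}$ is the origin. Drop the points not on the sphere of radius $r$, since removing them does not change the minimal enclosing ball. The center then lies in the convex hull of the remaining points: $0=\sum_i \lambda_i x_i$ with $\lambda_i\ge 0$, $\sum\lambda_i=1$ and $\|x_i\|=r$. The usual identity gives, for each $j$,
\[
\sum_i \lambda_i \|x_i-x_j\|^2 = \sum_i\lambda_i\bigl(\|x_i\|^2+\|x_j\|^2\bigr) - 2\Bigl\langle \sum_i\lambda_i x_i, x_j\Bigr\rangle = 2r^2 .
\]
Multiplying by $\lambda_j$ and summing over $j$ yields
\[
2r^2=\sum_{i\ne j}\lambda_i\lambda_j\|x_i-x_j\|^2 \le \Bigl(1-\sum_i\lambda_i^2\Bigr)\cdot 1 .
\]

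Next we extract the consequences of near-equality. Since $2r^2\ge (1-\epsilon)^2\frac{d}{d+1}\ge (1-2\epsilon)\frac{d}{d+1}$, the previous inequality gives $\sum_i\lambda_i^2\le \frac{1}{d+1}+\frac{2d\epsilon}{d+1}$. Two conclusions follow. First, exactly $d+1$ points must survive on the sphere with nonzero weight: otherwise $\sum\lambda_i^2\ge 1/d$, which exceeds this bound once $\epsilon<c/d^2$. For larger $\epsilon$ the claim $1-O(d^2\epsilon)$ is vacuous, so we may assume $\epsilon$ is small. Second, $\sum_i(\lambda_i-\frac1{d+1})^2=\sum\lambda_i^2-\frac1{d+1}\le 2\epsilon$, so every $\lambda_i\ge \frac{1}{d+1}-\sqrt{2\epsilon}$. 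In particular each $\lambda_i\gtrsim 1/d$ when $\epsilon\lesssim 1/d^2$.

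Finally we bound a single edge. Set $\eta_{ij}:=1-\|x_i-x_j\|^2\ge 0$. Then
\[
2r^2 = \Bigl(1-\sum\lambda_i^2\Bigr) - \sum_{i\ne j}\lambda_i\lambda_j\eta_{ij},
\]
which combined with the lower bound on $2r^2$ gives
\[
\sum_{i\ne j}\lambda_i\lambda_j\eta_{ij}\le \frac{d}{d+1}-(1-2\epsilon)\frac{d}{d+1} \le 2\epsilon,
\]
using $\sum\lambda_i^2\ge\frac1{d+1}$. Each pair appears twice with weight $\lambda_i\lambda_j\ge c/d^2$, so $\eta_{ij}\le O(d^2\epsilon)$ and hence $\|x_i-x_j\|\ge\sqrt{1-O(d^2\epsilon)}\ge 1-O(d^2\epsilon)$.

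The main obstacle is the lower bound on the weights $\lambda_i$, which rules out the degenerate case where some point carries negligible weight and its edges are uncontrolled. This is exactly where the $d^2$ comes from. It is handled by the variance bound $\sum(\lambda_i-\frac1{d+1})^2\le 2\epsilon$, together with the reduction to $\epsilon\le c/d^2$, since otherwise the statement is trivial.
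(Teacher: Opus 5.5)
Your proof is correct and follows essentially the same route as the paper. You take a Helly witness, write the Chebyshev center as a convex combination $\sum_i \lambda_i x_i = 0$ of the contact points, and track the slack in Jung's identity to force exactly $d+1$ contact points, bound every $\lambda_i$ below by order $1/d$, and bound each edge defect by $O(d^2\epsilon)$. The only difference is bookkeeping: you use the symmetric identity $2r^2=\sum_{i\ne j}\lambda_i\lambda_j\|x_i-x_j\|^2$ with a variance bound for the weights (losing a square root, which is harmless since both arguments need $\epsilon = O(1/d^2)$), whereas the paper uses the per-vertex identity $1-\lambda_s = 2r^2+\sum_{t\ne s}\lambda_t\delta_{st}$, which gives the linear bound $\lambda_s \ge \frac{1}{d+1}-2d\epsilon$ directly.
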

\begin{proof}
Lemma~\ref{lem:minimum-witness-cardinality} implies that there exist $x_0,\dots,x_d \in S$ whose Chebyshev radius is $r$.

Suppose without loss of generality that the minimal enclosing ball is centered at the origin, and let $i_0,\dots,i_{n-1}$ be the indices of the $x_i$ such that $\|x_i\|_2= r$. The origin is contained in the convex hull of the $x_{i_s}$'s, say $\sum_s \lambda_s x_{i_s} = 0$, where $\lambda_s \ge 0$ and $\sum_s \lambda_s = 1$.

Write $\|x_{i_s} - x_{i_t}\|^2 = 1 - \delta_{st}$. For all $s$,
\[
 1 - \lambda_s = \sum_{t \ne s} \lambda_t = \sum_{t \ne s} \lambda_t (\|x_{i_s} - x_{i_t}\|^2 + \delta_{st}).
\]
We have
\[
 \sum_{t \neq s} \lambda_t \|x_{i_s} - x_{i_t}\|^2 = \sum_t \lambda_t \|x_{i_s} - x_{i_t}\|^2 = \sum_t \lambda_t (2r^2 - 2\langle x_{i_s}, x_{i_t} \rangle) = 2r^2,
\]
since $\sum_t \lambda_t x_{i_t} = 0$. Therefore
\begin{equation} \label{eq:lambda}
 1 - \lambda_s = 2r^2 + \sum_{t \neq s} \lambda_t \delta_{st}.
\end{equation}

Summing over all $s$,
\[
 n - 1 = 2nr^2 + \sum_{s \neq t} \lambda_t \delta_{st},
\]
and so
\[
 r^2 = \frac{n-1}{2n} - \frac{1}{2n} \sum_s \sum_{t \neq s} \lambda_t \delta_{st}.
\]
On the other hand, $r^2 \geq (1-2\epsilon)\frac{d}{2(d+1)}$. Since $r^2 \le \frac{n-1}{2n}$, if $(1-2\epsilon) \frac{d}{2(d+1)} > \frac{d-1}{2d}$ (which happens when $\epsilon < \frac{1}{2d^2}$) then $n = d+1$, and we can take $i_t = t$. Moreover,
\begin{equation} \label{eq:delta}
\sum_s \sum_{t \neq s} \lambda_t \delta_{st} = d - 2(d+1)r^2 \leq 2d\epsilon.
\end{equation}

Substituting \eqref{eq:delta} in \eqref{eq:lambda} yields
\[
 1 - \lambda_s \leq 2r^2 + 2d\epsilon \leq \frac{d}{d+1} + 2d\epsilon,
\]
hence
\[
 \lambda_s \geq \frac{1}{d+1} - 2d\epsilon.
\]
If $\epsilon \leq \frac{1}{2d(d+1)}$ then $\lambda_s \geq \frac{1}{2(d+1)}$. Since
\[
 \sum_s \sum_{t \neq s} \lambda_t \delta_{st} \geq \lambda_s \max_{t \neq s} \delta_{st},
\]
in view of \eqref{eq:delta} this means that $\delta_{st}  \leq 2d(d+1)\epsilon$ for all $s \neq t$.

Finally, $\|x_s - x_t\|_2\geq \sqrt{1 - 2d(d+1)\epsilon} \geq 1 - 2d(d+1)\epsilon$. This bound holds under the assumption $\epsilon \leq \frac{1}{2d(d+1)}$. Otherwise, the bound in the theorem trivially holds.
\end{proof}

\begin{lemma}[From nearly regular to regular]
\label{lem:step-1}
There exist constants \(c_d>0\) and \(\epsilon_d>0\), depending only on \(d\),
such that the following holds.
Let \(S=\{x_0,\ldots,x_d\}\subset \R^d\) satisfy
\[
  \diam(S)\le 1
  \qquad\text{and}\qquad
  \rad(S)\ge (1-\epsilon)\sqrt{\frac{d}{2(d+1)}}
\]
for some \(0<\epsilon\le \epsilon_d\).
Then there exists a regular simplex \(\Delta\) of diameter \(1\) such that
\[
  \dist_H(S,\Delta)\le c_d\,\epsilon .
\]
\end{lemma}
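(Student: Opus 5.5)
The plan has two stages. First, get near-unit edge lengths for all pairs in $S$. Then build a regular simplex vertex by vertex, following the induction described in the Technical Overview (Step~1).

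\emph{Stage 1: edge lengths.} Since $S$ has only $d+1$ points and $\rad(S)\ge(1-\epsilon)\Jung_d$, Lemma~\ref{lem:approximate_jung} applies to $S$ itself. Its proof shows more than the stated conclusion: for $\epsilon$ small, all $d+1$ points lie on the minimal enclosing sphere, so the points it produces are exactly $x_0,\dots,x_d$. Combined with $\diam(S)\le 1$, this gives
\[
1-O(d^2\epsilon)\le \|x_i-x_j\|\le 1 \qquad\text{for all } i\neq j .
\]
If the diameter of $S$ is strictly below $1$, I will not rescale. The lemma's argument only uses $\|x_i-x_j\|\le 1$, so the same bounds hold verbatim. (The paper's normalization $\diam=1$ should be read as $\le 1$.)

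\emph{Stage 2: inductive construction.} Set $\eta:=C d^2\epsilon$, so every pairwise distance $1-\|x_i-x_j\|$ lies in $[0,\eta]$. I build points $y_0,\dots,y_d$ with all $\|y_i-y_j\|=1$ and $\|y_k-x_k\|\le K_k\eta$.
\begin{itemize}
\item Start with $y_0=x_0$.
\item Let $y_1$ be the point at distance $1$ from $y_0$ along the ray towards $x_1$.
\item Suppose $y_0,\dots,y_{k-1}$ already form a regular $(k-1)$-simplex with $\|y_i-x_i\|\le K_{k-1}\eta$. Then $x_k$ satisfies $\|x_k-y_i\|=1+O(K_{k-1}\eta)$ for all $i<k$.
\end{itemize}
Consider the map
\[
F(z)=\bigl(\|z-y_i\|^2\bigr)_{i<k},
\]
restricted to the affine $k$-flat $A$ spanned by $y_0,\dots,y_{k-1}$ and $x_k$. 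The target set $F^{-1}(1,\dots,1)\cap A$ consists of the two apexes completing the regular $k$-simplex, one on each side of the $(k-1)$-simplex within $A$. Take the apex $p$ on the same side as $x_k$. Near $p$, the differential of $F$, restricted to directions within the flat $\mathrm{aff}(y_0,\dots,y_{k-1})$ together with the height direction, has rows $2(p-y_i)$. These rows are linearly independent, with smallest singular value bounded below by a constant depending only on $k$, since the regular simplex is nondegenerate.

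Rather than invoke a general inverse function theorem, I will argue directly:
\begin{itemize}
\item Differences of the equations $\|z-y_i\|^2=\|z-y_0\|^2+e_i$ are linear in $z$. They pin down the projection of $x_k$ onto $\mathrm{aff}(y_0,\dots,y_{k-1})$ to within $O_k(\eta)$ of the centroid.
\item The remaining equation $\|z-y_0\|^2=1+e_0$ then fixes the height as $\sqrt{h_k^2+O(\eta)}=h_k+O_k(\eta)$. Here $h_k>0$ is the height of the regular $k$-simplex, so the square root is Lipschitz there.
\end{itemize}
This gives $\|x_k-p\|\le K_k\eta$ with $K_k=O_k(K_{k-1})$. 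Set $y_k:=p$. The main care needed is that $x_k$ might lie close to the flat, so the side is ambiguous. This cannot happen once $\eta$ is small: the height of $x_k$ is forced to be near $h_k$, so it is bounded away from $0$. This is what determines $\epsilon_d$.

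\emph{Conclusion.} After $d$ steps, $\Delta=\{y_0,\dots,y_d\}$ is a regular simplex of edge length $1$ with $\|x_i-y_i\|\le K_d\,C d^2\epsilon$ for every $i$. This matching bound gives $\dist_H(S,\Delta)\le c_d\epsilon$.

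I expect the main obstacle to be the quantitative stability in Stage 2: showing that the error constants $K_k$ stay dimension-dependent only, and that the degeneracy threshold (height bounded away from $0$) is uniform. This is handled by the explicit linear-plus-square-root computation above, using the fixed geometry of the regular simplex.
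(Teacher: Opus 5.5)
Your proof is correct and has the same skeleton as the paper's: Lemma~\ref{lem:approximate_jung} gives near-unit edges, and then a regular simplex is completed one vertex at a time. The difference is the one-vertex stability step.

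The paper inducts on the ambient dimension, applying the lemma to the face $\{x_1,\dots,x_d\}$. It then invokes the inverse function theorem for $F(x)=(\|x-y_i\|^2-1)_i$ at the apex. It uses $DF\,DF^\top=2(I+J)$ to bound $\|DF^{-1}\|$, and a two-preimage/half-space argument to place the extra vertex inside the inverse-function neighborhood.

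You instead stay in $\R^d$ and use only the pairwise-distance bounds. Differences of squared distances give a linear system with Gram matrix $\tfrac12(I+J)$, which pins the projection of $x_k$ to the centroid up to $O_k(\eta)$. The height is recovered through a square root that is Lipschitz near $h_k^2=\tfrac{k+1}{2k}$. Choosing the apex in the flat spanned by the partial simplex and $x_k$ handles the fact that, for $k<d$, the possible apexes form a sphere rather than two points.

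What your route buys:
\begin{itemize}
\item explicit constants $\epsilon_d$ and $c_d$, whereas the inverse function theorem only gives a qualitative neighborhood;
\item a one-line resolution of the side-selection issue, since the height is forced near $h_k>0$;
\item no need to check a Chebyshev-radius lower bound for the lower-dimensional face, which the paper's induction on $d$ implicitly requires but does not verify.
\end{itemize}
The paper's route is shorter to state and makes clear that the only real input is nondegeneracy of the Jacobian at the regular configuration.
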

\begin{proof}
    We proceed by induction on the dimension \(d\). The base case \(d = 1\) is trivial.
    By the induction hypothesis and \Cref{lem:approximate_jung}, it suffices to prove the following: given points \(y_1, \ldots, y_d\) forming a regular simplex in a subspace of \(\R^d\) (with edge length \(1\)), assume there is a point \(y_0\) such that 
    \[
    \|y_0 - y_k\|^2 = 1 + f_k(\epsilon), \qquad f_k(\epsilon) = O(\epsilon).
    \]
    Then there exists a point \(y_\ast\) such that \(y_\ast, y_1, \ldots, y_d\) form a regular simplex and 
    \[
    \|y_\ast - y_0\|_2\le O\bigl(\epsilon).
    \]
    
    Let \(v\) be an affine functional with \(v\!\bigl(\operatorname{aff}\{y_1,\ldots,y_d\}\bigr)=0\) and \(v(y_0) > 0\), and let \(y_\ast\) be a point in the half-space \(\{x\in\R^d:\, v(x)>0\}\) that, together with \(y_1,\ldots,y_d\), forms a regular simplex.
    Define
    \[
        F\colon \R^d\to\R^d,\qquad
        F(x)=\bigl(\|x-y_1\|^2-1,\ldots,\|x-y_d\|^2-1\bigr).
    \]
    The \(i\)th row of \(DF(x)\) is \(2(x-y_i)^\top\). A direct computation at \(y_\ast\) yields
    \[
        DF(y_\ast)\,DF(y_\ast)^\top
        \;=\;4\bigl(\langle y_\ast-y_i,\;y_\ast-y_j\rangle\bigr)_{i,j}
        \;=\;2\,(I+J),
    \]
    where \(J\) is the all-ones matrix. Thus the singular values of \(DF(y_\ast)\) are \(\sqrt{2(d+1)}\) (once) and \(\sqrt{2}\) (multiplicity \(d-1\)), so \(DF(y_\ast)\) is invertible and
    \[
        \bigl\|DF(0)^{-1}\bigr\|
        \;=\;\frac{1}{\sigma_{\min}(DF(y_\ast))}
        \;=\;\frac{1}{\sqrt{2}}.
    \]
    
    Applying the inverse function theorem at \(y_\ast\) gives neighborhoods \(U \ni y_\ast\) and \(V \ni 0\) together with a local \(C^1\)-inverse \(G\).
    Hence, for all \(\epsilon\) sufficiently small, we have \(a_0 := F(y_0) \in V\).
    Note that there are at most two preimages of \(a_0\), since \(d\) spheres centered at \(y_1, y_2, \ldots, y_d\) can intersect in at most two points by an induction-on-dimension argument.
    As one of these preimages lies in the half-space where \(v<0\), while \(v(y_0)>0\), it follows that \(y_0 \in U\).

    By the multivariate version of Taylor’s theorem applied to \(G\), for every \(a \in V\) we can write
    \[
    G(a) = DF^{-1}(0)\, a + H(a)\, a, \qquad H\colon V \to \mathcal{M}(d \times d),
    \]
    where \(H\) is a continuous matrix-valued function with \(\lim_{a \to 0} H(a)=0\).
    Therefore,
    \[
    \|y_0 - y_\ast\|_2= \|G(a_0)-G(0)\|_2\;\le\; \bigl(\|DF(0)^{-1}\|_2+ \|H(a_0)\|\bigr)\cdot \sqrt{\sum_{k=1}^d f_k^2(\epsilon)} = O(\epsilon). 
    \]
\end{proof}

Finally we prove Robust Jung Theorem. Recall it's formulation

\bigskip
\begin{rtheorem}{\ref{thm:robust-jung}}
Let $d \ge 1$ and let $S \subset \R^d$ be a compact set satisfying
\[
\diam(S) \le 1+\beta
\qquad\text{and}\qquad
\rad(S) \ge \Jung_d ,
\]
for some $0 \le \beta \le \beta_d$, where $\beta_d>0$ depends only on $d$.
Then, there exists a set of points
\[
\Delta = \{x_0,x_1,\dots,x_d\} \subset \R^d
\]
forming the vertex set of a regular simplex of edge length $1$ such that every witness \(W \subseteq S\) is \(C_d\beta\)-close to \(\Delta\),
for a constant $C_d>0$ depending only on $d$, i.e. \(\dist_H(\Delta, W) \le C_d \beta\). 
\end{rtheorem}
\smallskip
\begin{proof}
Fix two witnesses \(W,W' \subseteq S\).
By Lemma~\ref{lem:step-1}, there exist regular simplices
\(\Delta,\Delta'\) of edge length \(1\) such that
\[
  \dist_H(W,\Delta) \le c_d\,\beta
  \qquad\text{and}\qquad
  \dist_H(W',\Delta') \le c_d\,\beta .
\]
Since \(W,W' \subseteq S\) and \(\diam(S)\le 1+\beta\), we have
\[
  \diam(\Delta \cup \Delta')
  \;\le\;
  \diam(W \cup W') + 2c_d\beta
  \;\le\;
  1 + \beta + 2c_d\beta .
\]
Shrinking \(\beta_d>0\) if necessary, we may assume
\(1 + \beta + 2c_d\beta \le 1 + \beta_d\).
Therefore, by Theorem~\ref{thm:two_regular_simplices},
\[
  \dist_H(\Delta,\Delta') \le (d+1)d^2\,\beta .
\]
Applying the triangle inequality for the Hausdorff distance,
\[
  \dist_H(W,W')
  \;\le\;
  \dist_H(W,\Delta)
  + \dist_H(\Delta,\Delta')
  + \dist_H(\Delta',W')
  \;\le\;
  \bigl(2c_d + (d+1)d^2\bigr)\beta .
\]
In particular, all witnesses are contained in a common
\(C_d\beta\)-neighborhood of \(\Delta\), for a constant
\(C_d>0\) depending only on \(d\).
This completes the proof.
\end{proof}

\section{Regular simplex rotation}
\label{app:step_2}

The aim of this section is to prove the following theorem.

\begin{theorem}
\label{thm:two_regular_simplices}
Let \(\Delta,\Delta' \subset \R^d\) be the vertex sets of regular simplices of
edge length \(1\). Suppose that
\[
  \diam(\Delta \cup \Delta') \le 1+\beta
\]
for some \(0 \le \beta < \beta_d\), where \(\beta_d>0\) depends only on \(d\).
Then
\[
  \dist_H(\Delta,\Delta') \le (d+1)d^2\,\beta .
\]
\end{theorem}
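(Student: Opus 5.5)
The plan is to reduce the statement to a local first-order estimate for Euclidean motions near the identity, combined with a compactness argument, following the outline of Step~2 in the Technical Overview. Fix $\Delta=\{x_0,\dots,x_d\}$ centered at the origin, so that $\sum_j x_j=0$. Any regular simplex of edge length $1$ is $T(\Delta)$ for some $T\in E(d)$, with $T$ unique up to the finite symmetry group of $\Delta$.

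\emph{Compactness step.} I first show that for every $\alpha>0$ there is $\beta(\alpha)>0$ with the following property: if $\diam(\Delta\cup\Delta')\le 1+\beta(\alpha)$, then, after relabeling the vertices of $\Delta'$, we have $\Delta'=T(\Delta)$ with $\|T(x_j)-x_j\|\le\alpha$ for all $j$.
\begin{itemize}
\item By translation invariance of the hypothesis we may assume $\Delta'$ lies in a bounded region. All points of $\Delta\cup\Delta'$ lie within distance $1+\beta$ of $x_0$, so the configurations range over a compact set.
\item Suppose the claim fails. Then there is a sequence with $\beta\to0$ whose limit $\Delta'_\infty$ satisfies $\diam(\Delta\cup\Delta'_\infty)\le 1$ and $\Delta'_\infty\ne\Delta$.
\item This contradicts the rigidity case $\beta=0$. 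The set $\Delta\cup\Delta'_\infty$ has diameter $1$ and Chebyshev radius at least $\rad(\Delta)=\Jung_d$, so it is extremal for Jung's theorem. Its minimal enclosing ball is then the circumball of $\Delta$, and every vertex of $\Delta'_\infty$ lies on that sphere. The uniqueness argument via the isotropy identity \eqref{eq:isotropy} (Lemma~\ref{lem:unique_regular}) then forces every such point to be a vertex of $\Delta$.
\end{itemize}
Finally, $\dist_H(\Delta,\Delta')\le\max_j\|T(x_j)-x_j\|$, so it remains to bound the displacements.

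\emph{Local linear estimate.} Write $T(x)=x+u(x)$ with $u(x)=Ax+b+O(\|(A,b)\|^2)$, where $(A,b)$ is a small Lie-algebra element and $A$ is skew-symmetric. Set $u_j=Ax_j+b$. For $i\ne j$,
\[
\|x_i-T(x_j)\|^2 = 1+2\langle x_j-x_i,u_j\rangle + O(\|(A,b)\|^2).
\]
So it suffices to show
\[
M(A,b):=\max_{i\ne j}\langle x_j-x_i,u_j\rangle\;\ge\; c_d\max_j\|u_j\|
\qquad\text{for all } (A,b).
\]
Since $M$ and $\max_j\|u_j\|$ are both positively homogeneous, it is enough to prove $M>0$ whenever $(A,b)\ne0$, and then take the minimum over the unit sphere of the Lie algebra. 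Note that $(A,b)\mapsto(u_j)_j$ is injective because the $x_j$ affinely span $\R^d$.

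For positivity, use $\sum_{i\ne j}(x_j-x_i)=(d+1)x_j$ together with
\[
\sum_j\langle x_j,u_j\rangle=\sum_j\langle x_j,Ax_j\rangle+\Bigl\langle \sum_j x_j,b\Bigr\rangle=0,
\]
which holds by skew-symmetry of $A$ and centering. Suppose all $\langle x_j-x_i,u_j\rangle\le0$. Summing over $i\ne j$ gives $\langle x_j,u_j\rangle\le 0$ for each $j$, and since these sum to zero, each is zero. Then, for each $j$, the nonpositive terms $\langle x_j-x_i,u_j\rangle$ sum to zero, so they all vanish. The vectors $\{x_j-x_i\}_{i\ne j}$ span $\R^d$, so $u_j=0$ for every $j$, hence $(A,b)=0$.

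Choosing $\alpha=\alpha_d$ small enough that the quadratic remainder is at most $\tfrac{c_d}{2}\max_j\|u_j\|$, we get $2\beta\ge c_d\max_j\|u_j\|$ up to constants. This gives $\dist_H\le C\beta$. Tracking the constants (the inverse smallest singular value of the edge-vector system, of order $d$, and the factor $d+1$ from the summation) should produce the stated $(d+1)d^2$.

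The main obstacle I expect is making the positivity step quantitative enough to recover the explicit constant. A compactness minimum gives only some $c_d$. To obtain $(d+1)d^2$, I would argue directly: pick $j$ with $\|u_j\|$ maximal, and lower-bound $\max_i\langle x_j-x_i,u_j\rangle$ by combining the isotropy identity with the zero-sum relation, possibly also using the swapped pairs $\|T(x_i)-x_j\|$.
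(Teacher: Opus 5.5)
\textbf{Overall approach.} Your architecture is the paper's. It consists of rigidity at $\beta=0$ via Jung-extremality and the isotropy identity (Lemma~\ref{lem:unique_regular}), then compactness plus a motion-size bound to write $\Delta'$ as a small Euclidean motion of $\Delta$ (Lemmas~\ref{lem:diam_to_dist_regular_weak}, \ref{lem:dist_to_motion_regular}, \ref{lem:lie_group_vs_lie_algebra}), and finally a first-order Lie-algebra comparison of $\dist_H$ with $\diam-1$ (Lemma~\ref{lem:dist_diam_linear_in_speed}). Executed as you describe, this correctly proves $\dist_H(\Delta,\Delta')\le C_d\beta$ for \emph{some} $C_d$, which is all the Robust Jung Theorem uses downstream.

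\textbf{The gap: the explicit constant.} Your argument does not prove the stated constant $(d+1)d^2$. Positivity plus compactness on the unit sphere of $\mathfrak e(d)$ only gives an unspecified $c_d$, and ``tracking the constants'' is not an argument. Moreover, the quantitative target you name fails as stated: for the $j$ maximizing $\|u_j\|$, the quantity $\max_i\langle x_j-x_i,u_j\rangle$ can be negative. Take the pure translation $u_k\equiv -x_j/\|x_j\|$. Every $\|u_k\|$ is maximal, yet $\langle x_j-x_i,u_j\rangle=-1/(2\|x_j\|)$ for every $i\neq j$.

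\textbf{How to close it.} Write $c_{ij}:=\langle x_j-x_i,u_j\rangle$ and $M:=\max_{i\neq j}c_{ij}$. The fix works with absolute values at the fixed $j$, then passes to positive values across all pairs.
\begin{itemize}
\item From $\sum_i x_i=0$ and $XX^\top=\tfrac12 I$ one gets $\sum_{i\neq j}(x_j-x_i)(x_j-x_i)^\top=\tfrac12 I+(d+1)x_jx_j^\top\succeq\tfrac12 I$. Hence $\|u_j\|^2\le 2\sum_{i\neq j}c_{ij}^2\le 2d\max_{i\neq j}c_{ij}^2$.
\item The paper (Step~4) then uses that the $N=d(d+1)$ numbers $c_{ij}$ sum to zero. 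So the largest $|c_{ij}|$ is at most $N$ (in fact $N-1$) times $M$, giving $\max_j\|u_j\|\le\sqrt{2d}\,(N-1)\,M<(d+1)d^2M$.
\item Your ``swapped pairs'' remark gives something sharper and simpler. By skew-symmetry, $c_{ij}+c_{ji}=\langle x_j-x_i,A(x_j-x_i)\rangle=0$. Hence $M=\max_{i\neq j}|c_{ij}|$, and directly $\max_j\|u_j\|\le\sqrt{2d}\,M$. This also subsumes your positivity argument.
\end{itemize}

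In either version, the coercivity constant must sit uniformly below $(d+1)d^2$, so that the margin absorbs the $O(t^2)$ remainders. With the antisymmetric bound you get $\dist_H\le(\sqrt{2d}+o(1))\beta\le(d+1)d^2\beta$ for small $\beta$. The paper's own closing combination is loose here: its displayed inequalities only give $3(d+1)d^2$.
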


The proof uses Lie–algebra techniques together with several linear–algebraic properties of regular simplices.
First, in \Cref{lem:unique_regular} we show that \(\diam(\Delta \cup \Delta') = 1\) only if \(\Delta'=\Delta\).
Next, in \Cref{lem:diam_to_dist_regular_weak} we establish a weaker continuity statement: if \(\diam(\Delta \cup \Delta')\) is small, then \(\dist_H(\Delta, \Delta')\) is also small.
After that, in \Cref{lem:dist_to_motion_regular} we relate \(\dist_H(\Delta, \Delta')\) to the size of a Euclidean transformation \((\tau, R)\) (translation + rotation) carrying \(\Delta\) to \(\Delta'\); specifically, we show that for \(\dist_H(\Delta, \Delta') < \tfrac12\) there exists \((\tau, R)\) with
\[
\dist_H(\Delta, \Delta') \;>\; B_d\bigl(\|R-I_d\|_2+\|\tau\|\bigr),
\]
where \(B_d\) is a universal constant.
Finally, in \Cref{lem:dist_diam_linear_in_speed} assuming \(\|R-I_d\|_2+\|\tau\|\) is sufficiently small, we pass to the Lie–algebra level and express \(\dist_H(\Delta, \Delta')\) and \(\diam(\Delta \cup \Delta')\) in terms of the principal logarithm \((\tau_0, B)\in\mathfrak e(d)\); taking \(\diam(\Delta \cup \Delta')-1\) small enough then yields the claim.

\begin{remark}
\label{rem:regular_tight_frame}
Let \(\Delta_0\) be a regular unit simplex with vertices \(x_0, x_1, \ldots, x_d\).  
Placing the centroid of \(\Delta_0\) at the origin, the vertices form a \emph{tight frame} with frame constant \(\tfrac{1}{2}\).  
In other words, if \(X \in \R^{d \times (d+1)}\) denotes the matrix whose columns are the vectors \(x_i\), then
\[
XX^\top = \tfrac{1}{2}\, I_d.
\]
In particular, for any point \(y \in \R^d\) one has
\[
\|y\|^2 = \frac12 \sum \langle x_i, y \rangle^2
\]
\end{remark}

\begin{lemma}
\label{lem:unique_regular}
Let \(\Delta = \{x_0, x_1, \ldots, x_d\}\) and 
\(\Delta' = \{y_0, y_1, \ldots, y_d\}\) be two regular simplices of edge length \(1\). 
If 
\[
\diam(\Delta \cup \Delta') = 1,
\]
then \(\Delta = \Delta'\).
\end{lemma}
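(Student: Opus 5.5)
The plan is to first show that $\Delta$ and $\Delta'$ lie on a common sphere of radius $\Jung_d$ with a common center. Then a tight-frame computation forces every vertex of $\Delta'$ to coincide with a vertex of $\Delta$.

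\emph{Step 1: a common circumsphere.} Let $R:=\Jung_d=\sqrt{d/(2(d+1))}$, the circumradius of a regular simplex of edge length $1$. Since $\diam(\Delta\cup\Delta')=1$, Jung's theorem (Theorem~\ref{thm:Jung}) gives $\rad(\Delta\cup\Delta')\le R$. On the other hand, $\rad(\Delta\cup\Delta')\ge\rad(\Delta)=R$, so equality holds. The minimal enclosing ball of a bounded set is unique, by strict convexity of the Euclidean norm: averaging two distinct centers would give a strictly smaller enclosing ball. The circumball of $\Delta$ is the unique minimal enclosing ball of $\Delta$, and it has radius $R$. Hence the minimal enclosing ball of $\Delta\cup\Delta'$, which encloses $\Delta$ and has radius $R$, must coincide with it. The same reasoning applies to $\Delta'$. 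So both simplices are inscribed in one sphere, and we translate so that its center, the common centroid, is the origin. Then $\|x_i\|=\|y_j\|=R$ for all $i,j$, and $\sum_i x_i=0$.

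\emph{Step 2: constraints on a vertex $y\in\Delta'$.} Fix $y\in\Delta'$. From $\|y-x_i\|^2=2R^2-2\langle y,x_i\rangle\le 1$ we get
\[
\langle y,x_i\rangle\ \ge\ R^2-\tfrac12=-\tfrac{1}{2(d+1)}
\]
for every $i$. Since $\sum_i x_i=0$, we also have $\sum_i\langle y,x_i\rangle=0$. Finally, the tight-frame identity of Remark~\ref{rem:regular_tight_frame} gives $\sum_i\langle y,x_i\rangle^2=\tfrac12\|y\|^2=\tfrac{R^2}{2}=\tfrac{d}{4(d+1)}$. Here I will double-check the normalization of the remark directly: plugging in $y=x_0$ gives $R^4+d(R^2-\tfrac12)^2=\tfrac{R^2}{2}$, which confirms $XX^\top=\tfrac12 I_d$ in this form.

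\emph{Step 3: an extremal convexity argument.} Set $c=\tfrac{1}{2(d+1)}$ and $a_i:=\langle y,x_i\rangle+c$. Then $a_i\ge0$ and $\sum_i a_i=(d+1)c=\tfrac12$. Expanding,
\[
\sum_i a_i^2=\sum_i\langle y,x_i\rangle^2+(d+1)c^2=\tfrac{d}{4(d+1)}+\tfrac{1}{4(d+1)}=\tfrac14 .
\]
For nonnegative reals with sum $\tfrac12$ we always have $\sum a_i^2\le(\sum a_i)^2=\tfrac14$, with equality iff at most one $a_i$ is nonzero. Hence some $a_i=\tfrac12$, which means $\langle y,x_i\rangle=\tfrac12-c=R^2=\|y\|\,\|x_i\|$. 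Equality in Cauchy--Schwarz with equal norms then gives $y=x_i$.

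Thus $\Delta'\subseteq\Delta$, and since both sets have $d+1$ points, $\Delta=\Delta'$. The only delicate point is Step 1, namely uniqueness of the minimal enclosing ball; it is standard and is what lets us place both simplices on a common sphere. Everything after that is the linear computation above.
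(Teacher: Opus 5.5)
Your proof is correct and follows the paper's argument essentially step for step. Jung's theorem together with uniqueness of the minimal enclosing ball gives a common circumcenter; then the shifted inner products $a_i=\langle y,x_i\rangle+c$ are nonnegative with prescribed sum and prescribed sum of squares, which forces a single nonzero term and equality in Cauchy--Schwarz. The only difference is cosmetic: the paper rescales to edge length $\sqrt2$ so that the frame identity reads $\sum_i\langle y,x_i\rangle^2=\|y\|^2$, whereas you stay at edge length $1$ and correctly use $\sum_i\langle y,x_i\rangle^2=\tfrac12\|y\|^2$, which is consistent with $XX^\top=\tfrac12 I_d$. Your normalization check was worthwhile, since the ``in particular'' line of Remark~\ref{rem:regular_tight_frame} puts the factor $\tfrac12$ on the wrong side.
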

\begin{proof}
For convenience, scale by \(\sqrt2\). Thus we may assume both simplices have
edge length \(\sqrt2\), and
\[
\diam(\Delta\cup\Delta')=\sqrt2 .
\]
Let \(R:=\sqrt{\tfrac{d}{d+1}}\). A regular simplex of edge length \(\sqrt2\)
has circumradius \(R\), and its minimum enclosing ball (its circumball) is
unique.

By Jung's theorem,
\(\rad(\Delta\cup\Delta')\le R\), hence \(\Delta\cup\Delta'\) is contained in
some ball \(B(c,R)\). This ball contains \(\Delta\), so by uniqueness of the
circumball of \(\Delta\), its center \(c\) must be the circumcenter of \(\Delta\).
The same argument applied to \(\Delta'\) shows that \(c\) is also the circumcenter
of \(\Delta'\). Denote this common circumcenter by \(O\), and translate so that
\(O=0\).

Then \(\|x_i\|=\|y_j\|=R\) for all \(i,j\), and
\(\sum_{i=0}^d x_i=0\). In particular,
\[
\langle x_i,x_j\rangle =
\begin{cases}
R^2= \frac{d}{d+1}, & i=j,\\[2mm]
-\frac{1}{d+1}, & i\neq j,
\end{cases}
\]
and the simplex frame identity holds (see Remark~\ref{rem:regular_tight_frame}):
\[
\|y\|^2 \;=\; \sum_{i=0}^d \langle y, x_i\rangle^2
\qquad \text{for all } y\in\R^d.
\]
Fix \(j\) and set \(x:=y_j\). Since \(\diam(\Delta\cup\Delta')=\sqrt2\), we have
\(\|x-x_i\|\le \sqrt2\) for every \(i\). Expanding,
\[
\|x-x_i\|^2
= \|x\|^2+\|x_i\|^2-2\langle x,x_i\rangle
= 2R^2 -2\langle x,x_i\rangle
\le 2,
\]
so \(\langle x,x_i\rangle \ge R^2-1 = -\tfrac{1}{d+1}\) for all \(i\).

Define \(b_i := \langle x,x_i\rangle + \tfrac{1}{d+1}\ge 0\).
Using \(\sum_i x_i=0\) and \(\|x\|^2=R^2\),
\[
\sum_{i=0}^d b_i
= \sum_{i=0}^d \langle x,x_i\rangle + 1
= 1,
\]
and using the inner products of the simplex,
\[
\sum_{i=0}^d b_i^2
= \sum_{i=0}^d \langle x,x_i\rangle^2
+ \tfrac{2}{d+1}\sum_{i=0}^d \langle x,x_i\rangle
+ \tfrac{1}{d+1}
= \|x\|^2 + \tfrac{1}{d+1}
= 1.
\]
(Here we used the frame identity \ref{rem:regular_tight_frame} and \(\sum_i \langle x,x_i\rangle=0\).)

Since \(b_i\ge 0\) and \(\sum_i b_i=1\), we have \(\sum_i b_i^2\le 1\), with
equality iff exactly one \(b_i=1\) and the others are \(0\). Thus \(b_{i^\ast}=1\)
for some \(i^\ast\), i.e.
\(\langle x,x_{i^\ast}\rangle = R^2\).
Because \(\|x\|=\|x_{i^\ast}\|=R\), equality in Cauchy--Schwarz gives
\(x=x_{i^\ast}\). Hence every \(y_j\) equals some \(x_i\), so
\(\Delta' \subseteq \Delta\). By symmetry \(\Delta \subseteq \Delta'\), and
therefore \(\Delta=\Delta'\).
\end{proof}

\begin{definition}
\label{not:space_of_simplices}
Consider the subspace of regular unit simplices with ordered vertices:
\[
\cY=\Bigl\{(x_0,x_1,\ldots,x_d)\in(\R^d)^{d+1}\ \Big|\ \|x_i-x_j\|=1\ \text{for all } i\neq j\Bigr\}.
\]
This is a smooth algebraic submanifold of \((\R^d)^{d+1}\). The symmetric group \(S_{d+1}\) acts freely on \(\cY\) by permuting vertices; hence the quotient \(\cX:=\cY/S_{d+1}\) is a smooth manifold. Notice that on \(\cX\) the Hausdorff distance is well-defined, and the resulting metric topology agrees with the manifold topology.
\end{definition}

\begin{lemma}
\label{lem:diam_to_dist_regular_weak}
Assume \(\Delta_0\) and \(\Delta'\) are regular simplices with edge length \(1\).
For every \(\varepsilon > 0\) there exists \(\beta_\varepsilon > 0\) such that 
\[
\diam(\Delta_0 \cup \Delta') < 1+ \beta_\varepsilon
\]
implies 
\[
\dist_H(\Delta_0, \Delta') \le \varepsilon.
\]
\end{lemma}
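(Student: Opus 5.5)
We argue by contradiction, using compactness of the relevant configuration space together with the rigidity statement Lemma~\ref{lem:unique_regular}. Fix $\Delta_0$ and $\varepsilon>0$, and suppose no $\beta_\varepsilon$ works. Then there is a sequence of regular simplices $\Delta'_n$ of edge length $1$ such that
\[
\diam(\Delta_0\cup\Delta'_n) < 1+\tfrac1n
\qquad\text{but}\qquad
\dist_H(\Delta_0,\Delta'_n) > \varepsilon .
\]

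\textbf{Compactness.} Every vertex of $\Delta'_n$ lies within distance $2$ of $x_0\in\Delta_0$, since the union has diameter less than $2$. Order the vertices of $\Delta'_n$ as tuples $(y^{(n)}_0,\dots,y^{(n)}_d)$ in $\cY$. These tuples lie in a bounded subset of $(\R^d)^{d+1}$, so some subsequence converges to a tuple $(y_0,\dots,y_d)$. The conditions $\|y_i-y_j\|=1$ for $i\ne j$ are closed, so the limit is again an element of $\cY$, that is, a regular simplex $\Delta'$ of edge length $1$.

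\textbf{Continuity.} Both $\diam(\Delta_0\cup\cdot)$ and $\dist_H(\Delta_0,\cdot)$ are continuous in the vertex coordinates: each is a finite max/min of Euclidean distances between the points. Passing to the limit along the subsequence therefore gives
\[
\diam(\Delta_0\cup\Delta')\le 1
\qquad\text{and}\qquad
\dist_H(\Delta_0,\Delta')\ge\varepsilon .
\]
The diameter is also at least $1$, since $\Delta_0$ has edges of length $1$. Hence $\diam(\Delta_0\cup\Delta')=1$.

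\textbf{Rigidity.} Lemma~\ref{lem:unique_regular} now gives $\Delta'=\Delta_0$, so $\dist_H(\Delta_0,\Delta')=0$. This contradicts $\dist_H(\Delta_0,\Delta')\ge\varepsilon>0$.

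The only point needing care is the middle step: checking that the limit of regular simplices is still a genuine regular simplex, with $d+1$ distinct vertices rather than a degenerate configuration. This holds because the pairwise distances are fixed at $1$, so no two vertices can collapse. The rest of the argument follows directly.

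Finally, $\beta_\varepsilon$ is independent of $\Delta_0$. Every regular simplex of edge length $1$ is the image of $\Delta_0$ under a Euclidean motion, and both $\diam$ and $\dist_H$ are invariant under isometries, so the same $\beta_\varepsilon$ works for every choice of reference simplex.
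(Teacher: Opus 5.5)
Your proof is correct and follows essentially the same route as the paper. Both argue by contradiction with a sequence $\Delta'_n$ satisfying $\diam(\Delta_0\cup\Delta'_n)\to 1$ and $\dist_H(\Delta_0,\Delta'_n)\ge\varepsilon$, extract a convergent subsequence of unit regular simplices, pass to the limit using continuity of $\diam(\Delta_0\cup\cdot)$ and $\dist_H(\Delta_0,\cdot)$, and then contradict Lemma~\ref{lem:unique_regular}. Beyond the paper, you also justify the compactness (boundedness together with closedness of $\cY$) that the paper only asserts, and you observe via isometry invariance that $\beta_\varepsilon$ does not depend on $\Delta_0$.
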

\begin{proof}

Recall that \(\cX\) is the space of regular simplices with edge length \(1\), represented by their vertices (see \Cref{not:space_of_simplices}).

Define the function \(D_{\Delta_0}\colon \cX\to\mathbb{R}\) by 
\[ 
D_{\Delta_0}(\Delta') \;=\; \bigl(\diam(\Delta_0\cup \Delta')\bigr) - 1. 
\]
This function is continuous since 
\[ 
\diam(\Delta_0\cup \Delta') = \max\!\Bigl(1, \max_{i,j}\|x_i-x_j'\|\Bigr), 
\]
which is the maximum of finitely many continuous functions. Moreover, the equation 
\[ 
D_{\Delta_0}(\Delta)=0 
\] 
has the unique solution \(\Delta=\Delta_0\) by \Cref{lem:unique_regular}. 
Now fix any \(\beta_0>0\). The set of simplices 
\[ 
\{\Delta \in \cX : D_{\Delta_0}(\Delta)\le \beta_0\} 
\]
is compact. Hence, we can argue by compactness: for every \(\varepsilon>0\) there exists \(0<\beta\le \beta_0\) such that whenever \(D_{\Delta_0}(\Delta)\le \beta\) one has 
\[ 
\dist_H(\Delta,\Delta_0)\le \varepsilon. 
\] 
Indeed, suppose the contrary. Then there exists \(\varepsilon_0>0\) and a sequence of simplices \(\Delta_n\) such that 
\[ \diam(\Delta_n\cup \Delta_0)\le 1+\tfrac1n \qquad\text{and}\qquad \dist_H(\Delta_n,\Delta_0)\ge \varepsilon_0. 
\] 
By compactness, a subsequence \(\Delta_{n_k}\) converges to some \(\Delta^\star\). Passing to the limit we obtain 
\[ 
\diam(\Delta^\star\cup \Delta_0)=1, \qquad \dist_H(\Delta^\star,\Delta_0)\ge \varepsilon_0, 
\]
 which contradicts the uniqueness of the solution to \(D_{\Delta_0}(\Delta)=0\).
\end{proof}

\begin{lemma}
\label{lem:dist_to_motion_regular}
Let \(\Delta_0\) and \(\Delta'\) be two regular unit simplices.  
If \(\dist_H(\Delta_0, \Delta') < \tfrac{1}{2}\), then there exists a Euclidean transformation 
\((\tau, R) \in E(d)\), with translation part \(\tau \in \R^d\) and rotation part \(R \in O(d)\), such that
\[
\dist_H(\Delta_0, \Delta') \;\ge\; \dfrac{1}{2 \sqrt{d+1}} \,\bigl(\|R-I_d\|_2+\|\tau\|\bigr).
\]
\end{lemma}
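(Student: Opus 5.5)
The plan is to build the isometry from the vertex matching that a small Hausdorff distance forces, and then bound its rotation part through the tight-frame identity of Remark~\ref{rem:regular_tight_frame}.

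\textbf{Step 1: matching and the isometry.} Let \(r:=\dist_H(\Delta_0,\Delta')<\tfrac12\), and write \(\Delta_0=\{x_0,\dots,x_d\}\). Every \(y\in\Delta'\) lies within \(r\) of some \(x_i\). The balls \(B(x_i,r)\) are pairwise disjoint, because the vertices are at mutual distance \(1>2r\). Two distinct points of \(\Delta'\) are at distance \(1>2r\), so they cannot lie in the same ball. Hence the nearest-vertex assignment is injective, and since both sets have \(d+1\) elements it is a bijection. After relabeling we have \(\Delta'=\{y_0,\dots,y_d\}\) with \(w_i:=y_i-x_i\) satisfying \(\|w_i\|\le r\).

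Translate so that the centroid of \(\Delta_0\) is the origin, i.e.\ \(\sum_i x_i=0\). The vertices \(x_i\) are affinely independent. The map \(x_i\mapsto y_i\) preserves all pairwise distances, since all of them equal \(1\). Therefore the unique affine map \(x\mapsto Rx+\tau\) with \(Rx_i+\tau=y_i\) is an isometry, so \(R\in O(d)\).

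\textbf{Step 2: the translation part.} Averaging the relations \(Rx_i+\tau=y_i\) over \(i\) and using \(\sum_i x_i=0\) gives
\[
\tau=\tfrac{1}{d+1}\sum_i y_i=\tfrac{1}{d+1}\sum_i w_i,
\qquad\text{hence}\qquad \|\tau\|\le r.
\]

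\textbf{Step 3: the rotation part.} This is the main point. We have \((R-I)x_i=w_i-\tau\). By Remark~\ref{rem:regular_tight_frame}, \(XX^\top=\tfrac12 I_d\), which gives
\[
R-I=2\sum_i (R-I)x_i x_i^\top=2\sum_i (w_i-\tau)x_i^\top=2\sum_i w_i x_i^\top .
\]
The \(\tau\)-term drops out because \(\sum_i x_i=0\). Now take a unit vector \(u\). By Cauchy--Schwarz and the frame identity \(\sum_i\langle x_i,u\rangle^2=\tfrac12\),
\[
\|(R-I)u\|\le 2\sum_i\|w_i\|\,|\langle x_i,u\rangle|\le 2r\sqrt{d+1}\Bigl(\sum_i\langle x_i,u\rangle^2\Bigr)^{1/2}=\sqrt{2(d+1)}\,r .
\]
Hence \(\|R-I\|_2\le\sqrt{2(d+1)}\,r\), and together with Step 2,
\[
\|R-I\|_2+\|\tau\|\le\bigl(1+\sqrt{2(d+1)}\bigr)r .
\]

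\textbf{Step 4: constants and the case \(d=1\).} For \(d\ge2\) we have
\[
1+\sqrt{2(d+1)}\le 2\sqrt{d+1}\iff 1\le(2-\sqrt2)\sqrt{d+1},
\]
and the right-hand inequality holds since \((2-\sqrt2)\sqrt3>1\). This yields the stated bound.

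For \(d=1\), the ordered matching with edge vectors \(x_1-x_0\) and \(y_1-y_0\) pointing the same way forces \(R=I\). Indeed, if \(R=-1\), then \(y_1-y_0=-(x_1-x_0)\), and since \(w_1-w_0=(y_1-y_0)-(x_1-x_0)\) this gives \(\|w_1-w_0\|=2\|x_1-x_0\|=2\). But \(\|w_1-w_0\|\le 2r<1\), a contradiction. So \(\|R-I\|+\|\tau\|=\|\tau\|\le r\le 2\sqrt2\,r\).

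The only delicate point is Step 3. Bounding \((R-I)x_i\) naively by \(2r\) loses the constant, and one must use the cancellation \(\sum_i x_i=0\) to remove \(\tau\) before applying the frame identity.
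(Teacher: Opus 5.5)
Your proof is correct and uses the same ingredients as the paper's: a vertex matching (your disjoint-balls argument just makes explicit what the paper gets from the covering map $\cY\to\cX$), centering $\Delta_0$ so that $\sum_i x_i=0$ decouples the translation, and the tight-frame identity $XX^\top=\tfrac12 I_d$. Your reconstruction formula $R-I=2\sum_i w_i x_i^\top$ is that same identity applied to $R-I$. The one real difference is bookkeeping. The paper bounds the Frobenius norm of the whole displacement matrix $(R-I)X+\tau\mathbf 1^\top$ and splits it orthogonally, obtaining the single joint inequality $\tfrac12\|R-I\|_2^2+\|\tau\|^2\le(d+1)\dist_H(\Delta_0,\Delta')^2$, which gives the constant $\tfrac{1}{2\sqrt{d+1}}$ for every $d\ge1$ at once. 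Adding your two separate bounds instead yields the factor $1+\sqrt{2(d+1)}$, which is exactly why you need the separate treatment of $d=1$.
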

\begin{proof}
Consider the Euclidean group of isometries of \(\R^d\), denoted \(E(d)\). It admits the semidirect product decomposition
\[
E(d)=\R^d \rtimes O(d), 
\qquad 
(\tau_1, A_1)\cdot(\tau_0, A_0)=\bigl(\tau_1 + A_1 \tau_0,\; A_1 A_0\bigr).
\]
Recall that \(\cY\) denotes the set of ordered regular \(n\)-simplices of unit edge length (see \Cref{not:space_of_simplices}):
\[
\cY=\Bigl\{(x_0,x_1,\ldots,x_d)\in(\R^d)^{d+1}\ \Big|\ \|x_i-x_j\|=1\ \text{for all } i\neq j\Bigr\}.
\]
Notice that \(E(d)\) acts on \(\cY\) by
\[
(\tau,A)\cdot(x_0,\ldots,x_d) = (Ax_0+\tau,\ldots,Ax_d+\tau).
\]
This is a free and transitive action; fixing a regular simplex \(\Delta_0\) (with ordered vertices) identifies \(E(d)\) diffeomorphically with \(\cY\).
Denote by \(\varepsilon\) the Hausdorff distance between the simplices \(\Delta_0\) and \(\Delta'\). 
Since the factorization map
\[
\cY \;\longrightarrow\; \cY/S_{d+1} \;=\; \cX
\]
is a covering and \(\varepsilon < \tfrac12\), there exists a permutation \(\sigma \in S_{d+1}\) such that the vertices of \(\Delta=\{x_0,\dots,x_d\}\) and \(\Delta'=\{x'_0,\dots,x'_d\}\) can be paired by
\[
x_i \;\mapsto\; x'_{\sigma(i)},
\qquad
\|x_i - x'_{\sigma(i)}\|_2\;\le\; \varepsilon
\quad\text{for all } i.
\]
Consider a Euclidean transformation \((\tau,R)\in E(d)\) such that \(R(x_i)+\tau = x_{\sigma(i)}'\). We will prove that 
\[
(d+1) \varepsilon^2 \ge \tfrac1{2}\|R-I_d\|^2_2 +\|\tau\|^2.
\]
By Remark~\ref{rem:regular_tight_frame}, the matrix \(X \in \R^{d \times (d+1)}\) formed by the columns \(x_i\) satisfies
\[
XX^\top = \tfrac{1}{2}\, I_d.
\]
Note also that for any matrix \(M\), one has the following connections between Frobenius and operator norms: 
\[
\|M\|_2 \le \|M\|_{\mathrm{Fr}} \le \sqrt d \|M\|_2. 
\]
By assumption, each column of
\[
Y := (R-I_d)X + \tau\cdot(1,\ldots,1)
\]
has norm at most \(\varepsilon\). Hence
\[
\|Y\|_\mathrm{Fr}^2 \le (d+1)\varepsilon^2.
\]
Since \(\sum_i x_i=0\) and \(\sum_i Rx_i=0\), the cross terms vanish, and therefore
\[
\|Y\|_\mathrm{Fr}^2
= \|(R-I_d)X\|_\mathrm{Fr}^2 + \|\tau\|^2
\;\le\; (d+1)\varepsilon^2.
\]
Using the tight-frame identity \(XX^\top=\tfrac12 I_d\), we obtain
\[
\|(R - I_d)X\|^2_2 \ge \|(R-I_d)X \cdot ((R-I_d)X)^\top\|_2
= \tfrac12\, \|(R-I_d)(R-I_d)^\top\|_2.
\]
Let \(M:=R-I_d\), and let \(\sigma_1,\ldots,\sigma_d\) be its singular values. Then \(MM^\top\) has \(\sigma_1^2, \ldots, \sigma_d^2\) singular values; since the operator norm of a matrix is the maximal singular value, one has 
\[
\|M\|_2^2=\|MM^\top\|_2.
\]
Combining the estimates, we obtain
\[
\begin{aligned}
(d+1)\,\varepsilon^2 
&\;\ge\; \|(R-I_d)X\|_\mathrm{Fr}^2 + \|\tau\|^2 \\[2pt]
&\;\ge\; \tfrac{1}{2}\,\|(R-I_d)(R-I_d)^\top\|_2 + \|\tau\|^2 \\[2pt]
&\;\ge\; \tfrac{1}{2}\,\|R-I_d\|_2^2 + \|\tau\|^2.
\end{aligned}
\]
Set \(a := \|R-I_d\|_2\) and \(b := \|\tau\|\). Then \((d+1)\varepsilon^2 \ge \tfrac12 a^2 + b^2\), and using \(\sqrt{x^2+y^2} \ge (x+y)/\sqrt 2\) (valid for \(x,y \ge 0\)),
\[
\sqrt{\tfrac12 a^2 + b^2}
\;\ge\; \frac{1}{\sqrt 2}\left(\frac{a}{\sqrt 2} + b\right)
\;\ge\; \frac{a+b}{2}.
\]
Hence
\[
\varepsilon \;\ge\; \frac{1}{\sqrt{d+1}}\sqrt{\tfrac12 a^2 + b^2}
\;\ge\; \frac{\|R-I_d\|_2 + \|\tau\|}{2\sqrt{d+1}}.
\]

\end{proof}
We work in the homogeneous (matrix) model of the Euclidean group:
\[
(\tau,R)\in E(d)\quad\longleftrightarrow\quad
\begin{pmatrix} R & \tau\\[0.2em] 0 & 1 \end{pmatrix},
\qquad
(\tau_1,R_1)\circ(\tau_2,R_2)=(\tau_1+R_1\tau_2,\,R_1R_2).
\]
The Lie algebra is
\[
(u,\Omega)\in\mathfrak e(d)\quad\longleftrightarrow\quad
\begin{pmatrix} \Omega & u\\[0.2em] 0 & 0 \end{pmatrix},
\qquad \Omega\in\mathfrak{so}(d),\ u\in\R^d.
\]
The matrix exponential satisfies
\[
\exp\!\begin{pmatrix} \Omega & u\\ 0 & 0 \end{pmatrix}
=
\begin{pmatrix}
e^{\Omega} & J(\Omega)\,u\\ 0 & 1
\end{pmatrix},
\qquad
J(\Omega):=\sum_{k\ge0}\frac{\Omega^k}{(k+1)!}.
\]
Equivalently,
\[
J(\Omega)=\int_{0}^{1} e^{t\Omega}\,dt
=\Omega^{-1}\bigl(e^{\Omega}-I\bigr),
\quad\text{with the last identity understood by power series if \(\Omega\) is singular.}
\]
Define \( j(z) := (e^{z}-1)/z \) with the convention \( j(0) := 1 \). Then, by the holomorphic functional calculus,
\[
\sigma\!\bigl(J(\Omega)\bigr) = j\!\bigl(\sigma(\Omega)\bigr),
\]
where \(\sigma\) denotes the spectrum of the matrix, that is, the multiset of its eigenvalues.

Hence
\[
J(\Omega)\ \text{invertible}\ \Longleftrightarrow\
\sigma(\Omega)\cap\bigl(2\pi i\,\ZZ\setminus\{0\}\bigr)=\varnothing.
\]
In particular, if \(\Omega\in\mathfrak{so}(d)\) has eigen-angles \(\{\theta_j\}\) (so \(\sigma(\Omega)=\{\pm i\theta_j\}\)),
then \(J(\Omega)\) is invertible iff \(\theta_j\notin 2\pi\ZZ\) for all \(j\). A convenient sufficient condition is
\(\|\Omega\|_2<\pi\) (equivalently, \(|\theta_j|<\pi\) for all \(j\)).

If \(R\) has no eigenvalue \(-1\), the principal matrix logarithm \(B=\log(R)\in\mathfrak{so}(d)\) is well-defined
and real-analytic, and we set
\[
\log(\tau,R):=(\tau_0',B),\qquad \tau_0':=J(B)^{-1}\tau.
\]
\begin{lemma}[Technical lemma for principal logarithm]
\label{lem:lie_group_vs_lie_algebra}
Define a neighborhood of \((0,I_d)\) in \(E(d)\) by
\[
U \;=\; \Bigl\{\,(\tau,R)\in E(d)\;\Bigm|\; \|R-I_d\|_2 < \sqrt{2} \,\Bigr\}.
\]
Then the principal logarithm
\[
\log : U \longrightarrow \mathfrak e(d), \qquad (\tau,R)\longmapsto (\tau_0',B)
\]
is well-defined with \(\|B\|_2\le \tfrac{\pi}{2}\).

Moreover,
\begin{equation}\label{eq:sum-bounds}
\|R-I_d\|_2 + \|\tau\|
\;\le\; \|B\|_2 + \|\tau_0'\|
\;\le\; \tfrac{\pi}{2}\,\bigl(\|R-I_d\|_2 + \|\tau\|\bigr).
\end{equation}
\end{lemma}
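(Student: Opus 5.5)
The plan is to reduce everything to the spectral picture of an orthogonal matrix near the identity. Once we pass to eigen-angles, all three claims become elementary scalar inequalities for $2\sin(\theta/2)$ versus $\theta$ on $[0,\pi/2]$.

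\emph{Well-definedness and the bound on $\|B\|_2$.} Take $(\tau,R)\in U$. Since $R\in O(d)$ is normal, it is unitarily diagonalizable over $\mathbb{C}$. Its eigenvalues are $e^{\pm i\theta_j}$ with $\theta_j\in[0,\pi]$, together possibly with $\pm1$. The matrix $R-I_d$ is also normal, so
\[
\|R-I_d\|_2=\max_j|e^{i\theta_j}-1|=\max_j 2\sin(\theta_j/2).
\]
The hypothesis $\|R-I_d\|_2<\sqrt2$ then forces $\sin(\theta_j/2)<1/\sqrt2$, i.e.\ every $\theta_j<\pi/2$. In particular $-1$ is not an eigenvalue and $\det R=1$ (no eigenvalue $-1$, real eigenvalues equal to $1$). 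Hence the principal logarithm $B=\log R$ exists. In the same orthonormal block basis it is the real skew-symmetric matrix with $2\times2$ blocks of angle $\theta_j$, so $B\in\mathfrak{so}(d)$. Being normal, $B$ satisfies $\|B\|_2=\max_j\theta_j<\pi/2$. Since every $\theta_j\notin 2\pi\ZZ\setminus\{0\}$, the preceding discussion shows $J(B)$ is invertible, so $\tau_0'=J(B)^{-1}\tau$ is defined.

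\emph{Rotation part of \eqref{eq:sum-bounds}.} On $[0,\pi/2]$, the inequality $\sin x\le x$ gives $2\sin(\theta/2)\le\theta$. Concavity of $\sin$ on $[0,\pi/4]$ gives $\sin x\ge \frac{2\sqrt2}{\pi}x$, hence $\theta\le\frac{\pi}{2\sqrt2}\cdot2\sin(\theta/2)$. Taking the maximum over $j$ (both sides are monotone in $\theta_j$ on this range) yields
\[
\|R-I_d\|_2\le\|B\|_2\le\tfrac{\pi}{2\sqrt2}\|R-I_d\|_2\le\tfrac{\pi}{2}\|R-I_d\|_2 .
\]

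\emph{Translation part, which I expect to be the only delicate point.} The key fact to verify is that $J(B)$ is normal, since it is a convergent power series in the normal matrix $B$. Its operator norm and the norm of its inverse are therefore governed by its eigenvalues $j(\pm i\theta_j)$ (and $j(0)=1$ on the kernel of $B$). We have
\[
|j(i\theta)|=\frac{|e^{i\theta}-1|}{|\theta|}=\frac{2\sin(\theta/2)}{\theta}\in\Bigl[\tfrac{2\sqrt2}{\pi},1\Bigr]\quad\text{for }0<\theta\le\pi/2,
\]
by the same two scalar inequalities. Therefore $\|J(B)\|_2\le1$ and $\|J(B)^{-1}\|_2\le\frac{\pi}{2\sqrt2}$. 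From $\tau=J(B)\tau_0'$ this gives
\[
\|\tau\|\le\|\tau_0'\|\le\tfrac{\pi}{2\sqrt2}\|\tau\|\le\tfrac{\pi}{2}\|\tau\| .
\]
Adding this to the rotation estimate gives both inequalities in \eqref{eq:sum-bounds}. The only care needed is the normality of $J(B)$, so that the spectral bounds really control the operator norms, which a non-normal matrix would not allow.
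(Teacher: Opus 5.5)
Your proposal is correct and follows essentially the same route as the paper's proof. Both use the eigen-angle block decomposition of $R$, the identity $\|R-I_d\|_2 = 2\sin(\|B\|_2/2)$ combined with elementary sine bounds, and the action of $J(B)$ on each invariant plane, with modulus $2\sin(\theta/2)/\theta$, to bound $\|J(B)\|_2$ and $\|J(B)^{-1}\|_2$. Your version adds three small refinements: you state explicitly that the hypothesis excludes the eigenvalue $-1$ (so $\det R=1$), you note that $J(B)$ is normal, and you obtain the sharper constant $\pi/(2\sqrt2)$ in the rotation bound.
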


\begin{proof}
We again use the technique of eigen-angles of orthogonal matrices. Over \(\mathbb{C}\), the eigenvalues of \(R\) are \(e^{\pm i\theta_j}\) with \(0\le \theta_j<\pi\). Equivalently, in an orthonormal basis \(R\) is block-diagonal with \(2\times2\) rotation blocks
\[
\begin{pmatrix}\cos\theta&-\sin\theta\\[2pt]\sin\theta&\cos\theta\end{pmatrix}
\]
(and acts as the identity on the orthogonal complement). 
We define \(B\) in the same orthonormal basis by  \(2\times2\) skew blocks
\[
\begin{pmatrix}0&-\theta\\[2pt]\theta&0\end{pmatrix}
\]
(and is zero on the orthogonal complement).
Since \(\log:U\to V:=\mathrm{Im}(\log)\) is bijective and \(D\exp_{(\tau_0', B)}\) is invertible for all \((\tau_0', B)\in V\), the inverse function theorem makes \(\exp:V\to U\) a bijective local diffeomorphism, hence its inverse \(\log\) is smooth and \(\log:U\to V\) is a diffeomorphism.

Since \(R-I_d\) is normal,
\[
\|R-I_d\|_2=\max_j|e^{i\theta_j}-1|=2\max_j\sin\!\bigl(\tfrac{\theta_j}{2}\bigr)
=2\sin\!\bigl(\tfrac{\|B\|_2}{2}\bigr),
\]
which yields 
\[
\|R-I_d\|_2 \;\le\; \|B\|_2 \;\le\; \frac{\pi}{2}\,\|R-I_d\|_2
\]
from \(\sin t\le t\) and \(\sin t\ge \tfrac{2}{\pi}t\) on \([0,\tfrac{\pi}{2}]\) (hence also on \([0,\tfrac{\pi}{4}]\)).

On each such \(2\)-dimensional invariant subspace,
\[
J(B):=(e^{B}-I)B^{-1}\quad\text{acts by}\quad \frac{e^{i\theta}-1}{i\theta}
\;=\;e^{i\theta/2}\,\frac{2\sin(\theta/2)}{\theta},
\]
so \(\|J(B)\|_2\le 1\).
Moreover,
\[
\|J(B)^{-1}\|_2\le \sup_{\theta\in[0,\|B\|_2]}\frac{\theta}{2\sin(\theta/2)}
\le \dfrac{\pi}{2\sqrt{2}}
\]
giving 
\[
\|\tau\|_2\;\le\; \|\tau_0'\|_2\;\le\; \frac{\pi}{2\sqrt{2}}\;\|\tau\|
\]
Consequently,
\begin{equation*}
\|R-I_d\|_2+\|\tau\|_2\;\le\; \|B\|_2+\|\tau_0'\|
\;\le\; \frac{\pi}{2}\,\bigl(\|R-I_d\|_2+\|\tau\|\bigr).
\end{equation*}
\end{proof}

\begin{lemma}
\label{lem:dist_diam_linear_in_speed}
There exists \(\varepsilon_0 > 0\) such that for any Euclidean transformation 
\[
R_\tau := (\tau, R) \in E(d), \qquad \|\tau\|_2+ \|R - I_d\|_2 \le \varepsilon_0,
\]
one has 
\[
\dist_H\!\bigl(\Delta_0, R_\tau[\Delta_0]\bigr) 
\;<\; d^2 (d+1)\,\bigl(\diam(\Delta_0 \cup R_\tau[\Delta_0]) - 1\bigr).
\]
\end{lemma}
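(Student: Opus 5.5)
The plan is a first-order (Lie-algebra) analysis at the identity. We exploit an antisymmetry that forces the linearized excess cross-diameter to be a positive multiple of the size of the motion.

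\textbf{Setup and displacements.} Translate so that the centroid of \(\Delta_0=\{x_0,\dots,x_d\}\) is the origin. Then \(\sum_i x_i=0\) and \(XX^\top=\tfrac12 I_d\) by Remark~\ref{rem:regular_tight_frame}. Take \((\tau,R)\) with \(\|\tau\|+\|R-I_d\|_2\le\varepsilon_0\), and let \((u,\Omega):=\log(\tau,R)\) be given by Lemma~\ref{lem:lie_group_vs_lie_algebra}. By \eqref{eq:sum-bounds}, \(\|u\|+\|\Omega\|_2\) is comparable to \(\varepsilon:=\|\tau\|+\|R-I_d\|_2\). Write \(x_i'=Rx_i+\tau=x_i+w_i\). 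Expanding the exponential gives \(w_i=\Omega x_i+u+O(\varepsilon^2)\).

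\textbf{Upper bound on Hausdorff distance.} Since \(\|x_i\|=\sqrt{d/(2(d+1))}<1\),
\[
\dist_H(\Delta_0,R_\tau[\Delta_0])\le\max_i\|w_i\|\le\varepsilon .
\]
So it suffices to bound \(\diam(\Delta_0\cup R_\tau[\Delta_0])-1\) from below by a multiple of \(\varepsilon\) (or of \(\max_i\|w_i\|\)).

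\textbf{Lower bound on excess diameter.} For \(i\ne j\) we have
\[
\|x_i'-x_j\|^2=1+2\langle w_i,x_i-x_j\rangle+\|w_i\|^2\ge 1+2L_{ij}+O(\varepsilon^2),
\qquad L_{ij}:=\langle\Omega x_i+u,\,x_i-x_j\rangle .
\]
The key observation is that \(L_{ij}\) is antisymmetric. Indeed,
\[
L_{ij}+L_{ji}=\langle\Omega(x_i-x_j),x_i-x_j\rangle=0,
\]
because \(\Omega\) is skew and the \(u\)-terms cancel. Hence \(\max_{i\ne j}L_{ij}=\max_{i\ne j}|L_{ij}|\). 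Next we bound \(\max|L_{ij}|\) quantitatively from below.
\begin{itemize}
\item For fixed \(i\), the \(d\) edge vectors \(x_i-x_j\) (\(j\ne i\)) have Gram matrix \(\tfrac12(I+J)\), whose smallest eigenvalue is \(\tfrac12\). This gives \(\|v_i\|^2\le 2\sum_{j\ne i}\langle v_i,x_i-x_j\rangle^2\le 2d\max|L|^2\) for \(v_i:=\Omega x_i+u\).
\item By the tight-frame identity and \(\sum_i x_i=0\), the cross terms vanish, so \(\sum_i\|v_i\|^2=\|\Omega X\|_{\mathrm{Fr}}^2+(d+1)\|u\|^2\ge\tfrac12\|\Omega\|_2^2+\|u\|^2\).
\end{itemize}
Combining these gives \(\max_{i\ne j}L_{ij}\ge c_d(\|\Omega\|_2+\|u\|)\), with \(c_d\) of order \(1/(d\sqrt{d+1})\). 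Then, using \eqref{eq:sum-bounds},
\[
\diam(\Delta_0\cup R_\tau[\Delta_0])-1\ge c_d'\varepsilon-O_d(\varepsilon^2).
\]
Choosing \(\varepsilon_0\) small absorbs the quadratic remainder. This yields the claimed inequality, with the constant \(d^2(d+1)\) after bookkeeping. (For \(\varepsilon=0\) both sides vanish; that case is excluded implicitly by the strict inequality, or is handled trivially.)

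\textbf{Main obstacle.} The hard step is keeping the second-order remainders uniform while tracking constants tightly enough to land on \(d^2(d+1)\). This concerns both the exponential-map error in \(w_i\) and the \(\|w_i\|^2\) term. In particular, the exact (non-linearized) sum \(L_{ij}+L_{ji}\) equals \(-\tfrac12\|(R-I)(x_i-x_j)\|^2\), which is only second order. This is why I pass through the principal logarithm rather than working with \(R-I\) directly: the skew-symmetry of \(\Omega\) gives exact antisymmetry at first order. If the stated constant proves too tight, I would first verify the inequality with some polynomial \(C_d\), then refine the Gram-matrix and norm-equivalence estimates.
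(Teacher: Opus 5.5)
Your proposal is correct and is essentially the paper's argument. You pass to the Lie algebra via Lemma~\ref{lem:lie_group_vs_lie_algebra}, linearize with $v_i=\Omega x_i+u$, and bound $\|v_i\|^2\le 2\sum_{j\ne i}\langle v_i,x_i-x_j\rangle^2$ by the tight-frame (Gram) identity at a vertex. You then bound $\max_i\|v_i\|$ from below using $XX^\top=\tfrac12 I_d$ and absorb the quadratic remainders.

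The one substantive refinement is your pairwise antisymmetry $L_{ij}=-L_{ji}$. It replaces the paper's global identity $\sum_j\sum_{i\ne j}\langle v_j,x_j-x_i\rangle=0$ together with a positive/negative pigeonhole, a step that loses a factor $d(d+1)$. Combined with your exact bound $\dist_H\le\max_i\|w_i\|\le\varepsilon$, this leaves enough slack to absorb the $O(\varepsilon^2)$ terms and land strictly below $d^2(d+1)$ for $d\ge 2$. The case $d=1$ is a pure translation with ratio $1$.
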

\begin{proof}
Since the proof is technical, we split it into several steps. 
The guiding idea is this: any Euclidean motion near the identity can be written as
\(\exp\bigl(t(\tau_0,A)\bigr)\) with a \emph{normalized} Lie–algebra direction
\(\|A\|_2+\|\tau_0\|=1\) and a small scalar \(t>0\).
We then show that, regardless of the chosen normalized direction \((\tau_0,A)\),
both the Hausdorff distance \(\dist_H(\Delta_0,\Delta')\) and the diameter
\(\diam(\Delta_0\cup\Delta')-1\) are linear in \(t\) to first order, with uniform
\(O(t^2)\) remainders; moreover, the corresponding linear coefficients are
\emph{uniformly bounded away from zero}. This uniform control allows us to absorb
the quadratic errors and conclude the desired inequality.

\medskip
\textbf{Step 1: Reduction to the Lie--algebra level and normalization.}

By Lemma~\ref{lem:lie_group_vs_lie_algebra} we may work directly at the Lie--algebra level.  
Let \((\tau_0', B) \in \mathfrak e(d)\) with
\[
\|B\|_2 + \|\tau_0'\|_2\;\le\; \varepsilon_0,
\]
and set \(R_\tau := \exp\bigl((\tau_0', B)\bigr)\).  
It then suffices to prove that
\[
\dist_H\!\bigl(\Delta_0, R_\tau[\Delta_0]\bigr) 
\;<\; d^2(d+1)\,\bigl(\diam(\Delta_0 \cup R_\tau[\Delta_0]) - 1\bigr).
\]
Writing \(t:=\|\tau_0'\|+\|B\|_2>0\) and \((\tau_0,A):=(\tau_0',B)/t\), we obtain the normalized form
\[
(\tau,R)=\exp\!\bigl(t(\tau_0,A)\bigr),\qquad \|A\|+\|\tau_0\|=1.
\]
Denote \(R_\tau(\Delta_0)\) by \(\Delta(t)\).

\medskip \textbf{Step 2: First--order expansions for vertices and the translation.}

From now on, we assume \(t \le \tfrac12\). Expanding the integral, one obtains
\[
\tau =t\,\tau_0+\tfrac{t^2}{2}\,A\tau_0+ t^3 \cdot \tau_t,
\qquad \|\tau_t\|_2<\cfrac16.
\]
For each vertex \(x_i\) set
\[
y_i(t) \;:=\; e^{tA}x_i + \xi(t).
\]
Introduce the notation
\[
u_{ij} := x_i - x_j, 
\qquad 
v_i := Ax_i + \tau_0.
\]
Since we normalized \((\tau_0, A)\) so that \(\|A\|_2+ \|\tau_0\|_2= 1\), and because
\(\|x_i\|_2= \sqrt{\tfrac{d}{2(d+1)}}\),
we have the uniform bound
\[
\|v_i\|_2\le 1.
\]
Using the Taylor expansion
\[
e^{tA}x_j \;=\; x_j + t\,Ax_j + \tfrac{t^2}{2}\,A^2x_j + t^3 x_j(t),
\qquad \|x_j(t)\|\le \cfrac16,
\]
together with the expansion of \(\xi(t)\), we obtain
\[
y_j(t)-x_i
= -u_{ij} \;+\; t\,v_j \;+\; \tfrac{t^2}{2}\,A v_j \;+\; t^3 v_j(t), \qquad \|v_j(t)\|_2\le \cfrac13.
\]
\medskip \textbf{Step 3: Expansions for \(\dist_H\) and \(\diam\) with uniform remainders.}
Observe that \(\|y_i(t)-x_i\|^2 = \langle y_i(t)-x_i, y_i(t)-x_i\rangle\). A direct calculation yields for \(i=j\) (since \(u_{ii}=0\)),
\[
\|y_i(t)-x_i\|^2=\|v_i\|^2\,t^2+ r_{i}(t) \cdot t^4, \qquad |r_i(t)| \le 1 
\]
and for \(i\neq j\),
\[
\|y_j(t)-x_i\|^2
= 1 + 2\langle -u_{ij},\,v_j\rangle\,t + f_{ji}(t) \cdot t^2, \qquad |f_{ji}(t)| \le 3.
\]
Hence, for all sufficiently small \(t\) (e.g., \(|t|\le 1\) under the normalization \(\|A\|+\|\tau_0\|=1\)),
\[
\dist_H\bigl(\Delta_0,\Delta(t)\bigr)
= \Bigl(\max_i \|v_i\|\Bigr)\,t \;+\; r_{\mathrm{dist}}(t)\,t^2,
\qquad |r_{\mathrm{dist}}(t)| \le 2,
\]
and
\[
\diam\bigl(\Delta_0 \cup \Delta(t)\bigr)
= 1 \;+\; \Bigl(\max_{i,j} \langle -u_{ij},\,v_j\rangle\Bigr)\,t \;+\; r_{\mathrm{diam}}(t)\,t^2,
\qquad |r_{\mathrm{diam}}(t)| \le 2.
\]
\medskip \textbf{{Step 4: Coercivity linking \(\max_j\|v_j\|\) and \(\max_{i,j}\langle -u_{ij},v_j\rangle\).}}
Next, we show that 
\[
\max_j \|v_j\|_2\;<\; (d+1)d^2 \cdot \max_{i,j}\,\langle -u_{ij},\,v_j\rangle.
\]
Note that for any regular simplex 
\(\{x_1,\ldots,x_d\}=\Delta''\subset \R^{d-1}\) with edge length \(1\) and centroid at the origin, 
for any \(s\in\R^{d-1}\),
\[
\sum_{i=1}^d \langle s,x_i\rangle^2=\tfrac12\|s\|^2.
\]
This follows once more from Remark~\ref{rem:regular_tight_frame}: the vertices of a regular simplex form a tight frame with frame constant \(\tfrac{1}{2}\).

Let us fix the vertex \(x_j\) and denote \(\tfrac{x_j}{\|x_j\|}\) by \(e_j\). Now let us decompose any vector into parts lying at \(\R \cdot  e_j\) and \(e_j^\perp\):
\[
v_j = Ax_j +\tau_0, \qquad v_j = a \cdot e_j + v_j^\perp, \quad v_j^\perp \in e_j^\perp
\]
and 
\[
u_{ij} = \gamma e_j +  t_i, \qquad t_i \in e_j^\perp, \quad\gamma = - \sqrt{\tfrac{d+1}{2d}}
\]
Note that the vectors \(t_i\) form a regular simplex with edge \(1\) in the \(d-1\) dimensional subspace of \(\R^d\). Hence
\[
\|v_j^\perp \|^2 = 2 \cdot \sum_{i \not=j} \langle v_j^\perp, t_i\rangle^2 =  2 \cdot \sum_{i \not=j} \langle v_j, u_{ij}\rangle^2 - a^2 \cdot (d+1),
\]
and hence
\[ 
\|v_j\|^2 = 2 \cdot \sum_{i \not= j } \langle v_j, - u_{ij}\rangle^2 - d\cdot a^2.
\]
Observe that
\[
\sum_{j}\sum_{i\ne j}\langle v_j,\,-u_{ij}\rangle
=\sum_{j}\sum_{i\ne j}\langle v_j,\,x_j-x_i\rangle
=(d+1)\sum_{j}\langle v_j,\,x_j\rangle
=(d+1)\sum_{j}\langle \tau_0,\,x_j\rangle
=0,
\]
since \(\langle Ax_j, x_j\rangle=0\)\footnote{This is because \(A \in \mathfrak{so}(d)\) which means it is skew-symmetric.} and \(\sum_j x_j=0\).

Partition the multiset \(\{\langle v_j,-u_{ij}\rangle : j,\ i\ne j\}\) into positives
\(a_1,\ldots,a_k>0\) and negatives \(b_1,\ldots,b_m<0\), with \(k+m=(d+1)d\).
Because
\[
\sum_{\ell=1}^{k} a_\ell \;=\; \sum_{r=1}^{m} |b_r|,
\]
we have
\[
k\,\max_\ell a_\ell \;\ge\; \sum_{\ell=1}^{k} a_\ell
\;=\; \sum_{r=1}^{m} |b_r|
\;\ge\; \max_r |b_r|
\quad\Longrightarrow\quad
k^2\bigl(\max_\ell a_\ell\bigr)^2 \;\ge\; \bigl(\max_r |b_r|\bigr)^2.
\]
Hence, with \(N:=(d+1)d\),
\[
N^2 \,\bigl(\max_{i,j}\langle v_j,\,-u_{ij}\rangle\bigr)^2
\;\ge\;
\max_{i,j}\langle v_j,\,-u_{ij}\rangle^2.
\]
Consequently,
\[
\|v_j\|^2
\;\le\; 2d\,\max_{i,j}\langle v_j,\,-u_{ij}\rangle^2
\;\le\; 2d\,N^2\,\bigl(\max_{i,j}\langle v_i,\,-u_{ij}\rangle\bigr)^2,
\]
and therefore
\[
\max_j \|v_j\|
\;<\; (d+1)d^2 \;\max_{i,j}\langle v_i,\,-u_{ij}\rangle.
\]
\medskip \textbf{Step 5: Lower control on \(\max_j\|v_j\|\)}.

We estimate \(\max_j \|v_j\|\) using the Frobenius/operator–norm relation and
\(XX^\top=\tfrac12 I_d\).

As before,
\[
(d+1)\,\max_j \|v_j\|^2 \;\ge\; \sum_{j=0}^d \|v_j\|^2
\;=\; \|AX+\tau_0\,\mathbf 1^\top\|_{\mathrm{Fr}}^2
\;=\; \|AX\|_{\mathrm{Fr}}^2 \;+\; (d+1)\,\|\tau_0\|^2,
\]
where the cross term vanishes since \(\sum_j x_j=0\).

On the other hand,
\[
\|AX\|_{\mathrm{Fr}}^2
= \operatorname{tr}\bigl(X^\top A^\top A X\bigr)
= \operatorname{tr}\bigl(A\,XX^\top A^\top\bigr)
= \tfrac12\,\operatorname{tr}(A A^\top)
= \tfrac12\,\|A\|_{\mathrm{Fr}}^2
\;\ge\; \tfrac12\,\|A\|_2^2.
\]
Hence
\[
\bigl(\max_j \|v_j\|\bigr)^2
\;\ge\; \|\tau_0\|^2 \;+\; \frac{1}{2(d+1)}\,\|A\|_2^2.
\]
With the normalization \(\|A\|_2+\|\tau_0\|=1\), write \(a:=\|A\|_2\), \(b:=\|\tau_0\|\) with
\(a,b\ge 0\) and \(a+b=1\). Then
\[
\bigl(\max_j \|v_j\|\bigr)^2
\;\ge\; b^2 + \frac{1}{2(d+1)}\,a^2
\;\ge\; \min_{a+b=1}\Bigl(b^2 + \frac{1}{2(d+1)}\,a^2\Bigr)
= \frac{1}{2d+3},
\]
so
\[
\ \max_j \|v_j\|_2\;\ge\; \frac{1}{\sqrt{2d+3}}.
\]
\medskip \textbf{Uniform choice of \(t_0\).}
Set \(\alpha:=\max_{i,j}\langle -u_{ij},v_j\rangle\).
By Steps~4–5 we have the uniform lower bound
\[
\alpha \;\ge\; \frac{1}{(d+1)d^2\,\sqrt{2d+3}}.
\]
Fix
\[
t_0 :=  \frac{1}{4\,(d+1)d^2\,\sqrt{2d+3}}.
\]
Then for every \(t\in(0,t_0]\) and every normalized pair \((A,\tau_0)\),
the quadratic remainders are uniformly bounded by the linear terms:
\[
2t^2 \;\le\; \tfrac12\,\alpha t
\quad\text{and}\quad
2t^2 \;\le\; \tfrac12\,(d+1)d^2\,\alpha t.
\]
Hence
\[
\diam(\Delta_0\cup\Delta(t)) - 1
\;\ge\; \alpha t - 2t^2 \;\ge\; \tfrac12\,\alpha t,
\]
and
\[
\dist_H(\Delta_0,\Delta(t))
\;\le\; (d+1)d^2\,\alpha t + 2t^2
\;\le\; (d+1)d^2\,\alpha t + \tfrac12\,(d+1)d^2\,\alpha t.
\]
Combining two inequalities yeilds
\[
\dist_H\bigl(\Delta_0,\Delta(t)\bigr)
\;\le\; (d+1)d^2\,\bigl(\diam(\Delta_0\cup\Delta(t))-1\bigr),
\]
for all \(t\in(0,t_0]\). 
\end{proof}

\section{Covering number of the sphere}
\label{app:covering-number}

We work on the unit sphere $S^{d-1}\subset\R^d$ with the angular metric
\[
\rho(x,y)\ :=\ \angle(x,y)\ :=\ \arccos(\langle x,y\rangle)\in[0,\pi].
\]
For $\alpha>0$, let $\cN_\angle(S^{d-1},\alpha)$ be the smallest $N$ such that
$S^{d-1}$ can be covered by $N$ angular balls of radius $\alpha$ (equivalently,
by spherical caps of angular radius $\alpha$). We write $|S^{k}|$ for the surface
area of $S^{k}$.

\begin{lemma}[Bounds on the covering number]
For every $d\ge 2$ and every $\alpha\in(0,\pi/2]$,
\begin{equation}
\label{eq:covering-explicit}
  \sqrt d \,\alpha^{-(d-1)}
  \;<\;
  \cN_\angle(S^{d-1},\alpha)
  \;<\;
  2^{2d}\,\alpha^{-(d-1)}.
\end{equation}
\end{lemma}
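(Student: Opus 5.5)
The plan is a standard volume argument comparing the normalized surface measure of a spherical cap of angular radius $\alpha$ with that of the whole sphere. Write $\sigma(\alpha) := |C(\alpha)|/|S^{d-1}|$ for the normalized measure of a cap $C(\alpha)=\{x\in S^{d-1}:\rho(x,u)\le\alpha\}$. In polar coordinates around $u$,
\[
|C(\alpha)| = |S^{d-2}|\int_0^\alpha \sin^{d-2}\theta\,d\theta .
\]
Two comparison principles follow. Since any $\alpha$-covering consists of $N$ caps whose union is the sphere, $N\ge 1/\sigma(\alpha)$. Conversely, a maximal $\alpha$-separated set is an $\alpha$-covering, and the caps of radius $\alpha/2$ around its points are disjoint, so $N\le 1/\sigma(\alpha/2)$.

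For the lower bound I will use $\sin\theta\le\theta$, which gives
\[
|C(\alpha)|\le |S^{d-2}|\,\frac{\alpha^{d-1}}{d-1}.
\]
This reduces the claim to showing $|S^{d-1}|/|S^{d-2}| > \sqrt d/(d-1)$. The ratio $|S^{d-1}|/|S^{d-2}| = \int_0^\pi\sin^{d-2}\theta\,d\theta = \sqrt\pi\,\Gamma(\tfrac{d-1}{2})/\Gamma(\tfrac d2)$ is a Wallis integral. Gautschi's inequality gives $\Gamma(\tfrac d2)/\Gamma(\tfrac{d-1}{2})<\sqrt{d/2}$, so the ratio is at least $\sqrt{2\pi/d}$. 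It then remains to check $\sqrt{2\pi/d}>\sqrt d/(d-1)$, i.e.\ $\sqrt{2\pi}(d-1)>d$, which holds for all $d\ge2$.

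For the upper bound I will lower-bound the cap of radius $\alpha/2\le\pi/4$ using $\sin\theta\ge \tfrac{2}{\pi}\theta$ on $[0,\pi/2]$:
\[
|C(\alpha/2)|\ge |S^{d-2}|\,(2/\pi)^{d-2}\,\frac{(\alpha/2)^{d-1}}{d-1}.
\]
Combined with $|S^{d-1}|/|S^{d-2}|\le \pi$ (trivially, since $\sin^{d-2}\le1$), this yields
\[
N\le \pi (d-1)\,\pi^{d-2}\,2^{-(d-2)}\,2^{d-1}\,\alpha^{-(d-1)} = 2(d-1)\pi^{d-1}\alpha^{-(d-1)}.
\]
Unfortunately $\pi^{d-1}$ exceeds $4^d$ only eventually; in fact $2(d-1)\pi^{d-1}<4^d$ holds for small $d$ but fails asymptotically, since $\pi<4$ means $(\pi/4)^d$ decays and the bound actually holds for all $d$. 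The check is that $2(d-1)(\pi/4)^{d-1}<4$, which is true for every $d\ge2$ because the maximum of $x(\pi/4)^x$ is about $4.1/e\cdot\ldots$; this needs to be verified numerically, and a small case check would be done.

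The main obstacle is precisely these constant bookkeeping steps: the Gamma-ratio estimate in the lower bound, and, in the upper bound, confirming $2(d-1)(\pi/4)^{d-1}<4$ uniformly (its maximum is near $d\approx 5$, value about $8\cdot0.48\approx3.9$). This is tight enough that I would sharpen the estimate if needed. The natural sharpening is to use $\sin\theta\ge \theta\cos(\alpha/2)\ge\theta/\sqrt2$ on $[0,\alpha/2]\subseteq[0,\pi/4]$ instead of the $2/\pi$ bound. This gives
\[
N\le \pi(d-1)\,2^{(d-2)/2}\,2^{d-1}\alpha^{-(d-1)},
\]
which is comfortably below $4^d\alpha^{-(d-1)}$. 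I would adopt this version. Strictness of both inequalities comes from the strict inequalities $\sin\theta<\theta$ and Gautschi's inequality.
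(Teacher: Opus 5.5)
Your proposal is correct and follows the paper's route: the cap-area sandwich $|S^{d-1}|/|C_\alpha|\le\cN_\angle(S^{d-1},\alpha)\le|S^{d-1}|/|C_{\alpha/2}|$ from covering and maximal packing, linear bounds on $\sin\theta$ inside $\int\sin^{d-2}\theta\,d\theta$, and Gautschi's inequality, with essentially the same lower-bound computation ending in $\sqrt{2\pi}(d-1)>d$.

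For the upper bound you use $|S^{d-1}|/|S^{d-2}|\le\pi$ and $\sin\theta\ge\theta/\sqrt2$ on $[0,\pi/4]$, which gives at most about $0.59\cdot 2^{2d}\alpha^{-(d-1)}$; your first $2/\pi$ version also suffices, since $\max_{d\ge2}2(d-1)(\pi/4)^{d-1}\approx 3.04$, not $3.9$. These estimates are sharper than the paper's: its $\sin\theta\ge\theta/2$ loses a factor $2^{d-2}$ and yields only $\sqrt\pi\,(d-1)\,2^{2d-3}\,\Gamma(\tfrac{d-1}{2})/\Gamma(\tfrac d2)\,\alpha^{-(d-1)}$, which grows like $\tfrac{\sqrt{2\pi}}{8}\sqrt d\,2^{2d}\alpha^{-(d-1)}$ and exceeds the claimed $2^{2d}\alpha^{-(d-1)}$ for $d\ge 11$. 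So your bookkeeping is what actually delivers the stated constant.
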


\begin{proof}
Fix $\alpha\in(0,\pi/2]$. For a center $u\in S^{d-1}$ define the spherical cap
\[
C_\alpha(u)\ :=\ \{x\in S^{d-1}:\rho(x,u)\le \alpha\}. 
\]
By symmetry, the surface area of $C_\alpha(u)$ does not depend on $u$; we denote
it by $|C_\alpha|$.

Its surface area can be computed using spherical coordinates in a standard way
(see e.g.\ \cite{Li2011HypersphericalCap}):
\[ 
|C_\alpha|
= \frac{2\pi^{(d-1)/2}}{\Gamma(\tfrac{d-1}2)}
  \int_{0}^\alpha \sin^{d-2} \theta\, d \theta.
\]
The surface area of the sphere is
\[
|S^{d-1}| = \frac{2 \pi^{d/2}}{\Gamma(d/2)},
\]
where \(\Gamma\) denotes the Gamma function.

We claim that for every $\alpha\in(0,\pi/2]$,
\begin{equation}
\label{eq:area-sandwich}
\frac{|S^{d-1}|}{|C_\alpha|}
\ \le\
\cN_\angle(S^{d-1},\alpha)
\ \le\
\frac{|S^{d-1}|}{|C_{\alpha/2}|}.
\end{equation}

Indeed, if $S^{d-1}\subseteq\bigcup_{i=1}^N C_\alpha(u_i)$, then by subadditivity
of area,
\[
|S^{d-1}|
\le \sum_{i=1}^N |C_\alpha(u_i)|
= N|C_\alpha|,
\]
hence $N\ge |S^{d-1}|/|C_\alpha|$.

For the upper bound, take a maximal $\alpha$-separated set
$\{x_1,\dots,x_m\}\subset S^{d-1}$, so that $\rho(x_i,x_j)>\alpha$ for $i\neq j$.
Maximality implies that $\{C_\alpha(x_i)\}_{i=1}^m$ covers $S^{d-1}$, hence
$\cN_\angle(S^{d-1},\alpha)\le m$. Moreover, the caps $C_{\alpha/2}(x_i)$ are
pairwise disjoint, so
\[
m\,|C_{\alpha/2}|\le |S^{d-1}|,
\]
and therefore $m\le |S^{d-1}|/|C_{\alpha/2}|$. This proves
\Cref{eq:area-sandwich}.

We now derive explicit bounds. Writing
\[
\frac{|S^{d-1}|}{|C_\alpha|}
= \frac{\sqrt \pi\,
       \Gamma(\frac{d-1}{2})/\Gamma(\frac{d}{2})}
       {\int_{0}^\alpha \sin^{d-2} \theta\, d \theta},
\]
and using that for $0\le \theta\le \pi/2$,
\[
\frac{\theta}{2}\le \sin\theta \le \theta,
\]
we obtain
\[
\frac{\alpha^{d-1}}{(d-1)2^{d-2}}
\;<\;
\int_{0}^\alpha \sin^{d-2} \theta\, d \theta
\;<\;
\frac{\alpha^{d-1}}{d-1}.
\]
Hence
\[
\frac{|S^{d-1}|}{|C_\alpha|}
>
\frac{\sqrt \pi (d-1)\,
      \Gamma(\frac{d-1}{2})/\Gamma(\frac{d}{2})}
     {\alpha^{d-1}},
\]
and
\[
\frac{|S^{d-1}|}{|C_{\alpha/2}|}
<
\frac{\sqrt \pi (d-1) 2^{2d-3}\,
      \Gamma(\frac{d-1}{2})/\Gamma(\frac{d}{2})}
     {\alpha^{d-1}}.
\]
Let $x=\frac{d-1}{2}$. Gautschi’s inequality implies that for all $x>\tfrac12$,
\[
\frac{1}{\sqrt{x+\tfrac12}}
\ \le\
\frac{\Gamma(x)}{\Gamma(x+\tfrac12)}
\ \le\
\frac{1}{\sqrt{x-\tfrac12}}.
\]
Substituting $x=\frac{d-1}{2}$ yields
\[
\sqrt{\frac{2}{d}}
\ \le\
\frac{\Gamma(\frac{d-1}{2})}{\Gamma(\frac d2)}
\ \le\
\sqrt{\frac{2}{d-2}}.
\]
Combining these inequalities gives
\[
\sqrt d\,\alpha^{-(d-1)}
\;<\;
\cN_\angle(S^{d-1},\alpha)
\;<\;
2^{2d}\,\alpha^{-(d-1)},
\]
as claimed.
\end{proof}

\section{Simplex rotation lemmas}
\label{app:rotation_lemmas}

This appendix collects two technical lemmas used in the lower-bound proof of Main Theorem~\ref{thm:excess-bounds}. Together they realize the adversary's rotation step: given a direction \(v\) along which the current simplex \(\Delta\) is nearly aligned, Lemma~\ref{lem:simplex_rotation} constructs an explicit isometry \(R_\theta\) that tilts \(\Delta\) away from \(v\) by a controlled angle, and Lemma~\ref{lem:rotating_neighborhood} controls how far the resulting rotated simplex moves in the ambient space.

\begin{lemma}[Simplex rotation]
\label{lem:simplex_rotation}
    Consider a regular simplex \(\Delta = \{0, x_1, \ldots, x_d\}\) of edge length \(1\), an angle \(\theta \le \pi/18\), and a direction \(v\).
    Without loss of generality, assume the edge \(0x_1\) satisfies
    \[
        \langle v, x_1 - 0\rangle = \max_{i,j} \langle v, x_i - x_j\rangle,
        \qquad
        \langle v, x_1\rangle > \cos\theta .
    \]
    Then there exists an isometry \(R_\theta:\R^d\to\R^d\) such that the rotated simplex \(\Delta' := R_\theta \Delta\) satisfies
    \[
        \langle v, x_1' - 0\rangle = \max_{i,j} \langle v, x_i' - x_j'\rangle,
        \qquad
        x_i' := R_\theta x_i .
    \]
    Moreover,
    \[
        \langle v, x_1' - 0\rangle \le \cos\theta .
    \]
\end{lemma}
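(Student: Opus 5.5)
The plan is to take $R_\theta$ to be a rotation about the origin. This keeps the vertex $0$ fixed, so $\Delta'=\{0,x_1',\dots,x_d'\}$ still has the edge $0x_1'$. The rotation will tilt $x_1$ away from $v$ until the angle between $x_1'$ and $v$ is exactly $\theta$. Throughout, $v$ is a unit vector, as in the reconstruction game. Since $\|x_1\|=1$, the hypothesis $\langle v,x_1\rangle>\cos\theta$ says that the angle $\varphi:=\angle(v,x_1)$ satisfies $\varphi<\theta$.

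First I construct $R_\theta$. If $x_1\neq v$, let $P=\mathrm{span}\{v,x_1\}$, a $2$-plane. If $x_1=v$, let $P$ be any $2$-plane containing $v$. Let $R_\theta$ be the rotation acting on $P$ by angle $\theta-\varphi$, in the direction that increases the angle from $v$, and acting as the identity on $P^\perp$. Then $x_1'=R_\theta x_1$ is a unit vector in $P$ with $\angle(v,x_1')=\theta$. Hence $\langle v,x_1'-0\rangle=\cos\theta$, which is the ``moreover'' claim.

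Next I verify that the edge $0x_1'$ still attains $\max_{i,j}\langle v,x_i'-x_j'\rangle$. The key fact is that the unit edge directions of a regular simplex of edge length $1$ have a simple Gram structure. Two edges sharing a vertex give inner products $\pm\tfrac12$, so their directions meet at angle $60^\circ$ or $120^\circ$. Two disjoint edges are orthogonal, which follows from the inner products $\langle x_i,x_j\rangle$ for a centered simplex. Now let $u=x_i-x_j$ be any edge direction with $u\neq\pm x_1$. Then $|\langle u,x_1\rangle|\le\tfrac12$. The same holds for the rotated vectors $u'$ and $x_1'$, since $R_\theta$ is an isometry, so $\angle(u',x_1')\ge 60^\circ$. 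By the triangle inequality for the angular metric $\rho$ on $S^{d-1}$,
\[
\angle(v,u')\;\ge\;\angle(u',x_1')-\angle(v,x_1')\;\ge\;60^\circ-\theta\;\ge\;50^\circ ,
\]
using $\theta\le\pi/18=10^\circ$. Therefore $\langle v,u'\rangle\le\cos 50^\circ<\cos 10^\circ\le\cos\theta=\langle v,x_1'\rangle$. The remaining edge directions are $u'=x_1'$, which gives equality, and $u'=-x_1'$, which gives $-\cos\theta<0$. This proves the maximality claim. Note that the maximality hypothesis on $\Delta$ is not needed; only $\langle v,x_1\rangle>\cos\theta$ is used.

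I do not expect a real obstacle. The one point needing care is the Gram computation $|\langle x_i-x_j,\,x_k-x_l\rangle|\le\tfrac12$ for distinct edges, including the disjoint-edge case giving $0$. This is a short calculation from $\|x_i-x_j\|=1$, equivalently from the centered inner products $-\tfrac{1}{2(d+1)}$ and $\tfrac{d}{2(d+1)}$ for edge length $1$. The degenerate choice of the plane $P$ when $x_1=v$ must also be handled, which the construction above does.
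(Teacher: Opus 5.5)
Your proof is correct and follows essentially the same route as the paper. Both rotate about the vertex $0$ in the plane $\mathrm{span}\{v,x_1\}$ to tilt $x_1$ away from $v$, then use $\langle x_1',u'\rangle\le\tfrac12$ for every other unit edge direction $u'$ to show that $0x_1'$ still attains the maximum. The differences are cosmetic: the paper rotates by the full angle $\theta$, so that $\langle v,x_1'\rangle=\cos(\varphi+\theta)\in[\cos 2\theta,\cos\theta]$, and bounds $\langle v,u'\rangle\le\tfrac12+\|v-x_1'\|$ by Cauchy--Schwarz. Your rotation by $\theta-\varphi$ lands exactly at $\cos\theta$, and your angular triangle inequality gives a sharper bound. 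It also moves $x_1$ by no more than the paper's rotation does, so the later estimate $\|x_1-x_1'\|^2\le 2(1-\cos\theta)$ fed into Lemma~\ref{lem:rotating_neighborhood} still holds.
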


\begin{proof}
Consider the rotation \(R_\theta:\R^d\to\R^d\) that acts in the plane \(\Pi=\operatorname{span}\{v,x_1\}\) as a rotation by angle \(\theta\) from \(v\) toward \(x_1\) along the smaller angle between them, and acts as the identity on the orthogonal complement \(\Pi^\perp\).
If \(v=x_1\), take any plane \(\Pi\ni x_1\) and either of the two possible rotations.

In an orthonormal basis \(d_1,\ldots,d_d\) with \(\operatorname{span}\{d_1,d_2\}=\Pi\), \(d_1=v\), and
\[
    x_1 = k_1 d_1 + k_2 d_2 \quad \text{with } k_1,k_2>0,
\]
the map \(R_\theta\) is represented by
\begin{equation}\label{eq:rotation}
R_{\theta}
=
\begin{pmatrix}
\cos\theta & -\sin\theta & 0 & \cdots & 0 \\
\sin\theta &  \cos\theta & 0 & \cdots & 0 \\
0          &  0          & 1 &        & 0 \\
\vdots     &  \vdots     &   & \ddots &   \\
0          &  0          & 0 &        & 1
\end{pmatrix}.
\end{equation}
First, note that
\[
    1 \ge k_1=\langle x_1,v\rangle \ge \cos \theta, \qquad 0 \le k_2=\sqrt{1-\langle x_1,v\rangle^2} \le \sin \theta,
\]
since \(\langle d_1,x_1\rangle=k_1\) and \(k_1^2+k_2^2=1\).
Also, \(\langle x_1, x_i - x_j\rangle \in \{0,\pm\tfrac12\}\).

A direct calculation gives
\[
    \cos 2\theta \;=\; \cos^2\theta - \sin^2\theta \;\le\; \langle x_1', v\rangle \;\le\; \cos\theta .
\]
Set \(w' := v - x_1'\).
Then \(\|w'\|_2 \le \sqrt{2-2\cos \theta}\), and hence
\[
    \langle v, x_i' - x_j' \rangle
    \;=\;
    \langle x_1' + w',\, x_i' - x_j' \rangle
    \;\le\; \tfrac12 + \|w'\|_2
    \;\le\; \cos 2\theta
    \;\le\; \langle x_1', v\rangle
\]
for \(\theta \le \pi/18\). The case \(x_1 = v\) is identical.
This proves the claim.
\end{proof}

\begin{lemma}[Rotation and neighborhood of the simplex]
\label{lem:rotating_neighborhood}
    Fix two positive numbers \(\alpha, \alpha^\prime\).
    Assume the isometry \(R_\theta\) from \Cref{eq:rotation} satisfies
    \[
        B(R_\theta x_1, \alpha^\prime) \subset B(x_1, \alpha).
    \]
    Denote \(\Delta' := R_\theta(\Delta)\).
    Then
    \[
        \Delta' + B({\alpha'}) \subset \Delta+B(\alpha).
    \]
\end{lemma}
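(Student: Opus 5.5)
The plan is to reduce the inclusion to a vertex-by-vertex statement and exploit the explicit form of $R_\theta$ in \Cref{eq:rotation}. The set $\Delta' + B(\alpha')$ is the union of the balls $B(R_\theta x_i, \alpha')$ over the vertices $x_i$ of $\Delta$, with $x_0 = 0$. So it suffices to show that $B(R_\theta x_i,\alpha') \subset B(x_i,\alpha)$ for every $i$. By the triangle inequality, this holds as soon as
\[
\|R_\theta x_i - x_i\|_2 + \alpha' \le \alpha .
\]
The hypothesis $B(R_\theta x_1,\alpha')\subset B(x_1,\alpha)$ is equivalent to $\|R_\theta x_1 - x_1\|_2 + \alpha' \le \alpha$. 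Indeed, take the point of $B(R_\theta x_1,\alpha')$ farthest from $x_1$, namely $R_\theta x_1$ moved by $\alpha'$ in the direction away from $x_1$; if $R_\theta x_1 = x_1$ there is nothing to prove. Hence the whole claim reduces to
\[
\|R_\theta x_i - x_i\|_2 \le \|R_\theta x_1 - x_1\|_2 \qquad \text{for all } i .
\]

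To prove this displacement bound, I use that $R_\theta$ is a rotation by angle $\theta$ in the plane $\Pi = \operatorname{span}\{v,x_1\}$ and the identity on $\Pi^\perp$. For any $x\in\R^d$, write $x = P_\Pi x + P_{\Pi^\perp}x$. The $\Pi^\perp$ component is fixed, and the $\Pi$ component is rotated by $\theta$. This gives
\[
\|R_\theta x - x\|_2 = 2\sin(\theta/2)\,\|P_\Pi x\|_2 \le 2\sin(\theta/2)\,\|x\|_2 .
\]
I then apply this to the vertices. The vertex $0$ is fixed, so its displacement is zero. Every other vertex satisfies $\|x_i\|_2 = \|x_i - 0\|_2 = 1$, because the simplex has edge length $1$ and $0$ is a vertex. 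On the other hand, $x_1\in\Pi$, so $\|P_\Pi x_1\|_2 = \|x_1\|_2 = 1$, and the displacement of $x_1$ is exactly $2\sin(\theta/2)$. Therefore $\|R_\theta x_i - x_i\|_2 \le 2\sin(\theta/2) = \|R_\theta x_1 - x_1\|_2$ for all $i$.

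Combining the two steps, every point $R_\theta x_i + e$ with $\|e\|_2\le\alpha'$ satisfies
\[
\|R_\theta x_i + e - x_i\|_2 \le \|R_\theta x_1 - x_1\|_2 + \alpha' \le \alpha ,
\]
so it lies in $B(x_i,\alpha)\subset\Delta+B(\alpha)$. The degenerate case $v = x_1$, where an arbitrary plane $\Pi\ni x_1$ was chosen, is handled identically, since only $x_1\in\Pi$ and the fact that $R_\theta$ fixes the origin are used. There is no real obstacle here. The only point needing care is the equivalence between the ball inclusion and the inequality $\|R_\theta x_1-x_1\|_2+\alpha'\le\alpha$, together with the observation that $x_1$ is the vertex of maximal displacement because it lies in the rotation plane at maximal distance from the fixed vertex $0$.
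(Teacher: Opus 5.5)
Your proof is correct and follows the same route as the paper. You split each vertex into its $\Pi$ and $\Pi^\perp$ components, observe that its displacement is $\sqrt{2(1-\cos\theta)}\,\|P_\Pi x_i\|_2 \le \|R_\theta x_1 - x_1\|_2$ (since $\|x_i\|_2\le 1$ and $x_1\in\Pi$), and conclude that every vertex moves by at most $\alpha-\alpha'$. Your explicit check that the ball inclusion is equivalent to $\|R_\theta x_1-x_1\|_2+\alpha'\le\alpha$ fills in a step the paper leaves implicit.
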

\begin{proof}
    To ensure the inclusion \( \Delta' + B({\alpha'}) \subset \Delta+B(\alpha) \), it suffices to check that no vertex of the simplex moves by more than \( \alpha - \alpha' \) under the rotation \(R_{\theta}\).

    Fix any vertex \(x_i \neq x_1\), and decompose it as \(x_i = w+p\), where \(w \in \Pi\) and \(p \in \Pi^\perp\).
    Note that \(\|w\|_2 \le 1\), since \(\|x_i\|_2^2 = \|w\|_2^2 + \|p\|_2^2 = 1\).
    Moreover,
    \[
        x_i' = R_\theta[w] + p,
    \]
    so
    \[
        \|x_i - x_i'\|_2^2 
        = \|w - R_\theta[w]\|_2^2
        = 2\|w\|_2^2 - 2 \langle w, R_\theta[w]\rangle
        = 2\|w\|_2^2\,(1 - \cos\theta).
    \]
    Since
    \[
        \|x_1 - x_1'\|_2^2 = 2(1-\cos\theta)
    \]
    and \(\|w\|_2\le 1\), we obtain
    \[
        \|x_i - x_i'\|_2\le \|x_1 - x_1'\|_2.
    \]
    By the assumption of the lemma, vertex \(x_1\) moves by at most \(\alpha - \alpha'\), 
    and hence every other vertex does as well. 
    This proves the claim.
\end{proof}

\end{document}